\def\eqref#1{equation~\ref{#1}}
\def\1{\bm{1}}
\DeclareMathAlphabet{\mathsfit}{\encodingdefault}{\sfdefault}{m}{sl}
\SetMathAlphabet{\mathsfit}{bold}{\encodingdefault}{\sfdefault}{bx}{n}
\DeclareMathOperator{\sign}{sign}
\newcommand{\norm}[1]{\left\lVert#1\right\rVert_2}
\def\BibTeX{{\rm B\kern-.05em{\sc i\kern-.025em b}\kern-.08em
    T\kern-.1667em\lower.7ex\hbox{E}\kern-.125emX}}
\DeclareMathOperator{\diag}{diag}
\renewcommand{\vec}[1]{\boldsymbol{#1}}
\newcommand{\mat}[1]{\mathbf{#1}}
\newtheorem{theorem}{Theorem}[section]
\newcommand{\mathdef}{\overset{\Delta}{=}}
\title{How Low Can You Go? \\ Searching for the Intrinsic Dimensionality of Complex Networks using Metric Node Embeddings}
\author{Nikolaos Nakis\textsuperscript{*} \\
Yale University\\
New Haven, CT, USA \\
\texttt{nicolaos.nakis@gmail.com} \\
\And
Niels Raunkjær Holm\textsuperscript{*}, Andreas Lyhne Fiehn\textsuperscript{*}, \\ \textbf{Morten Mørup}  \\
Technical University of Denmark \\
Kongens Lyngby, Denmark \\
\texttt{nielsraunkjaer@gmail.com,} \\ \texttt{andreas@lyhnefiehn.dk, mmor@dtu.dk} 
 \\
    \AND
    \textsuperscript{*}Shared First Authorship.
}
\title{How Low Can You Go?  Searching for the Intrinsic Dimensionality of Complex Networks using Metric Node Embeddings}
\author{
    \textbf{Nikolaos Nakis}\textsuperscript{1}\thanks{Shared first authorship.} \quad
    \textbf{Niels Raunkjær Holm}\textsuperscript{2}\footnotemark[1] \quad
    \textbf{Andreas Lyhne Fiehn}\textsuperscript{2}\footnotemark[1] \quad
    \textbf{Morten Mørup}\textsuperscript{2} \\
    \textsuperscript{1}Yale University \quad 
    \textsuperscript{2}Technical University of Denmark \\
    \texttt{nicolaos.nakis@gmail.com, nielsraunkjaer@gmail.com,} \\
    \texttt{andreas@lyhnefiehn.dk, mmor@dtu.dk}
}
\begin{document}

\maketitle

\begin{abstract}
Low-dimensional embeddings are essential for machine learning tasks involving graphs, such as node classification, link prediction, community detection, network visualization, and network compression. Although recent studies have identified exact low-dimensional embeddings, the limits of the required embedding dimensions remain unclear. We presently prove that lower dimensional embeddings are possible when using Euclidean metric embeddings as opposed to vector-based Logistic PCA (LPCA) embeddings. In particular, we provide an efficient logarithmic search procedure for identifying the exact embedding dimension and demonstrate how metric embeddings enable inference of the exact embedding dimensions of large-scale networks by exploiting that the metric properties can be used to provide linearithmic scaling. Empirically, we show that our approach extracts substantially lower dimensional representations of networks than previously reported for small-sized networks. For the first time, we demonstrate that even large-scale networks can be effectively embedded in very low-dimensional spaces, and provide examples of scalable, exact reconstruction for graphs with up to a million nodes. Our approach highlights that the intrinsic dimensionality of networks is substantially lower than previously reported and provides a computationally efficient assessment of the exact embedding dimension also of large-scale networks. The surprisingly low dimensional representations achieved demonstrate that networks in general can be losslessly represented using very low dimensional feature spaces, which can be used to guide existing network analysis tasks from community detection and node classification to structure revealing exact network visualizations.
\vspace{1mm}\\
\textit{Code available at: \href{https://github.com/AndreasLF/HowLowCanYouGo}{https://github.com/AndreasLF/HowLowCanYouGo}.}
\end{abstract}

\section{Introduction}
Graphs are used in a plethora of settings to model various complex systems including social networks, the Internet, citation links between research publications, neural networks, protein-protein interactions, food webs, and metabolic networks \citep{Newman2003TheNetworks}, to mention but a few. From a machine learning perspective graph representation learning (GRL) aiming to embed graph structure using low-dimensional vector spaces that provide compressed representations of network structure has in recent years garnered substantial attention \citep{Hamilton2017InductiveGraphs,Hamilton2017RepresentationApplications}.
Some of the challenges in GRL include preserving structure from the discrete graph space in the learned embedding space. 
This means that the connectedness and similarity of nodes in the graph should carry over into the embedding space \citep{Zhang2020NetworkSurvey}. Both local connectivity and global community structures are crucial to the characteristics of the system modeled by the graph. 
The concept of homophily \citep{Mcpherson2001BirdsNetworks} should thus be preserved, such that links between nodes are signified in their corresponding embeddings.
Intuitively, the node representations should be close in proximity to each other in some measure relevant to the embedding space. 
Further, this notion of proximity should apply to next-step neighbors, next-next-step, and so on \citep{Zhang2017HomophilyLearning}. In the literature, various so-called shallow embedding methods, which essentially refer to mapping nodes one-to-one to embedding vectors have been proposed. These methods each aim to capture different parts of the graph structure, e.g. DeepWalk \citep{Perozzi2014DeepWalk:Representations}, LINE \citep{Tang2015LINE:Embedding}, node2vec \citep{Grover2016Node2vec:Networks}, GraRep \citep{Cao2015GraRep:Information}, TADW \citep{Serrano2007Self-similaritySpaces}, and many more \citep{Zhang2020NetworkSurvey}. Recently, it has been demonstrated that latent space network representation approaches such as the latent distance model \citep{Hoff2002LatentAnalysis} perform favorably using ultra-low dimensional (i.e., \(D=2\) and \(D=3\)) embedding representations \citep{Nakis2022ARepresentations}.

In \cite{Seshadhri2020TheNetworks} the goal of embedding graphs was formulated as capturing as much structure as possible from the graph in a low-rank representation. They also pose the question of how well we can embed graphs. 
They conclude that graphs created from low-dimensional embeddings cannot have many triangles involving vertices of low-degree. 
This was later discussed in \cite{ChanpuriyaNodeNetworks} where it was demonstrated that with a relaxed version of the factorization model, finding exact low-rank embeddings for bounded degree graphs, i.e. graphs with at most degree \(k_{max}\), is indeed feasible. 
For this task, they used a method for learning an embedding model based on logistic principal component analysis (LPCA). However, the work only provided rough estimates of the required embedding dimensions and the approach was restricted to the analysis of small networks (i.e., less than 20.000 nodes).

In this paper, we investigate the limits required to achieve exact network embeddings. Specifically, we propose an algorithm to efficiently search for an upper bound for the exact embedding dimension ($D^*$) of graphs. We further theoretically and empirically demonstrate that lower embedding dimensions can be achieved considering metric embeddings using the latent distance model (\textsc{LDM}) \citep{Hoff2002LatentAnalysis}. Importantly, through the properties of distance functions, metric spaces have natural and intuitive notions of proximity and by extension also node homophily and interconnectedness on both local and global scale. Inspired by the Hierachical Block Distance Model (\textsc{HBDM}), as introduced by \cite{Nakis2022ARepresentations}, we exploit that the metric properties are especially useful for embedding very large graphs. As the reconstruction check itself becomes intractable for large sets of nodes, we further exploit how hierarchical representations of data with metric  properties allow linearithmic runtime complexity. As such, we implement a KD-tree-based nearest neighbor reconstruction check method. 


\subsection{Related work}
The interest in embedding networks using low dimensional representations has been substantially explored in the inexact reconstruction setting in which latent space modeling approaches including the latent eigenmodel and latent distance model have been explored \citep{Hoff2002LatentAnalysis,hoff2007modeling}. Recently, it has been observed that the latent distance model using ultra-low dimensional representation provides strong generalization in graph representation learning tasks such as node classification and link prediction \citep{Nakis2022ARepresentations}. The first attempt to quantify the intrinsic dimensionality of exact network reconstruction was considered in \cite{ChanpuriyaNodeNetworks} based on a Logistic PCA (LPCA) model.  For a network of $N$ nodes, the LPCA model consists of two rank \(D\in\mathbb{Z^+}\) embedding matrices \(\mat{X,Y}\in\mathbb{R}^{N\times D}\). It assumes that the probability of a link between node \(i\) and \(j\), expressed in the adjacency matrix \(\mat{A}\in\{0,1\}^{N\times N}\) as \(a_{i,j} = 1\), is \(\sigma([\mat{XY}^\top]_{i,j})\) where \(\sigma(x) = (1+e^{-x})^{-1}\) is the sigmoid-function. 
Using the shifted adjacency matrix \(\Tilde{a}_{i,j} = 2 a_{i,j}-1\), the log-likelihood can be compactly written as \(\log(\sigma(\Tilde{a}_{i,j}[\mat{XY}^\top]_{i,j}))\). 
Maximizing the likelihoods of links between all node indices \(i\) and \(j\) is then equivalent to the following optimization task: 
\begin{align}
\min_{\mat{X,Y}} ~  \mathcal{L}(\mathcal{R}_{LPCA}(\mat{X,Y})) = \min_{\mat{X,Y}} ~ \sum_{i=1}^{N} \sum_{j=1}^{N} -\log \sigma \left( \tilde{a}_{i,j} \left[ \mat{XY}^T \right]_{i,j} \right). \label{eq:chanpuryia-loss}  
\end{align}
Notably, this objective is closely related to identifying the sign-rank of a matrix in which the product $\tilde{a}_{i,j} \left[ \mat{XY}^T \right]_{i,j}>0\ \forall i,j$. It has been shown that the sign-rank is lower bounded by $N/\sigma_{max}(\tilde{\mat{A}})$ where $\sigma_{max}(\tilde{\mat{A}})$ denotes the largest singular value, i.e. spectral norm, of $\tilde{\mat{A}}$ \citep{forster2002linear}. This bound has been refined in the context of learning theory and communication complexity in \citep{razborov2010sign} whereas the existing known lower bounds for sign-rank have recently been surveyed in \citep{hatami2022lower} and found to have limitations. To minimize $\mathcal{L}(\mathcal{R}_{LPCA}(\mat{X,Y}))$ 
\cite{ChanpuriyaNodeNetworks} initialize \(\mat{X}\) and \(\mat{Y}\) uniformly at random in \([-1,1]\) and use the SciPy implementation of the L-BFGS scheme with default parameters and a maximum of 2000 iterations to optimize the expression. 
A check for full reconstruction is made by comparing \(\mat{A}\) to \(\sigma(\mat{XY}^\top)\). In practice, \(\mat{XY}^\top\) is clipped to \(\left[0,1\right]\) (i.e. applying the map \(\mathbf{clip}:\mathbb{R}\mapsto[0,1]\) defined as \(\mathbf{clip}(x) = \max(0,\min(1,x))\)) and the Frobenius norm of the difference between the clipped reconstruction and \(\mat{A}\) is calculated, \(\nicefrac{||\mathbf{clip}(\mat{XY}^{\top})-\mat{A}||_F}{||\mat{A}||_F}\). When this measure evaluates to \(0\), all predicted indices match the actual adjacency matrix and the found embedding thus allows for perfect reconstruction. They carried out this optimization process using a coarse analysis of varying embedding dimensionalities in multiples of 16, and reported the dimensionality of the lowest exact embedding dimension  found, e.g., they obtained factorizations of rank 16 on the well-known datasets Cora and Citeseer \citep{Yang2016RevisitingEmbeddings}, and 32 on ca-HepPh \citep{Leskovec2005GraphsExplanations} (see Table \ref{table:results}).

In \cite{gu2021principled} the exact embedding dimension was evaluated at the point in which the latent dimension was as large as the number of nodes arguing that this would be the upper bound of required dimensions and the optimal dimensionality defined as the lower dimensional representation concurring within a small margin of such exact embedding. 
In \cite{bonato2012geometric,bonato2017geometry} the Logarithmic Dimension Hypothesis was proposed arguing for dimensionality of the embedding space of networks scaling as $\mathcal{O}(\log{N})$ when considering the MGEO-P model \citep{bonato2012geometric} relying on embeddings based on the $L_{\infty}$-norm in a Blau space in which distance in latent position of nodes are used to characterize their relations \citep{mcpherson2004blau} akin to the latent distance model \citep{Hoff2002LatentAnalysis}. Similarly, in \cite{boratko2021capacity} it was proven that any directed acyclic graph (DAG) can be perfectly embedded using the probabilistic box embedding in which the probability of observing a link is given by the node-specific box overlap using a $\mathcal{O}(\log{N})$ embedding dimension. 
It was further empirically observed that, for low-dimensional embeddings, metric embedding approaches provided higher capacity embeddings compared to those relying on LPCA \citep{boratko2021capacity}.
Finally, in \cite{NEURIPS2023_428ceef2} LPCA was generalized to undirected graphs by use of a difference formulation between two symmetric non-negative decompositions bridging the exact network embedding to community detection methodologies. In \cite{ChanpuriyaNodeNetworks} it was proven that for bounded degree (i.e., $k_{\text{max}}$) graphs exact embeddings can be achieved using $D=2k_{\text{max}}+1$ dimensions. This result was further refined in \cite{NEURIPS2023_428ceef2} proving that for sparse networks the arboricity $\alpha$, i.e., largest subgraph density-weighted by the number of nodes in the subgraph, bounds the embedding dimensions by $4\lceil\alpha\rceil^2+1$ \citep{NEURIPS2023_428ceef2}.

\begin{figure}
    \centering
    \includegraphics[width=0.85\linewidth]{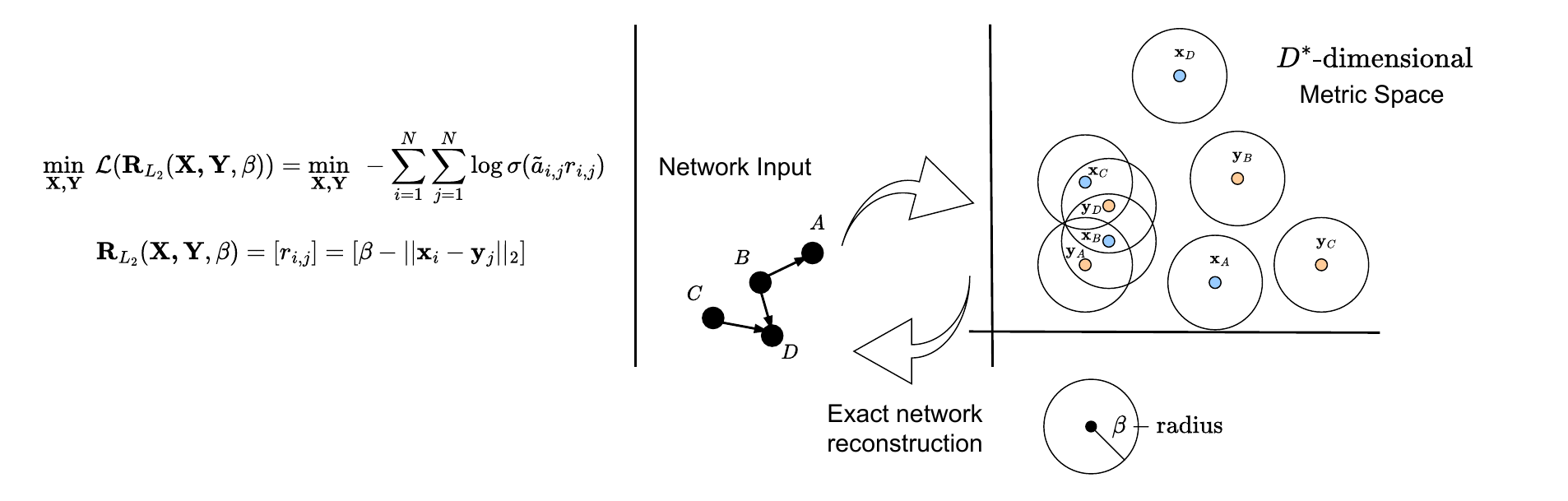}
    \caption{ \textbf{Model Overview}: The input network is embedded into a low-dimensional space using matrices $\mathbf{X}$ and $\mathbf{Y}$, defining an upper bound $D^*$ on intrinsic dimensionality for structure-preserving reconstruction via the $\beta$-radius. Connected nodes fall within each other's $\beta$-radius, ensuring exact reconstruction.}
    \label{fig:overview}
\end{figure}

\subsection{Contributions}
From these related works several important open questions remain. Specifically,\\
\textbf{Q1: Can metric model formulations provably provide lower dimensional representations than LPCA?}
We presently prove that metric embeddings can uniformly provide at least as low dimensional embeddings as LPCA and highlight empirically that lower dimensional representations of networks exhibiting homophily can be achieved yet the same embedding dimension for networks exhibiting heterophily.
\textbf{Q2: How can a network's exact low-dimensional embedding efficiently be quantified?} 
Whereas there exist bounds on the embedding dimensions based on logarithmic scaling wrt. number of nodes \citep{boratko2021capacity} in the network, maximum degree \citep{ChanpuriyaNodeNetworks}, and arboricity \citep{NEURIPS2023_428ceef2} it is unclear how the lowest possible actual exact embedding dimension of a given network can be identified and \cite{ChanpuriyaNodeNetworks} 
provided only a very coarse assessment of embedding dimensionality. We presently derive an efficient logarithmic search strategy reliably identifying an upper bound on the intrinsic dimensionality of exact network embeddings. \textbf{Q3: How can exact network embeddings be linearithmically $\mathcal{O}(N\log{N})$ quantified in large graphs?}
Existing network embedding assessments rely on the explicit evaluation of all links and non-links in the objective function optimized as well as reconstruction check scaling for $N$ nodes as $\mathcal{O}(N^2)$ which is not feasible for large graphs. We presently, explore how metric embeddings enable efficient linearithmic neighborhood queries through the use of KD-trees to scalably assess perfect network reconstruction, and for the first time achieve such a reconstruction for networks with up to a million nodes. We further provide a linearithmic approximation of the full likelihood of metric embeddings providing an efficient learning framework that we demonstrate can be refined using zero-margin hinge-loss optimization over the few approximation-induced missclassified dyads to achieve exact embeddings of large-scale networks.

\section{Methods}
In the following $\vec{w}_i$ will denote the $i^{\text{th}}$ row of the matrix $\vec{W}$ and $\|\cdot\|$ denotes the conventional Euclidean norm. $\operatorname{diag}(\vec{b})$ will denote a diagonal matrix with the elements of $\vec{b}$ along the diagonal and $sign(b)$ the conventional sign function that is $-1$ if $b<0$ and $1$ if $b>0$. Let $\mat{A}$ denote the adjacency matrix of a graph such that $a_{i,j}=1$ if there is a link between node $i$ and $j$ and  $a_{i,j}=0$ otherwise. Finally let $\tilde{a}_{i,j}=2a_{i,j}-1$ be the shifted adjacency matrix such that links and absence of links respectively are given by $1$ and $-1$.

\begin{figure}
    \centering
    \begin{subfigure}[b]{0.49\textwidth}
        \centering
        \includegraphics[width=\textwidth]{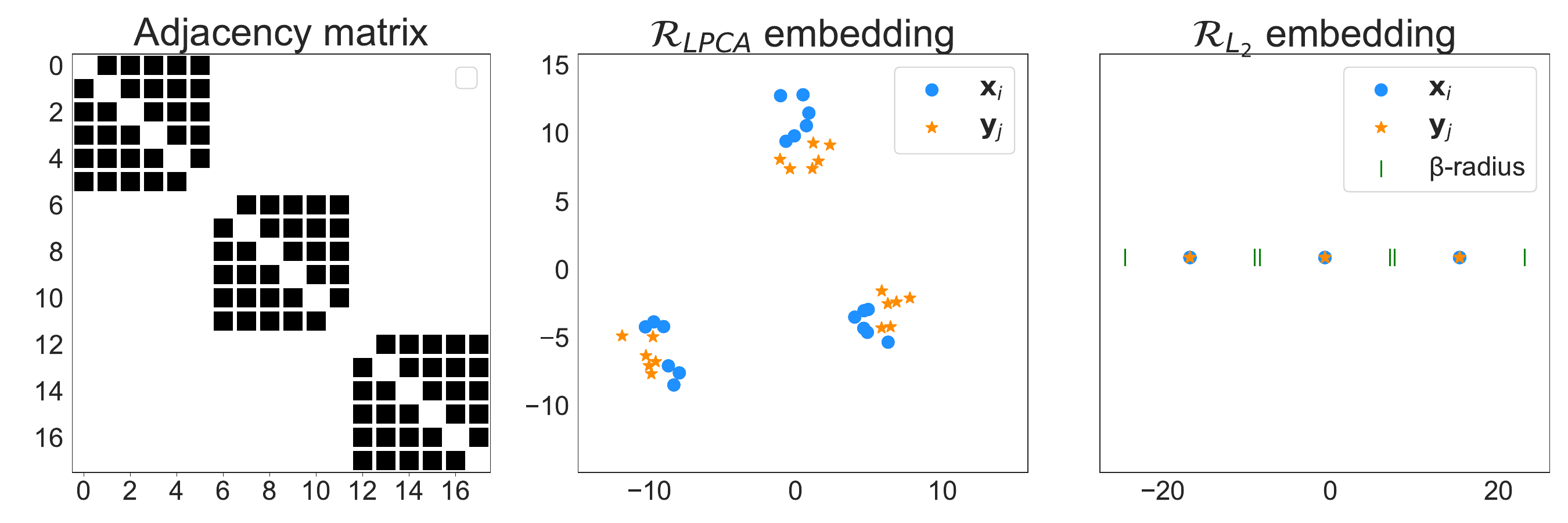}
        \caption{Graph with a homophilous community block structure. Green lines indicate the $L_2$ radius $\beta$.}
        \label{fig:community-adj-and-emb}
    \end{subfigure}
    \begin{subfigure}[b]{0.49\textwidth}
        \centering
        \includegraphics[width=\textwidth]{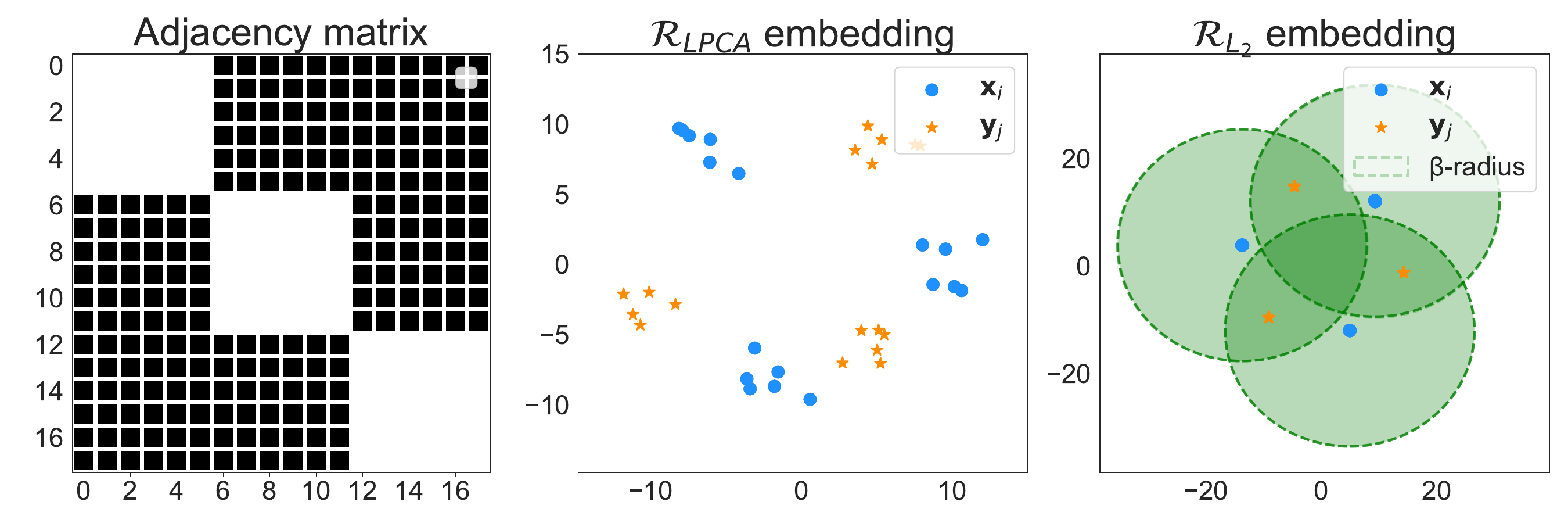}
        \caption{Graph with a heterophilous block structure. Green circles indicate the $L_2$ radius $\beta$.}
        \label{fig:anticommunity-adj-and-emb}
    \end{subfigure}
    \caption{Example graphs with a community and an anticommunity structure, respectively, and their corresponding \(\mathcal{R}_{LPCA}\)- and \(\mathcal{R}_{L2}\)-embeddings (Lines/Circles denote link thresholds for \(\mathcal{R}_{L2}\)).}\label{fig:com-anticom-adj-and-embed}
\end{figure}

\textbf{Q1:}
Both the local and global structure of a given graph carry information about the role of each specific node, and this contributes to the complexity of embedding nodes.
Modeling the optimization objective such that the resulting embedding space has the capacity to capture these connectivity-related attributes is crucial for encoding the information into low-dimensional node representations. We note two main components in the approach of \cite{ChanpuriyaNodeNetworks}, namely a parametrized reconstruction model and a loss measure between the true graph and its reconstruction. We can thus decompose the objective function in \autoref{eq:chanpuryia-loss} into the reconstruction model and loss function respectively given by

\noindent\begin{minipage}[t]{.5\linewidth}
\begin{equation}
  \mathcal{R}_{LPCA}(\mat{X,Y}) \mathdef \mat{XY}^\top,
    \label{eq:R_LPCA}
\end{equation}
\end{minipage}%
\begin{minipage}[t]{.5\linewidth}
\begin{equation}
  \mathcal{L}(\mat{R}) \mathdef\sum_{i,j}-\log\sigma(\tilde{a}_{i,j}r_{i,j}),\label{eq:L2-loss}
\end{equation}
\end{minipage}
where \(\tilde{\mat{A}}=\left[\tilde{a}_{i,j}\right]\) is the shifted adjacency matrix and \(\mat{R}=\left[r_{i,j}\right]\) is the reconstruction of the adjacency matrix, i.e., given by \autoref{eq:R_LPCA}. Considering $\mathcal{L}(\cdot)$, in which the underlying modeling assumption is a Bernoulli likelihood, we can express the reconstruction at index $i,j$ as applying thresholding on the link probability, i.e. $p(A_{i,j}=1) \geq \frac{1}{2}$, or equivalently, $\tilde{a}_{i,j}r_{i,j} \geq 0$. 

Analyzing \(\mathcal{R}_{LPCA}\), we see that it consists of an outer product between the two embedding matrices. The expressiveness of this reconstruction model thus lies in the pairwise inner products between each row vector in the two matrices.
This method is known to have a lot of representational capacity akin to PCA, as the inner product provides a natural similarity measure between vectors, proportional to the angles between them. Notably, the LPCA model formulated above can be considered a generalization to row and column specific embeddings of the latent eigenmodel proposed in \cite{hoff2007modeling}. The eigenmodel also includes a bias term $\beta$, i.e. $\mathcal{R}_{EIG}(\mat{X,Y},\beta) \mathdef \beta+\mat{XY}^\top$. As a result, expressing this formulation of the latent eigenmodel by LPCA would require an additional dimension, i.e. $\mat{[\beta \vec{1}\ X][\vec{1}\ Y ]}^\top$ provided that $\beta$ is non-zero and $\mat{XY}^\top$ does not exactly span the constant matrix (a low-probability event). Consequently, the latent eigenmodel formulation can in theory reduce the dimensionality by one when compared to LPCA, i.e. $D^*_{LPCA}-1\leq D^*_{EIG}\leq D^*_{LPCA}$.  
Similarly, standard PCA requires an extra dimension if the data is not centered, i.e., when applying SVD directly to an uncentered data matrix. Importantly, we are only arguing for the dimensionality of the solution and not that the solution must be exactly $\mat{[\beta \vec{1}\ X][\vec{1}\ Y ]}^\top$ since the model reconstructions $\mathcal{R}_{LPCA}$, $\mathcal{R}_{EIG}$ , and $\mathcal{R}_{L_2}$ for LPCA, LEIG, and LDM are non-unique and can all be modified by an orthogonal matrix $\bm{Q}$ such that $\Tilde{\bm{X}}=\bm{X}\bm{Q}$ and $\Tilde{\bm{Y}}=\bm{Y}\bm{Q}^{-1}$ provide identical reconstructions.

To further improve the expressiveness of the node embedding optimization objective, we consider the Euclidean latent distance model (LDM) \citep{Hoff2002LatentAnalysis,hoff2007modeling} that provides a metric specification of the reconstruction model according to 
\begin{align}
\mathcal{R}_{L_2}(\mat{X,Y},\beta) \mathdef \left[\beta - \norm{\vec{x}_i - \vec{y}_j}\right].
\label{eq:ll2-loss}
\end{align}

The LDM is a metric model, i.e., it yields an embedding vector space endowed with a distance function, which has useful properties we will elaborate on later. Importantly, in Theorem \ref{theorem:embed-dim}, we establish that the $\mathcal{R}_{L_2}$-reconstruction model can provide more favorable embedding dimensions using up to two embedding dimensions less than $\mathcal{R}_{LPCA}$. A model overview is given in \autoref{fig:overview}.

\begin{theorem}\label{theorem:embed-dim}
Let \(D^*_{LPCA}\) and \(D^*_{L_2}\) denote the lowest exact embedding dimension for a graph embedding obtainable by optimization w.r.t. the \(\mathcal{R}_{LPCA}\)-reconstruction and \(\mathcal{R}_{L_2}\)-reconstruction respectively. We then have the relationship
\begin{align}
    D^*_{LPCA} - 2 \leq D^*_{L_2} \leq D^*_{LPCA}.
\end{align}
\end{theorem}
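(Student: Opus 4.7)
The plan is to prove the two inequalities via explicit conversions between the two embedding models, in both directions. For the upper bound $D^*_{L_2}\leq D^*_{LPCA}$, I would start from a perfect LPCA solution $\mat{X},\mat{Y}\in\mathbb{R}^{N\times D^*_{LPCA}}$ with $\tilde{a}_{i,j}[\mat{XY}^\top]_{i,j}>0$ (strict reconstruction, so every row is nonzero) and project each row onto the unit sphere via $\vec{x}'_i=\vec{x}_i/\norm{\vec{x}_i}$ and $\vec{y}'_j=\vec{y}_j/\norm{\vec{y}_j}$. Positive rescaling preserves the sign of the inner product, so $\mathrm{sign}\langle\vec{x}'_i,\vec{y}'_j\rangle=\tilde{a}_{i,j}$. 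Unit norms give $\norm{\vec{x}'_i-\vec{y}'_j}^2=2-2\langle\vec{x}'_i,\vec{y}'_j\rangle$, so choosing $\beta=\sqrt{2}$ yields $\beta^2-\norm{\vec{x}'_i-\vec{y}'_j}^2 = 2\langle\vec{x}'_i,\vec{y}'_j\rangle$. Because $\beta-\norm{\cdot}$ and $\beta^2-\norm{\cdot}^2$ agree in sign for nonnegative arguments, this already delivers a perfect LDM reconstruction in the same dimension $D^*_{LPCA}$.

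\textbf{Lower bound direction.} For $D^*_{L_2}\geq D^*_{LPCA}-2$, equivalently $D^*_{LPCA}\leq D^*_{L_2}+2$, I would take a perfect $L_2$ embedding $\tilde{\mat{X}},\tilde{\mat{Y}}\in\mathbb{R}^{N\times D^*_{L_2}}$ with threshold $\beta$ and use the expansion
\begin{equation*}
\beta^2-\norm{\tilde{\vec{x}}_i-\tilde{\vec{y}}_j}^2 = (\beta^2-\norm{\tilde{\vec{x}}_i}^2) + (-\norm{\tilde{\vec{y}}_j}^2) + 2\langle\tilde{\vec{x}}_i,\tilde{\vec{y}}_j\rangle,
\end{equation*}
which is manifestly the Euclidean inner product of two lifted vectors. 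Concretely, I would define
\begin{equation*}
\vec{x}_i=(\sqrt{2}\,\tilde{\vec{x}}_i,\ 1,\ \beta^2-\norm{\tilde{\vec{x}}_i}^2)\in\mathbb{R}^{D^*_{L_2}+2},\quad \vec{y}_j=(\sqrt{2}\,\tilde{\vec{y}}_j,\ -\norm{\tilde{\vec{y}}_j}^2,\ 1)\in\mathbb{R}^{D^*_{L_2}+2},
\end{equation*}
and verify directly that $\langle\vec{x}_i,\vec{y}_j\rangle=\beta^2-\norm{\tilde{\vec{x}}_i-\tilde{\vec{y}}_j}^2$, whose sign equals $\tilde{a}_{i,j}$ by hypothesis. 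Hence LPCA reconstructs the graph in dimension $D^*_{L_2}+2$, which is the stated bound.

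\textbf{Main obstacle.} Both algebraic verifications are routine; the point I expect to spend most care on is the edge-case handling of the upper-bound direction under the weak reconstruction condition $\tilde{a}_{i,j}r_{i,j}\geq 0$, because a degenerate LPCA optimum with some $\vec{x}_i=\vzero$ (giving $[\mat{XY}^\top]_{i,j}=0$ for all $j$) cannot be normalized onto the unit sphere. I would resolve this by observing that the sign pattern realised by an LPCA embedding is stable under sufficiently small perturbations of $(\mat{X},\mat{Y})$, so one may first perturb into the open set where all inner products are strictly nonzero before applying the unit-norm construction. The lower-bound construction, by contrast, works verbatim even under the weak inequality since the lifted inner product equals $\beta^2-\norm{\tilde{\vec{x}}_i-\tilde{\vec{y}}_j}^2$ exactly.
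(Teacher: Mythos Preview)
Your proposal is correct and follows essentially the same route as the paper: normalize the LPCA rows onto the unit sphere and take $\beta=\sqrt{2}$ for the upper bound, and lift the $L_2$ embedding by two coordinates carrying $\beta^2-\norm{\tilde{\vec{x}}_i}^2$ and $\norm{\tilde{\vec{y}}_j}^2$ to recover an LPCA inner product for the lower bound. Your explicit handling of the degenerate case $\vec{x}_i=\vzero$ via perturbation is an addition the paper does not spell out, and your lifted vectors are arguably written more cleanly (each depending only on its own index), but the underlying argument is identical.
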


For a proof of the above theorem see section~\ref{sec:theoremproof} in the supplementary material.

Importantly, metric models are especially efficient when modeling homophily in networks, i.e., a friend of a friend is also a friend, which is naturally entailed by the triangular inequality using metric embeddings \citep{Hoff2002LatentAnalysis}. On the other hand, heterophilous networks \citep{NEURIPS2023_428ceef2} also defined in terms of stochastic equivalence \citep{hoff2007modeling} in which dissimilar nodes can be grouped is well accounted for by LPCA and can according to the above theorem at least with similar embedding dimensions be accounted for by the LDM. This is illustrated in \autoref{fig:com-anticom-adj-and-embed} where we compare the \(\mathcal{R}_{LPCA}\) and \(\mathcal{R}_{L_2}\) considering a homophilous community structured and heterophilous block-structured network. We here observe that the \(\mathcal{R}_{L_2}\) formulation can account for communities using an embedding dimension of $D=1$ whereas \(\mathcal{R}_{LPCA}\) and \(\mathcal{R}_{LEIG}\) (not shown) requires $D=2$ whereas all approaches can perfectly reconstruct the heterophilous network using $D=2$. In \autoref{fig:training-instability} we further highlight an example empirically verifying the bounds of Theorem \ref{theorem:embed-dim}.
\(\mathcal{R}_{LPCA}\) requires three dimensions since more than three communities (as in ~\autoref{fig:com-anticom-adj-and-embed}) need angles greater than 90 degrees in 2D for perfect reconstruction, which is not possible for the 10 communities in \autoref{fig:training-instability}. As a result, an additional dimension is needed to mimic the bias term included in \(\mathcal{R}_{LEIG}\) that can reconstruct the network using two dimensions exploiting that the bias term can threshold on angles less than 90 degrees for perfect reconstruction. Importantly, \(\mathcal{R}_{L_2}\) can trivially embed this network using only one dimension due to its metric properties.

\textbf{Q2:} To find an upper bound for the low-dimensional exact embedding, we propose Algorithm \ref{alg:algorithm-binary-search}, which utilizes binary search (also called logarithmic search due to its logarithmic scaling) to search through an interval \([lb,ub], lb<ub\in\mathbb{Z}^+\) of embedding space ranks with linearithmic time complexity \citep{Knuth1998TheSearching}. 
We initialize the first iteration of embeddings at a relatively high dimension \(D_0=ub\), yielding the embedding matrices \(\mat{X,Y} \in \mathbb{R}^{N \times D_0}\).When the LDM optimization yields embedding matrices that perfectly reconstruct the graph, they are concatenated as \(\mat{Z} = \begin{bmatrix}
    \mat{X}&\mat{Y} 
\end{bmatrix}^\top \in \mathbb{R}^{2N\times D_0}.\) 
As the LDM is translation invariant, we center the concatenated matrix as \(\mat{Z}_c = \mat{Z} - \vec{1} \vec{\mu}_Z^\top,\) where \(\vec{\mu}_Z\) is a vector containing the rowwise mean of \(\mat{Z}\). 
We then proceed to the next iteration of embeddings \(\mat{X',Y'} \in \mathbb{R}^{N \times D},\) where the new rank \(D\) is chosen according to binary search. We initialize the new embeddings using the low-rank SVD with \(D\) dimensions, given as \(\mat{Z}_c \approx \mat{U}_D\vec{\Sigma}_D\mat{V}_D^\top\), to project the previous embedding matrices onto the reduced space as \(\mat{X}'=(\mat{X}-\vec{1} \vec{\mu}_Z^\top)\mat{V}_D, \mat{Y'}=(\mat{Y}-\vec{1} \vec{\mu}_Z^\top)\mat{V}_D \in \mathbb{R}^{N\times D}\). This is repeated by updating \(D\) in accordance with the binary search. In~\autoref{fig:hot-n-cold} of the supplementary material the benefits of truncated SVD initialization from a higher embedding dimension as opposed to randomly initializing the new embedding at the given embedding dimension are illustrated. 

\begin{wrapfigure}{r}{0.6\textwidth}  
    \scriptsize
    \vspace{-20pt}  
    \begin{minipage}{0.6\textwidth}
        \begin{algorithm}[H]
            \scriptsize
            \caption{Progressive search for solution with lowest EED}
            \label{alg:algorithm-binary-search}
            \textbf{Require:} \(lb, ub \in \mathbb{Z^+}\) and \(lb < ub\).
            \begin{algorithmic}[1]
            \State Initialize search interval as \([lb, ub]\)
            \State \(D^\star \gets \text{None}\) \Comment{Optimal \(D^*\) found.}
            \State \(D_0 \gets ub\) \Comment{Initial candidate EED.}
            \State \(\vec{\theta}_0 \gets \mathbb{R}^{N\times D_0}\times\mathbb{R}^{N\times D_0}, \theta_{i,j} \sim \mathcal{N}(0,1)\)
            \Comment{For \(\mathcal{L}_2, \beta \gets \beta_0\sim\text{Unif}(0,1)\)}
            \State Initialize \(M_\mathcal{R}\) with \(\vec{\theta}_0\) \Comment{\(M_\mathcal{R}\): model w. reconstruction \(\mathcal{R}\).}
            \State \(\vec{\theta} \gets \text{Train } M_\mathcal{R} \text{ until convergence or full reconstruction, otherwise stop search.}\)
            \While{\(lb \leq ub\)}
                \State \(D \gets \left\lfloor \frac{lb + ub}{2} \right\rfloor\)
                \State \([U, \Sigma, V] \gets \text{SVD}(\text{Concat}[X, Y])\) \Comment{For \(\mathcal{L}_2,\) center embedding space after concat.}
                \State \((X', Y') \gets (V_{:D} X, V_{:D} Y)\)
                \State Initialize \(M_\mathcal{R}'\) with \((X', Y')\)
                \State \(\boldsymbol{\theta}' \gets \text{Train } M_\mathcal{R}' \text{ until convergence or exact embedding}\)
                \If{exact embedding achieved}
                    \State \(D^\star \gets D\)
                    \State \(ub \gets D - 1\)
                    \State \(\boldsymbol{\theta} \gets \boldsymbol{\theta}'\)
                    \State \textbf{Stop} search if \(D = lb.\)
                \Else
                    \State \textbf{Stop} search if \(D = ub.\)
                    \State \(lb \gets D + 1\)
                \EndIf
            \EndWhile
            \end{algorithmic}
        \end{algorithm}
    \end{minipage}
\end{wrapfigure}

\textbf{Q3:} A metric space is defined as a set \(M\) endowed with a distance function \(d: M\times M \rightarrow \mathbb{R}\) satisfying the metric properties for points \(x,y,z\in M\), i.e. \textbf{(I)} \(d(x,x)=0\), \textbf{(II)} for \(x\neq y: d(x,y)>0\), \textbf{(III)} \(d(x,y)=d(y,x)\) and \textbf{(IV)} [the triangle inequality]: \(d(x,z)\leq d(x,y) + d(y,z)\). Since the \(\mathcal{R}_{L_2}\)-reconstruction is based on the Euclidean distance between the latent coordinates of the nodes, the set of nodes jointly embedded during training will therefore satisfy these metric properties. 

Optimizing the objective function with the \(\mathcal{R}_{L_2}\)-reconstruction as in \autoref{eq:ll2-loss}, we obtain a set of parameters \(\vec{\theta}^\star=\{\mat{X}^\star,\mat{Y}^\star,\beta^\star\}\). 
Assuming \(\vec{\theta}^\star\) corresponds to a perfect reconstruction and noting that \( \sigma(\tilde{a}_{i,j}\left[\mathcal{R}_{L_2}(\vec{\theta}^\star)\right]_{i,j}) > 0 \), we observe that
\(\sign(\left[\mathcal{R}_{L_2}(\vec{\theta}^\star)\right]_{i,j})\) is \(+1\) for links and \(-1\) for nonlinks, and thus \(\beta\) can be viewed as the radius of the hypersphere, centered in \(\vec{x}^\star_i\), which encapsulates all \(\vec{y}^\star_j\) for which the node index pair \(i,j\) constitute a link in the embedded graph. This corresponds to a unit disk graph \citep{clark1990unit} defined across two embeddings  \(\mat{X},\mat{Y}\) instead of a single embedding \(\mat{X}\).
Examining the full reconstruction \(\mat{R}=\mathcal{R}_{L_2}(\mat{X,Y},\beta)\), we see that the \(\vec{x}_i\) and \(\vec{y}_j\) node embeddings encode the source and target nodes, respectively, in the pairwise relations between nodes in the graph.


\textbf{Checking for perfect reconstruction:}\label{ckeck_kdtree}
To check if an exact embedding has been found, we have to check if the reconstructed adjacency matrix \(\hat{\mat{A}}\) from the embeddings is the same as the original \(\mat{A}\). This is done by calculating the Frobenius error between them \(\nicefrac{||\hat{\mat{A}}-\mat{A}||_F}{||\mat{A}||_F}\) which poses issues with both runtime and memory usage when working with large graphs as it requires the dense adjacency matrix. For this reason, we propose a different approach to check if an exact embedding has been found exploring that the metric properties allow us to employ different similarity searching techniques \citep{Zezula2006SimilarityApproach, Yianilos1993DataSpaces}. In particular, we use the fixed-radius nearest neighbors search based on KD-trees which can be constructed in linearithmic time \citep{Knuth1998TheSearching, Friedman1977AnTime, Bentley1975MultidimensionalSearching}. If we find all the nearest neighbors within the distance \(\beta\) in the embedding space, we can compare the found neighbors with the sparsely represented edge index lists. 
The worst case runtime for making the comparison would be \(\mathcal{O}(N^2)\), which happens in the case that everything in the graph is connected. As large graphs are mostly very sparse \citep{Barabasi2016NetworkScience} with edges typically scaling sub-linearithmically the effective runtime will be much lower \citep{Nakis2022ARepresentations}.

\textbf{Scalable inference:}
When working with very large graphs a lot of memory will be required to store the full dense adjacency matrices, which is necessary for the reconstruction check using Frobenius error. 
The reconstruction check can be done with only the sparse representation of the proposed nearest neighbors full reconstruction check, but we still need to address the issue of memory when calculating the loss and learning the representation. Two sampling methods that try to get around the issue of memory on very large graphs are random node sampling \citep{Leskovec2006SamplingGraphs} and negative sampling/case-control inference \citep{Hamilton2020GraphLearning,raftery2012fast}.

\textbf{Random Node (RN) sampling:} 
In random node (RN) sampling \citep{Leskovec2006SamplingGraphs} a random set of nodes is sampled uniformly amongst the original \(N\) nodes. We let \(b\) denote the set of sampled node indices, the optimization objective in \autoref{eq:L2-loss} can be reformulated as:
$
\mathcal{L}_{\text{RN}}(R) \mathdef-\sum_{i \in b}\sum_{j \in b}\log\sigma(\tilde{a}_{i,j}r_{i,j}).
$
This is equivalent to inducing a subgraph from the nodes in \(b\) and performing an optimization step on this subgraph as if it was the original adjacency matrix. 
When using RN sampling where each node has an equal probability of being in the induced sample, we might end up not preserving the structural properties of the graph. In \cite{Stumpf2005SubnetsNetworks} they show that RN sampling does not retain the power law degree distribution for scale-free networks. 

\textbf{Case-control (CC) sampling:} Case control \citep{raftery2012fast} or negative sampling as formulated in \cite{Mikolov2013DistributedCompositionality,Goldberg2014Word2vecMethod} and also explained in \cite{Hamilton2020GraphLearning} considers all nodes in the graph which we denote \(\mathcal{V}\). For each of the node indices \(i'\) all the links are collected \(l_1^{(i')}\) and for \(k\in\mathbb{Z}^+\) sample \(k \cdot\left|l_1^{(i')}\right|\) non-links uniformly, denoted by \(l_0^{(i')}\). In practice, we set \(k=5\).

Defining \(l_1 = \bigcup_{i' \in \mathcal{V}} \{l_1^{(i')}\}\) and \(l_0 = \bigcup_{i' \in \mathcal{V}} \{l_0^{(i')}\}\) the loss from \autoref{eq:L2-loss} can be redefined as: \begin{align}
    \mathcal{L}_{\text{CC}}(R) \mathdef \sum {}_{r_i \in \hat{l}_1} (- \log \sigma(r_i)) + \sum {}_{r_j \in \hat{l}_0} (- w_j \log \sigma(r_j))
\end{align}
Where \(\hat{l}_1\) and \(\hat{l}_0\) corresponds to the sets of links and non-links in \(l_1\) and \(l_0\), e.g. \(\mathcal{R}_{L_2}\).
\(w_j = \frac{N-|l_1^{(j)}|}{|l_0^{j}|}\) is a node specific recalibration weight based on the amount of links and non-links.

\textbf{Hierarchical Block Distance Model approximation and hinge loss active set optimization:}
Sampling methods visit only a very small fraction of the network during training which can especially become an issue for exact reconstruction of large-scale networks potentially creating problems with convergence of the model, as well as convergence speed. We therefore propose a two-stage optimization approach based on the linearithmically scaling ($\mathcal{O}(N\log(N)$)  hierarchical block distance model (\textsc{HBDM}) originally proposed in the context of the Poisson likelihood \citep{Nakis2022ARepresentations}. To achieve perfect reconstruction from an \textsc{HBDM} initialized solution we further exploit that the hinge loss reduces the loss function to only consider missclassified dyads but with same stationary points for exact network reconstruction as \autoref{eq:L2-loss}. Specifically, we start by optimizing the \textsc{HBDM} based on the following augmentation of the method to the Bernoulli log-likelihood:
\begin{align} 
\mathcal{L}_{\text{HBDM}}(R) &\mathdef \sum_{\substack{i \neq j \\ y_{i,j}=1}}\Bigg(\beta- ||\mathbf{x}_i-\mathbf{y}_j||_2\Bigg)-\sum_{k_L=1}^{K_L}\Bigg(\sum_{i,j \in C_{k_L}}\log({1+\exp(\beta - ||\mathbf{x}_i-\mathbf{y}_j||_2))}\Bigg) \notag 
\\ &\quad -\sum_{l=1}^{L}\sum_{k=1}^{K_l}\sum_{k^{'} \neq k}^{K_l}\Bigg(\log({1+\exp(\beta- ||\vec{\mu}_{k}^{(l)}-\vec{\mu}_{k'}^{(l)}||_2)})\Bigg),\label{eq:log_likel_lsm_bip} 
\end{align}
in which a hierarchical structure akin to a KD-tree is used to reduce the softplus contribution from the likelihood based on approximating this part of the likelihood using an optimally learned hierarchical structure based on the current learned embedding space thereby providing an accurate approximation of the full likelihood \citep{Nakis2022ARepresentations}. By optimizing the embedding space using \textsc{HBDM} only a relatively small number of dyads will remain misclassified as a result of the block approximation. 
The Hierarchical Block Distance Model (HBDM) uniquely characterizes the entire likelihood of large-scale graphs without sampling, achieving linearithmic space and time complexity through a hierarchical approximation of the total likelihood via metric clustering under Euclidean distance. The sums over terms $l, k, L, K$ refer to the $K$ clusters and $L$ layers used by HBDM, while $Y_{N \times N} = \left( y_{i,j} \right) \in {0,1}^{N \times N}$ represents the adjacency matrix of the graph, where $y_{i,j} = 1$ if the pair $(i,j) \in E$, otherwise it is $0$ for all $1 \leq i , j \leq N$. Our model uses HBDM for initializing its latent space to provide a "hot" start. 

In the second stage, we explore that for perfect reconstruction the stationary points of the Bernoulli likelihood and hinge loss are the same. This enables to use a zero-margin hinge loss optimization procedure that operates only over the active set of misclassified dyads (identified using the linearithmic reconstruction check) as opposed to all dyads in the logistic loss \autoref{eq:L2-loss} to efficiently achieve perfect reconstruction:
\begin{align}
\mathcal{L}_{\text{HL}}(R) \mathdef \sum_{(i, j) \in \mathcal{S}} \max(0, - \tilde{a}_{i,j}r_{i,j})),
\end{align}
 where $\mathcal{S}$ is the active set. The active set $\mathcal{S}$ is then updated after each iteration via the KD-tree-based perfect reconstruction check until perfect reconstruction is achieved. This HBDM initialization allows our model to misclassify very few pairs, denoted as $M$, with $M \ll N^2$ (observed empirically) for large graphs. The model then iterates only over these misclassified pairs, identified using KD-trees and the metric properties of the LDM. These pairs are easily accounted for by the proposed analytical hinge loss optimization without computational constraints. Consequently, our model can achieve perfect reconstruction of large-scale graphs efficiently. To our knowledge, we are the first to achieve this and provide such an efficient optimization procedure. The HBDM has linearithmic complexity, and we have empirically observed that after very few HBDM iterations, the number of misclassified pairs is in the linearithmic scale. This results in an overall linearithmic complexity for our proposed method, making it a highly scalable method, much more efficient than competing baselines.

\section{Results and discussion}

\begin{figure}[t]
    \centering
    \begin{minipage}{0.5\textwidth}
        \centering
        \scriptsize
        \captionof{table}{Graphs used in the experiments along with some statistics. Comp. Arboricity denotes the maximal arboricity obtained considering as subgraphs only the connected components of the graph as the actual arboricity requires infeasible exhaustive evaluation of all combinations of subsets of nodes thus providing a lower bound on the arboricity.}
        \label{table:datasets}
        \resizebox{0.95\linewidth}{!}{
            \begin{tabular}{l|rcrrrr}
                \toprule
                \multicolumn{1}{c|}{Dataset} & \multicolumn{1}{c}{Nodes} & \multicolumn{1}{c}{Type} & \multicolumn{1}{c}{Avg.} & \multicolumn{1}{c}{Max Degree /} & \multicolumn{1}{c}{Connected} & \multicolumn{1}{c}{Total} \\
                \multicolumn{1}{c|}{} & \multicolumn{1}{c}{} & \multicolumn{1}{c}{} & \multicolumn{1}{c}{Degree} & \multicolumn{1}{c}{Comp. Arboricity} & \multicolumn{1}{c}{Components} & \multicolumn{1}{c}{Triangles} \\
                \midrule
                Cora & 2708 & undirected & 3.90 & 168 / 5.75 & 78 & 1630 \\
                CiteSeer & 3327 & undirected & 2.74 & 99 / 4.0 & 438 & 1167 \\
                Facebook & 4039 & directed & 21.85 & 1043 & 1 & 1612010 \\
                ca-GrQc & 5242 & undirected & 5.53 & 81 / 7.0 & 355 & 48260 \\
                wiki-Vote & 7115 & directed & 14.57 & 893 & 24 & 608389 \\
                p2p-Gnutella04 & 10876 & directed & 3.68 & 100 & 1 & 934 \\
                ca-HepPh & 12008 & undirected & 19.74 & 491 / 21.0 & 278 & 3358499 \\
                PubMed & 19717 & undirected & 4.50 & 171 / 4.5 & 1 & 12520 \\
                \midrule
                com-amazon & 334863 & undirected & 5.53 & 549 / 5.53 & 1 & 667129 \\
                roadNet-PA & 1088092 & undirected & 2.83 & 9 / 4.0 & 206 & 67150 \\
                \bottomrule
            \end{tabular}
        }
    \end{minipage}%
    \begin{minipage}{0.5\textwidth}
        \centering
        \includegraphics[width=0.9\linewidth]{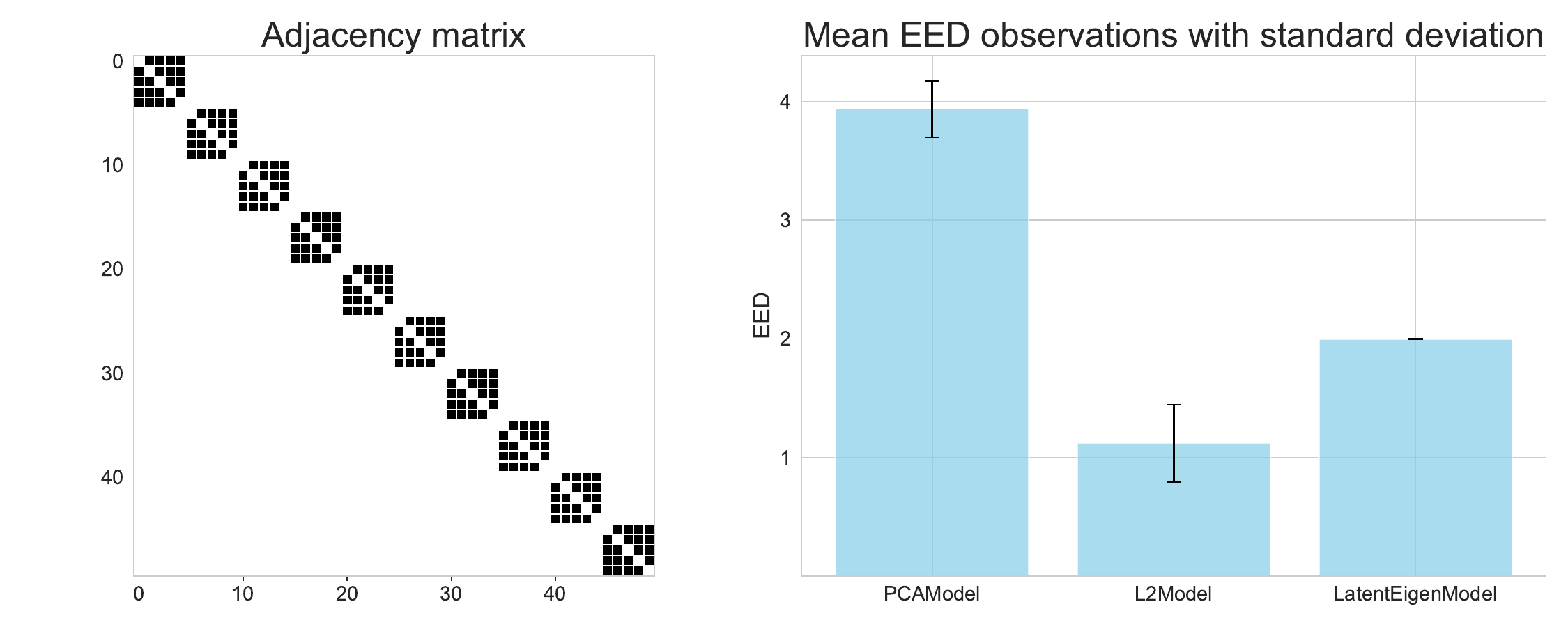}
        \caption{Visualization of the training statistics over 100 test runs on the synthetic block graph seen in the left figure. The bar is the mean exact embedding dimension (EED) and the error bars correspond to the standard deviation of the measurements. An extended version of this figure can be seen in the supplementary \ref{A:viz-of-synth}.}
        \label{fig:training-instability}
    \end{minipage}
\end{figure}

The methods introduced are tested on commonly used graph datasets: Cora, Citeseer, Pubmed \citep{Yang2016RevisitingEmbeddings}, ca-HepPh \citep{Leskovec2005GraphsExplanations, Gehrke2003OverviewCup}, and ca-GrQc \citep{Leskovec2007GraphEvolution} which are all different publication and citation networks, Facebook \citep{Yang2020PANE:Embedding} which is a subset of the Facebook network, wiki-Vote \citep{Leskovec2010SignedMedia, Leskovec2010PredictingNetworks} consisting of voting on Wikipedia, the peer to peer network p2p-Gnutella04 \citep{Leskovec2007GraphEvolution, Ripeanu2002MappingDesign}, the Amazon network \citep{Yang2012DefiningGround-truth}, and Roadnet-PA which is a road network of Pennsylvania \citep{Leskovec2008CommunityClusters}. Many of the graphs are retrieved from the SNAP Datasets collection \citep{Leskovec2014SNAPCollection} or PyTorch Geometric \citep{Fey2019FastGeometric}. The datasets are all preprocessed by ignoring edge weights, i.e., each entry in the adjacency graph is either 0 or 1. We list graph statistics for the datasets in Table \ref{table:datasets}, as well as note the edge type, i.e., directed/undirected. Optimization is performed using the ADAM optimizer \citep{Kingma2014Adam:Optimization}, with the learning rate halved after 
\(k\) steps of no improvement. The initial learning rate and \(k\) vary by loss function and dataset, assessed qualitatively. Experiments run for up to 30,000 epochs, with rank intervals adjusted per dataset and prior results \citep{ChanpuriyaNodeNetworks}. See the supplementary material for details.

\begin{figure}[t]
    \centering
    \begin{minipage}[t]{0.41\textwidth} 
        \centering
        \begin{subfigure}{0.32\textwidth}  
            \centering
            \includegraphics[width=0.8\linewidth]{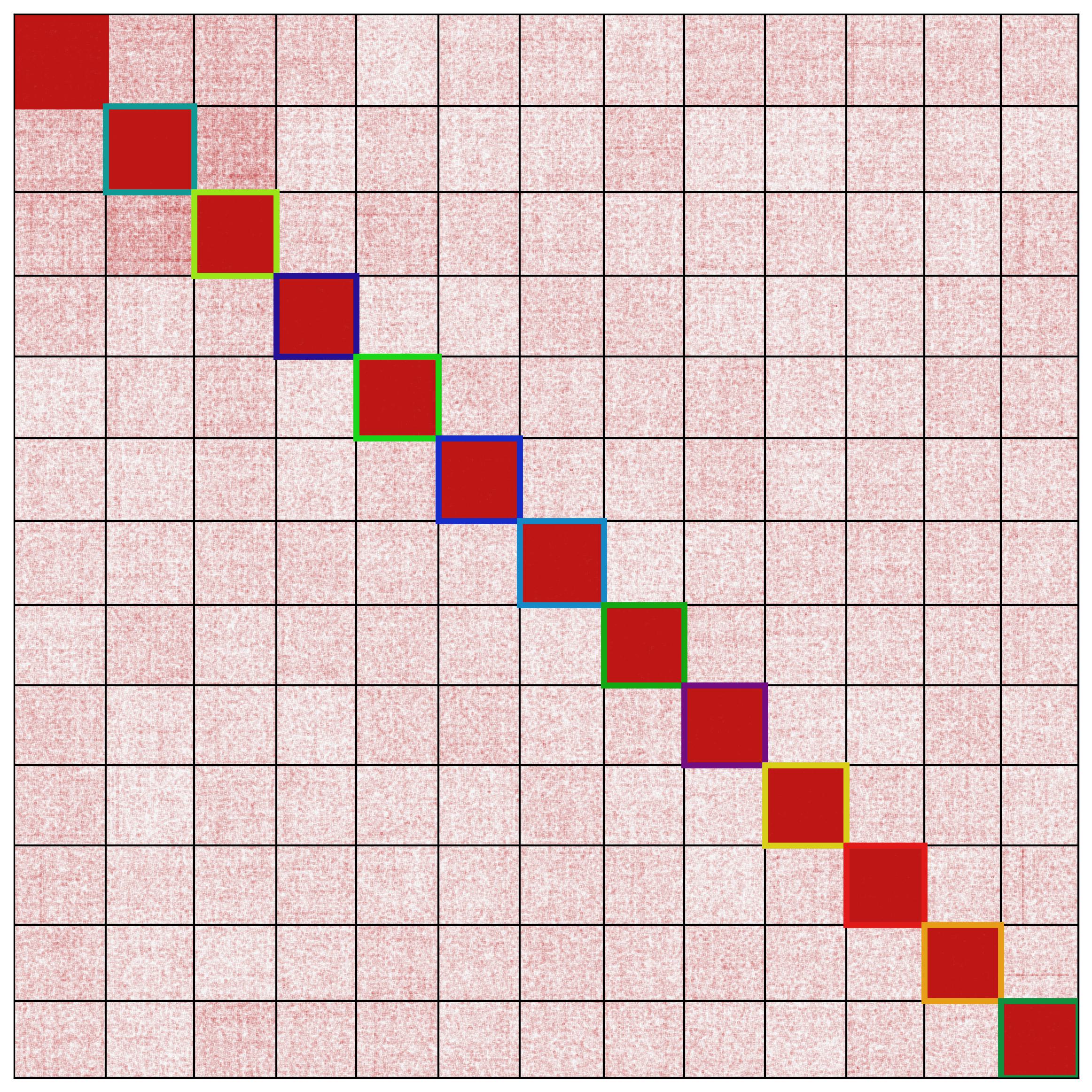}
            \caption{$l$\textsl{=1}}
            \label{fig:1_pol}
        \end{subfigure}
        \hfill
        \begin{subfigure}{0.32\textwidth}  
            \centering
            \includegraphics[width=0.8\linewidth]{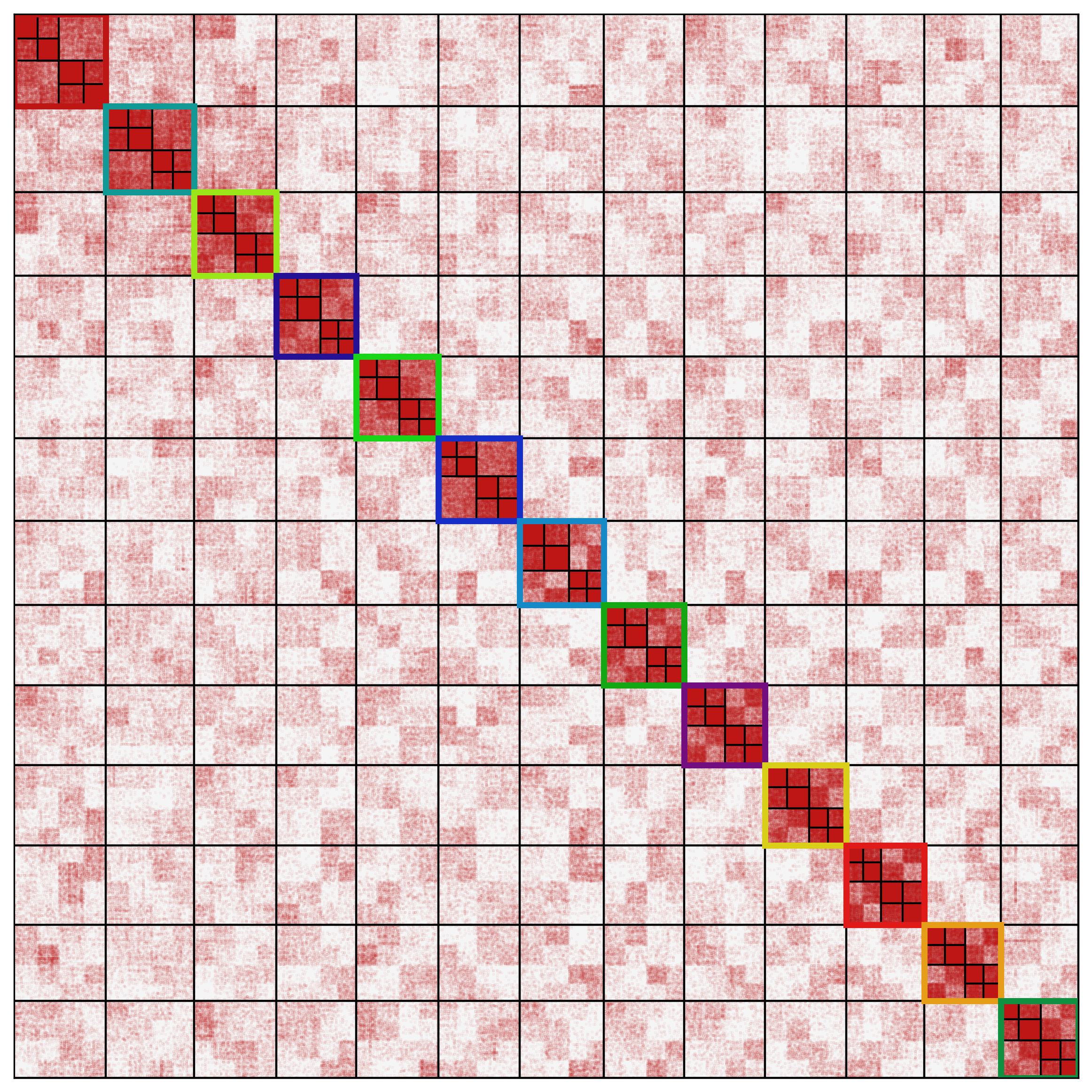}
            \caption{$l$\textsl{=3}}
            \label{fig:2_pol}
        \end{subfigure}
        \hfill
        \begin{subfigure}{0.32\textwidth}  
            \centering
            \includegraphics[width=0.8\linewidth]{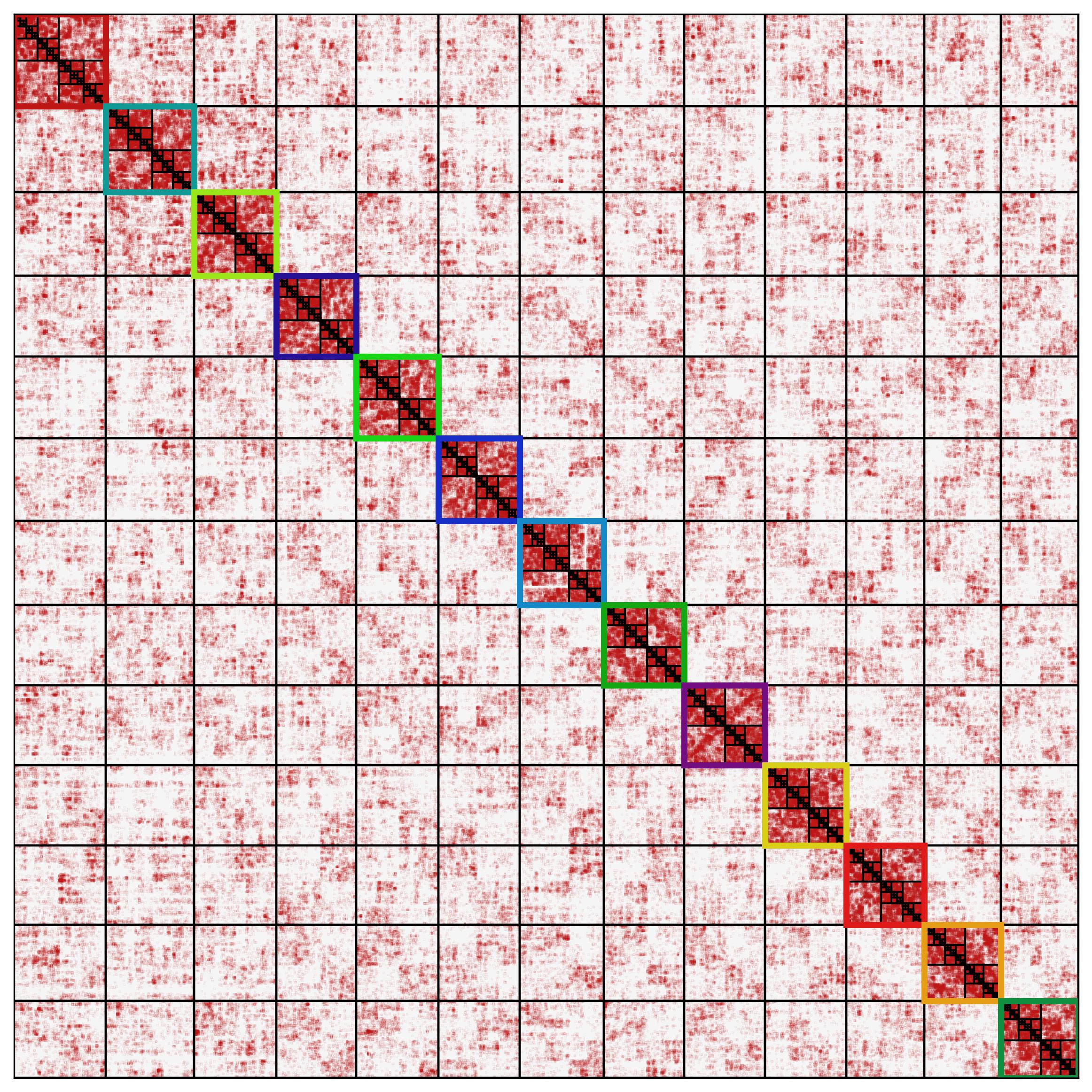}  
            \caption{$l$\textsl{=6}}
            \label{fig:3_pol}
        \end{subfigure}
        \caption{Reordered adjacency matrices of the \textsl{com-amazon} network at different hierarchy levels learned by \textsc{HBDM}.}
        \label{fig:AMAZON-Visualization}
    \end{minipage}
    \hfill
    \begin{minipage}[t]{0.58\textwidth} 
        \centering
        \includegraphics[width=0.95\linewidth]{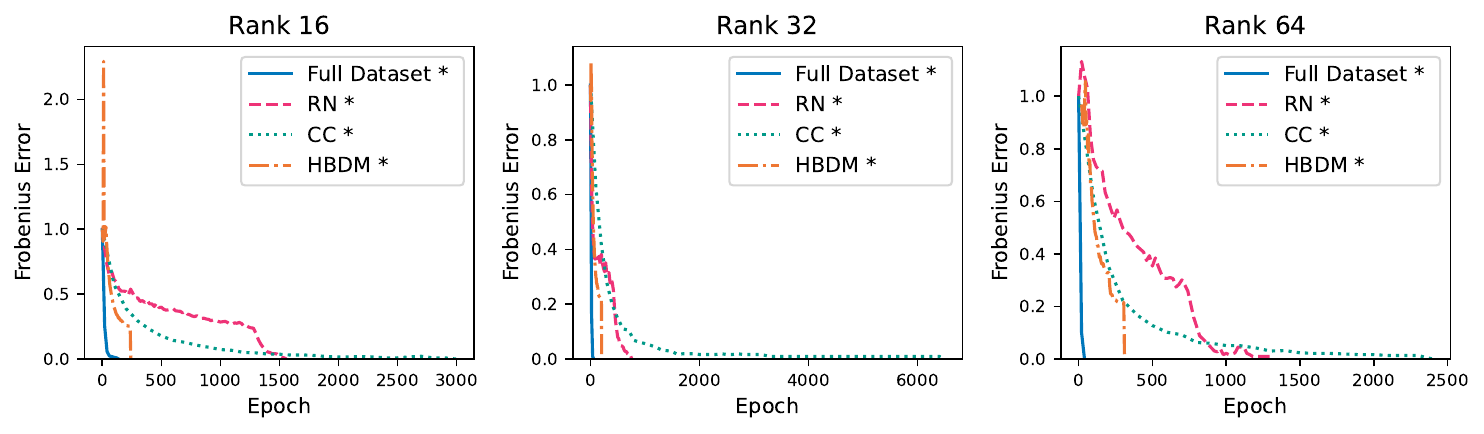}
        \caption{Comparison of case-control (CC), random node sampling (RN), \textsc{HBDM}, and full likelihood inference on the Cora network.}
        \label{fig:InferenceComparison}
    \end{minipage}
\end{figure}


\textbf{Searching for the optimal embedding dimension \(D^*\): }
5 searches have been carried out for each of the datasets and the minimum exact embedding dimension (\(D^*\)) has been reported along with the mean and standard deviation across the analyses of each network. These results are presented in Table \ref{table:results} along with the exact embedding dimensions reported by \cite{ChanpuriyaNodeNetworks}. 
Notably, in \cite{ChanpuriyaNodeNetworks} all networks were analyzed as undirected whereas we presently preserve the directed structure of the directed networks to highlight that the directionality of links can naturally be accounted for by the exact embedding. As a result, the results can only be directly compared for the five undirected networks. In \cite{ChanpuriyaNodeNetworks} self-links were further included in the modeling, but we presently do not model self-links ignoring these in the loss function. However, in the supplementary \ref{sup:self}, we include the modeling of self-links for direct comparison and find that this has little influence on the extracted embedding dimensionality $D^*$. 
From the results we see that using the proposed efficient search scheme, we identify the existence of solutions with substantially lower optimal embedding dimension $D^*$ than previously reported and substantially lower than the bounds in terms of maximal degree $D=2k_{\text{max}}+1$ \citep{ChanpuriyaNodeNetworks} and arboricity $4\lceil\alpha\rceil^2+1$ \citep{NEURIPS2023_428ceef2} as presently evaluated only across each component of the graphs. Additionally, these results align with the theoretical result from Theorem~\ref{theorem:embed-dim}, i.e., $D^*$ for the \(\mathcal{R}_{L_2}\)-embedding is at least as good as the \(\mathcal{R}_{LPCA}\)-embedding.
For Cora, the \(\mathcal{R}_{LPCA}\) solution has a $D^*$ that is three dimensions larger than the \(\mathcal{R}_{L_2}\)-embedding. 
This is likely due to the optimization being increasingly difficult for lower dimensions, causing the training process to get stuck in local minima rather than finding the expected (or approximate) optimal solutions, as we according to Theorem~\ref{theorem:embed-dim} can analytically extract an embedding of dimension \(D^*_{LPCA}=D^*_{L2}+2\) directly from the \(\mathcal{R}_{L_2}\) solution.

\begin{figure*}[t]
    \centering
    \begin{minipage}[t]{0.68\textwidth}
        \centering
        \scriptsize
        \captionof{table}{Lowest exact embedding dimensions ($D^*$) found for each dataset across 5 searches along with the mean and standard deviations across the searches. We have marked directed networks with a "*" as these will not be comparable with \cite{ChanpuriyaNodeNetworks} as they converted all networks to undirected networks. 
        }
        \label{table:results}
        \resizebox{\linewidth}{!}{
            \begin{tabular}{l|rr|rr|rr|rr|c}
                \toprule
                \multicolumn{1}{l|}{Dataset} & \multicolumn{2}{c|}{$D^*$ ($L_2$)} & \multicolumn{2}{c|}{$D^*$ (LPCA)} & \multicolumn{2}{c|}{$D^*$ (Eigenmodel)} & \multicolumn{2}{c|}{$D^*$ ($L_2$, hinge loss)} & \multicolumn{1}{c}{$D^*$} \\ 
                & & & & & & & & \multicolumn{1}{c}{Margin of 0} & \multicolumn{1}{|c}{Chanpuriya et al.} \\
                \midrule
                Cora &  \textbf{6} & (6.2 \(\sigma\) 0.45) & 9 & (9.8 \(\sigma\) 0.45) & 9 & (9.4 \(\sigma\) 0.85) & 7 & (7 \(\sigma\) 0) & 16  \\
                Citeseer & \textbf{6} & (6.7 \(\sigma\) 0.55) & 9 & (9.2 \(\sigma\) 0.45) & 9 & (9.2 \(\sigma\) 0.45) & 7 & (7 \(\sigma\) 0) &  16  \\
                Facebook* & \textbf{20} & (20.67 \(\sigma\) 0.52) & 22 & (22.8 \(\sigma\) 0.45) & 21 & (22.6 \(\sigma\) 0.89) & \textbf{20} & (20 \(\sigma\) 0) & - \\
                ca-GrQc & \textbf{8} & (8 \(\sigma\) 0) & 10 & (10.8 \(\sigma\) 0.45) & 10 &  (10.4 \(\sigma\) 0.55) & \textbf{8} &  (8 \(\sigma\) 0)  & 16 \\
                Wiki-Vote* & \textbf{41} & (42.33 \(\sigma\) 1.97) & 45 & (45.8 \(\sigma\) 0.45) & 45 & (46.4 \(\sigma\) 1.34)  & 42 & (42.2 \(\sigma\) 0.45)  & 48  \\
                p2p-Gnutella04*  & \textbf{14} & (14 \(\sigma\) 0) & 17 & (17.8 \(\sigma\) 0.45) & 17 & (18 \(\sigma\) 0.71) & 16 & (16 \(\sigma\) 0)  & 32 \\
                ca-HepPh & \textbf{16} & (16 \(\sigma\) 0) & 19 & (19.8 \(\sigma\) 0.84) & 19 & (19.4 \(\sigma\) 0.55) & \textbf{16}  & (16.67 \(\sigma\) 0.52) & 32 \\
                Pubmed & \textbf{14} & (14 \(\sigma\) 0) & 17 & (17.8 \(\sigma\) 0.45) & 17 & (17.4 \(\sigma\) 0.55) & 16 & (16 \(\sigma\) 0) & 48  \\
                \bottomrule
            \end{tabular}
        }
    \end{minipage}%
    \hfill
    \begin{minipage}[t]{0.31\textwidth}
        \centering
        \scriptsize
        \captionof{table}{Lowest exact embedding dimensions ($D^*$) found for the two large-scale networks, com-amazon and roadNet-PA.}
        \label{tab:large-table-results-JUTTTELIHUT}
        \vspace{5pt}
         \resizebox{0.95\linewidth}{!}{
        \begin{tabular}{ccc}
            \toprule
            & com-amazon & roadNet-PA \\
            \midrule
            $D^*$ ($L_2$) & 13 & 16 \\
            \bottomrule
        \end{tabular}}
    \end{minipage}
\end{figure*}

\textbf{Instability of training procedure:}\label{sec:instability}
As with many other machine learning algorithms, optimizing for the embeddings is dependent on initialization and hyperparameter settings, i.e., number of epochs, learning rate, etc. As a result there are no guarantees that the lowest possible $D^*$ is identified.
The optimization procedure is initialized with random embedding matrices, and this can lead to the model getting stuck in local minima.
We showcase this behavior using synthetic data in \autoref{fig:training-instability}.
We run the training procedure for each of the \(\mathcal{R}_{L_2}\)-, \(\mathcal{R}_{EIG}\)- and \(\mathcal{R}_{LPCA}\)-models, where we start by testing if we can embed it in 1 dimension, moving up to 2 dimensions if not possible, and so on, until convergence. We do this 100 times for each model. We observe that \(\mathcal{R}_{L_2}\), and \(\mathcal{R}_{EIG}\) reliably extract their respective lowest embedding dimensions of $D=1$ and $D=2$ whereas the \(\mathcal{R}_{LPCA}\)-model is less reliable and only in a few of the runs correctly identifies the $D=3$ solution.


\textbf{Statistics of the reconstructed graph for embedding dimensions below the optimal $D^*$:} In \autoref{fig:graph_stats}, we present graph statistics for the \textsl{Cora} network, to illustrate the impact of additional compression on the reconstructed graph as $D<D^*$. Notably, the statistics for $D^*$ represent the ground truth, reflecting perfect graph reconstruction. \autoref{fig:stats1} illustrates how the average degree of the reconstructed network evolves as the network is compressed into lower-dimensional embedding spaces than the lowest exact dimension. \autoref{fig:stats2} demonstrates that the average shortest path length of the reconstructed graph decreases with increasing compression, reflecting the network's structural simplification. The graph density, as provided in \autoref{fig:stats3}, is consistent with the observed increase in average degree, also rises as the network becomes more compressed. Finally, \autoref{fig:stats4} highlights the increasing percentage of misclassified dyads as the model is further compressed, emphasizing its diminishing ability to accurately reconstruct the original network. (See also supplementary \ref{sup:D_star}.)

\textbf{Scalable exact network embeddings: }In ~\autoref{fig:InferenceComparison}, we compare case-control (CC) inference, random node (RN) sampling, and the \textsc{HBDM} inference to inference using the full likelihood considering the Cora network. We observe that the \textsc{HBDM} outperforms the alternative similar linearithmic $\mathcal{O}(N\log{N})$ specified CC and RN sampling procedures providing scalable inference while achieving convergence relatively close to the convergence using the full likelihood. In Table \ref{tab:large-table-results-JUTTTELIHUT} the achieved exact embedding dimension of the two large networks are given. This demonstrates that the two-phase procedure is scalable, but due to computational costs, we leave the search for a tighter bound on their embedding dimension for future work. In \autoref{fig:AMAZON-Visualization} we visualize the Amazon network organized according to the exact embedding using the corresponding \textsc{HBDM} induced hierarchical approximation structure of this embedding. The exact embedding effectively captures the underlying community structure with high within-block densities, while progressively revealing more detailed network structures as we traverse the hierarchy.

\begin{figure}[t]
\centering
\begin{minipage}{1\textwidth} 
    \centering
    \begin{subfigure}{0.24\textwidth}  
        \centering
        \includegraphics[width=.9\linewidth]{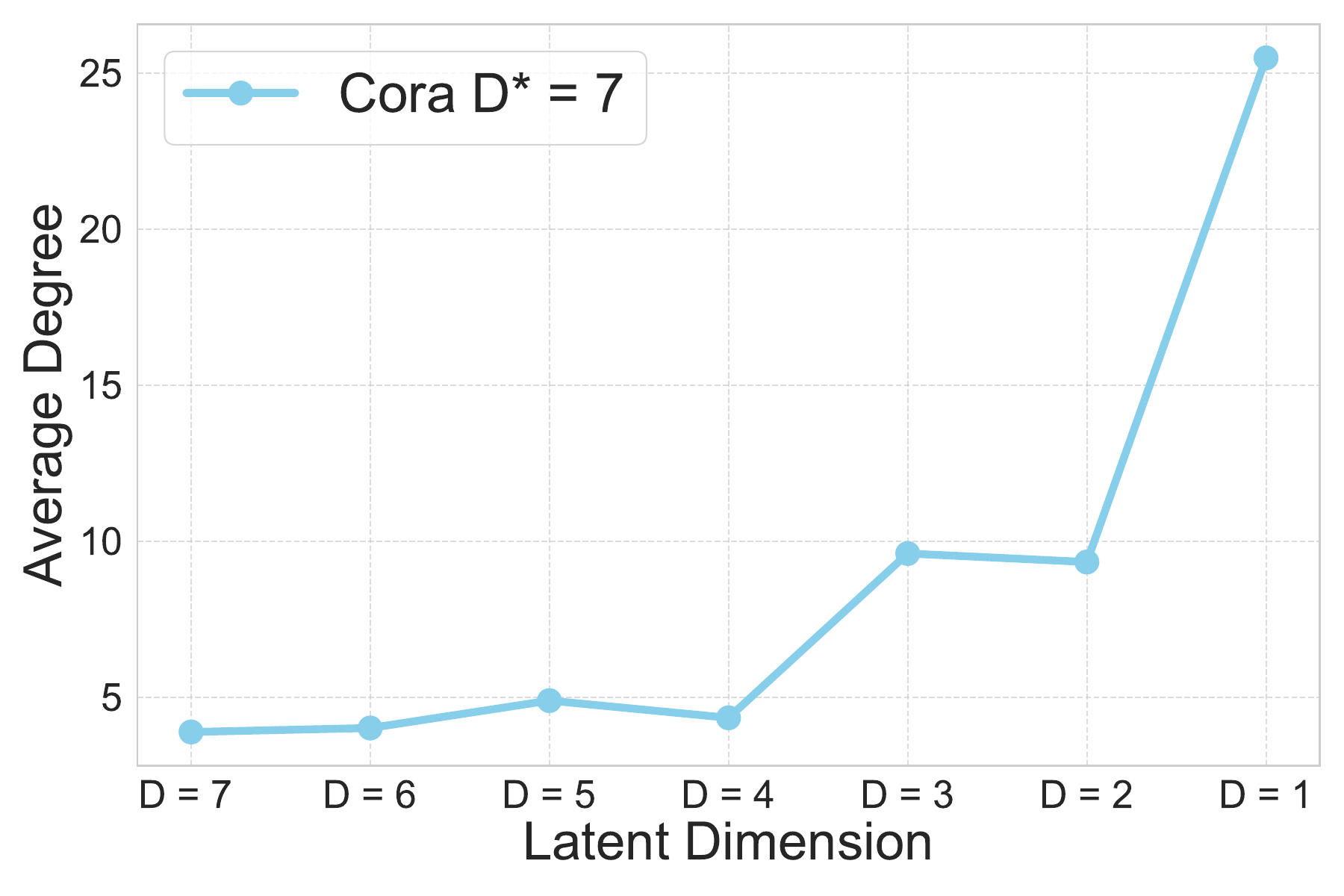}
        \caption{Average Degree }
        \label{fig:stats1}
    \end{subfigure}
    \hfill
    \begin{subfigure}{0.24\textwidth}  
        \centering
\includegraphics[width=.9\linewidth]{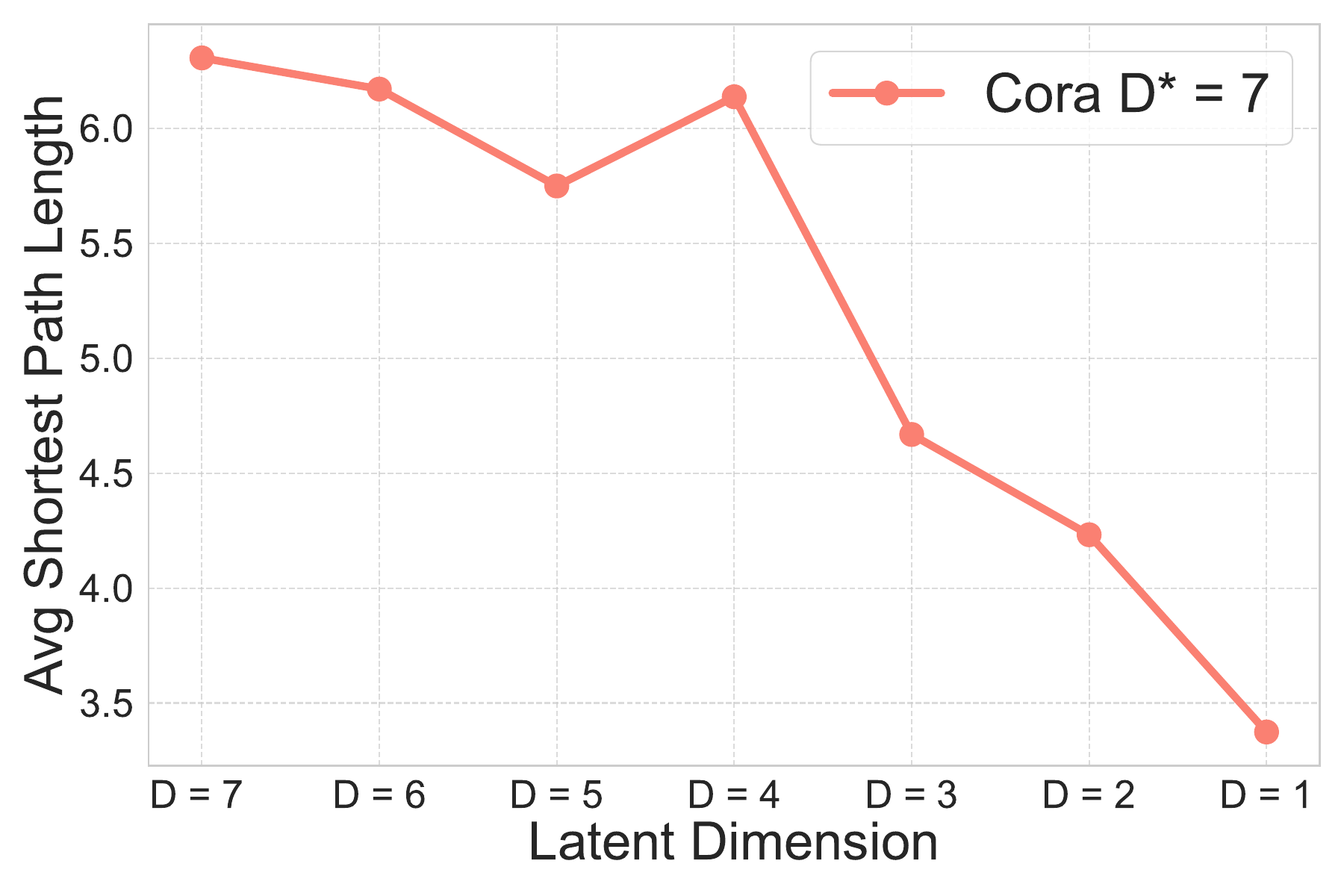}
        \caption{Average Shortest Path}
        \label{fig:stats2}
    \end{subfigure}
    \hfill
    \begin{subfigure}{0.24\textwidth}  
        \centering
    \includegraphics[width=.9\linewidth]{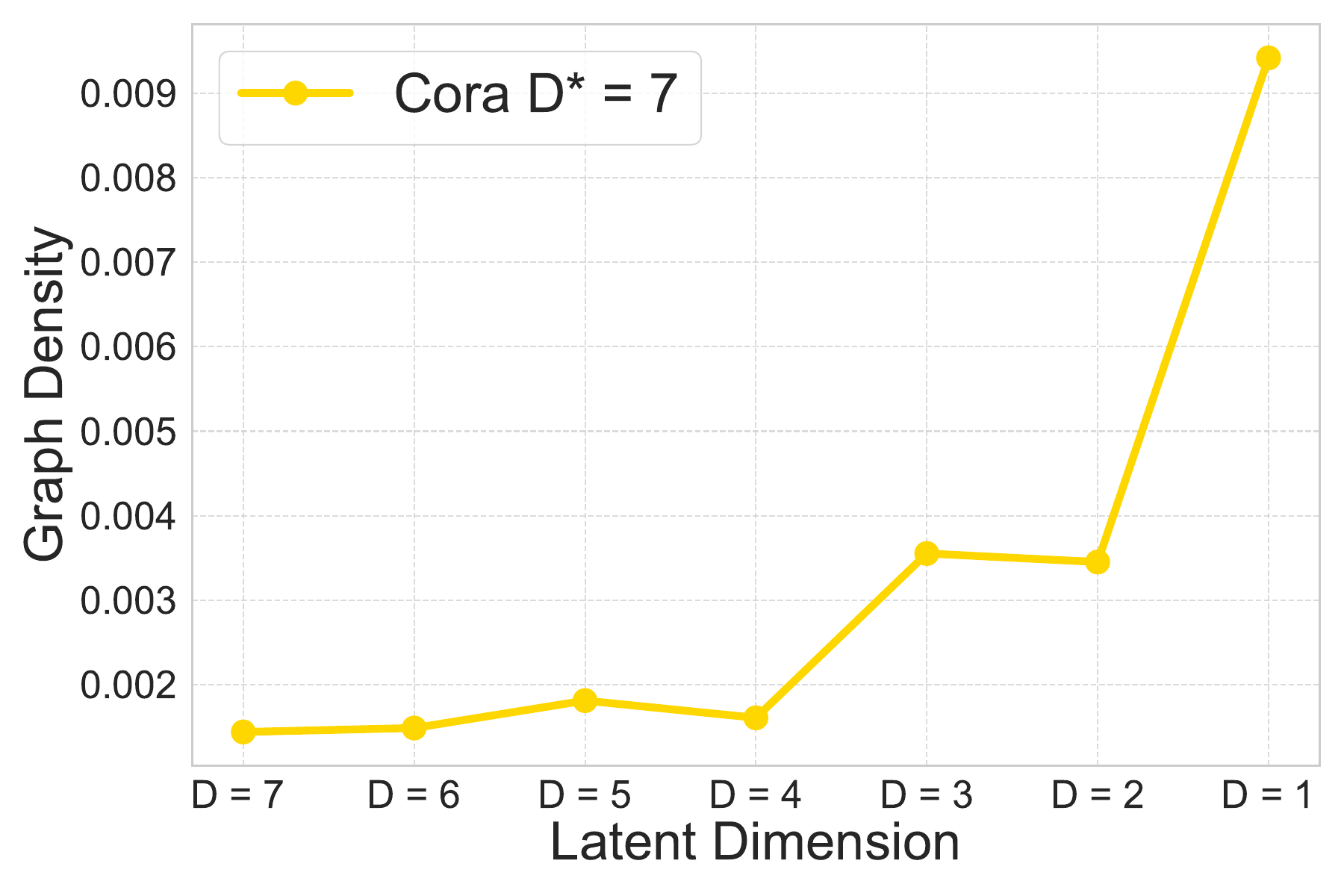}  
        \caption{Graph Density }
        \label{fig:stats3}
    \end{subfigure}
    \hfill
    \begin{subfigure}{0.24\textwidth}  %
        \centering
        \includegraphics[width=.9\linewidth]{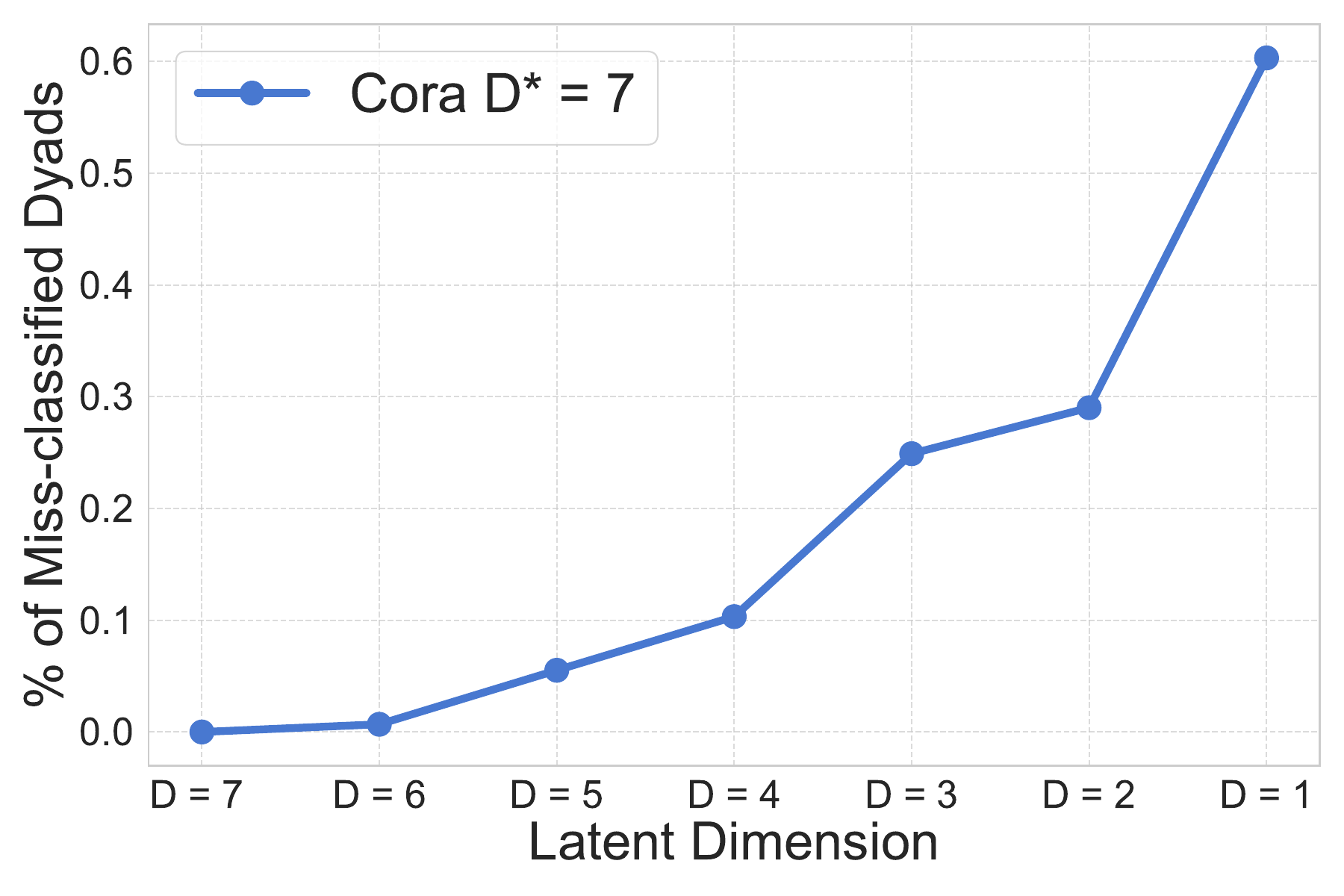}  
        \caption{Miss-classified Dyads}
        \label{fig:stats4}
    \end{subfigure}
\end{minipage}
\caption{Graph statistics for the reconstructed graph as the latent dimension decreases from the exact embedding dimension $(D^*)$ to $(D=1)$ for \textsl{Cora}. $(D^*)$ ensures perfect reconstruction. }
\label{fig:graph_stats}
\end{figure}


\section{Conclusion, limitations and broader impact}
Our work explores the intrinsic dimensionality of graphs by posing the question of just how few dimensions are needed to represent a graph in a way that allows for perfect reconstruction.
Through metric models, we demonstrate both theoretically and empirically that we can construct more expressive embedding spaces than LPCA \citep{ChanpuriyaNodeNetworks} and at the same time exploit metric properties to linearithmically, $\mathcal{O}(N\log{N})$, scale the inference of exact network embeddings to large graphs. Using our framework we achieve substantially lower embedding dimensions than previously reported. Surprisingly, we also observe from our large-scale network analysis that even large networks can be modeled exactly using very low-dimensional representations. We anticipate our efficient exact network embeddings can have wide applications within network science from the visualization and extraction of communities as presently highlighted to graph representation learning tasks including node classification and link prediction in which the LDM based on Euclidean distance has demonstrated strong performance using low-dimensional representations \citep{Nakis2022ARepresentations}. Furthermore, we anticipate that exact embeddings can have important applications within efficent network motif discovery \citep{Milo2002NetworkNetworks,Vespignani2003EvolutionModular} to the characterization of network resilience and path properties based on the extracted topology induced by the exact network embedding. The approach can also be used to identify the interpolation threshold which has previously been used to define comparatively suitable network representations \citep{gu2021principled}. 

Our approach does not necessarily identify the lowest possible network embedding dimension but an upper bound $D^*$ and can be prone to issues of local minima, see also section \ref{sec:instability}. We presently considered metric embedding by the $L_2$-norm as originally proposed for the latent distance model \citep{Hoff2002LatentAnalysis} which enabled us to establish a direct relationship to the embedding dimensions of LPCA. However, we note that other metric embedding approaches for instance as proposed relying on the $L_\infty$ norm \citep{bonato2012geometric,bonato2017geometry,boratko2021capacity} or hyperbolic geometry \citep{hyp1,hyp2,hyp3,hyp4,hyp5,nickel2017poincare,nickel2018learning} may enable lower dimensional representations than the ones presently achieved. In the supplementary material \ref{sup:hyper}, we contrast the performance of the Euclidean embeddings to embeddings based on the Poincaré disk model, finding that the two embedding approaches perform very similar in terms of the estimated exact embedding dimensionality. Notably, our scalable inference procedure directly generalizes to other metric embedding approaches and future work should further investigate properties of other choices of geometry on the extracted dimensionality $D^*$.

Whereas we expect exact embeddings to be useful for a variety of graph representation learning tasks care has to be taken in the context of link prediction as exact embeddings will imply learning explicitly to also characterize links set to zero as zero. As such, the approach may only provide meaningful predictions when treating links as missing from the loss function (using the hold-out method) if used for link-prediction \citep{miller2009nonparametric} as opposed to the traditional approach setting links to zero and predicting that they were changed \citep{liben2003link}. Low-dimensional exact embeddings can be directly used for network visualization and low-dimensional representation as presently highlighted. This however can also be used for surveillance purposes in which the compact representations induced by the embeddings provide low-dimensional node-specific fingerprints which can be used for profiling and identification purposes.



\section*{Ethics statement}
The authors declare no conflicts of interest.

\section*{Reproducibility statement}
All code for reproducing the experiments is available at: \href{https://github.com/AndreasLF/HowLowCanYouGo}{https://github.com/AndreasLF/HowLowCanYouGo}. Furthermore, all data sets used in the experimentation are publicly available.

\bibliography{references,rereferencesMM}
\bibliographystyle{iclr2025_conference}
\appendix
\newpage
\section{Appendix}

\subsection{Experiments with a Hyperbolic distance model}\label{sup:hyper}

To verify that our method extends to other metric spaces than the Euclidean, we run the EED search procedure using a hyperbolic distance model, based on the Poincaré ball model as seen in (\citep{NIPS2017_59dfa2df}), as this model formulation is well-suited for gradient-based optimization. Concretely, we optimize using the Riemannian ADAM implementation from the \texttt{geoopt} library\citep{geoopt2020kochurov}, which is based on \citep{becigneul2018riemannian}.

\begin{table*}[ht]
    \centering
    \scriptsize
\caption{Extended version of \autoref{table:results}. Lowest exact embedding dimensions ($D^*$) found for each dataset across 5 searches along with the mean and standard deviations across the searches, including the Hyperbolic distance model. See \autoref{table:datasets} for graph statistics for each of the graphs. Results using hinge loss are obtained with a margin of 0. \({}^{**}\)Currently only 3 searches have been performed for the Hyperbolic model on Pubmed.}\label{table:hyp-results}
\scalebox{0.85}{
\begin{tabular}{l|rr|rr|rr|rr|c|rr}
\toprule
\multicolumn{1}{l|}{Dataset} & \multicolumn{2}{c|}{$D^*$ ($L_2$)} & \multicolumn{2}{c|}{$D^*$ (LPCA)} & \multicolumn{2}{c|}{$D^*$ (Eigenmodel)} & \multicolumn{2}{c|}{$D^*$ ($L_2$)} & \multicolumn{1}{c|}{$D^*$} & \multicolumn{2}{c}{$D^*$ (Hyperbolic)} \\
& & &  & & & & & \multicolumn{1}{c}{Hinge loss}& \multicolumn{1}{|c|}{Chanpuriya et al.} & \multicolumn{2}{c}{Hinge loss} \\
\midrule
Cora &  \textbf{6} & (6.2 \(\sigma\) 0.45) & 9 & (9.8 \(\sigma\) 0.45) & 9 & (9.4 \(\sigma\) 0.85)& 7 & (7 \(\sigma\) 0) & 16 & 7 & (7.4 \(\sigma\) 0.49) \\
Citeseer & \textbf{6} & (6.7 \(\sigma\) 0.55) & 9 & (9.2 \(\sigma\) 0.45) & 9 & (9.2 \(\sigma\) 0.45) & 7 & (7 \(\sigma\) 0) &  16 & 7 & (7.8 \(\sigma\) 0.40) \\
Facebook* & \textbf{20} & (20.67 \(\sigma\) 0.52) & 22 & (22.8 \(\sigma\) 0.45) & 21 & (22.6 \(\sigma\) 0.89) & \textbf{20} & (20 \(\sigma\) 0) & - & \textbf{20} & (20.6 \(\sigma\) 0.49) \\
ca-GrQc & \textbf{8} & (8 \(\sigma\) 0) & 10 & (10.8 \(\sigma\) 0.45) & 10 &  (10.4 \(\sigma\) 0.55) & \textbf{8} &  (8 \(\sigma\) 0)  & 16 & \textbf{8} & (8.8 \(\sigma\) 0.40) \\
Wiki-Vote* & \textbf{41} & (42.33 \(\sigma\) 1.97) & 45 & (45.8 \(\sigma\) 0.45) & 45 & (46.4 \(\sigma\) 1.34)  & 42 & (42.2 \(\sigma\) 0.45)  & 48 & 45 & (46.0 \(\sigma\) 1.55) \\
p2p-Gnutella04*  & \textbf{14} & (14 \(\sigma\) 0) & 17 & (17.8 \(\sigma\) 0.45) & 17 & (18 \(\sigma\) 0.71) & 16 & (16 \(\sigma\) 0)  & 32 & 17 & (17.6 \(\sigma\) 0.49) \\
ca-HepPh & \textbf{16} & (16 \(\sigma\) 0) & 19 & (19.8 \(\sigma\) 0.84) & 19 & (19.4 \(\sigma\) 0.55) & \textbf{16}  & (16.67 \(\sigma\) 0.52) & 32 & 18 & (18.6 \(\sigma\) 0.80) \\
Pubmed & \textbf{14} & (14 \(\sigma\) 0) & 17 & (17.8 \(\sigma\) 0.45) & 17 & (17.4 \(\sigma\) 0.55) & 16 & (16 \(\sigma\) 0) & 48 & 18 & (18.67 \(\sigma\) 0.47)\({}^{**}\)\\
\bottomrule
\end{tabular}
}
\end{table*}

\subsection{Proof of theorem~\ref{theorem:embed-dim}}\label{sec:theoremproof}

In this section we provide the proof of Theorem~\ref{theorem:embed-dim}.

\begin{theorem}
Let \(D^*_{LPCA}\) and \(D^*_{L_2}\) denote the lowest possible exact embedding dimension for a graph embedding obtained by optimization w.r.t. the \(\mathcal{R}_{LPCA}\)-reconstruction and \(\mathcal{R}_{L_2}\)-reconstruction respectively. Then:
\begin{align}
    D^*_{LPCA} - 2 \leq D^*_{L_2} \leq D^*_{LPCA}
\end{align}
\end{theorem}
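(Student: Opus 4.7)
The plan is to prove each inequality by an explicit construction that converts an exact embedding under one model into an exact embedding under the other, so that no existence or minimization argument is required. Both directions rely on the fact that in an exact embedding the reconstruction score must be strictly nonzero at every dyad, since exact reconstruction requires $\tilde{a}_{i,j}\,r_{i,j}>0$ for all $(i,j)$.

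For the upper bound $D^*_{L_2}\leq D^*_{LPCA}$, I would take an exact LPCA solution $(\mat{X},\mat{Y})\in\mathbb{R}^{N\times D^*_{LPCA}}$ and rescale each row to unit norm, producing $\tilde{\mat{X}},\tilde{\mat{Y}}$ with $\|\tilde{\vec{x}}_i\|=\|\tilde{\vec{y}}_j\|=1$. First I would check that no row of $\mat{X}$ or $\mat{Y}$ vanishes: an exact LPCA embedding needs $\tilde{a}_{i,j}\,\vec{x}_i^{\top}\vec{y}_j>0$ for every dyad, which forces every row to be nonzero (a zero row would produce an entire row of zero inner products, violating the strict sign condition). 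Row normalization preserves the sign of every inner product, so $\sign(\tilde{\vec{x}}_i^{\top}\tilde{\vec{y}}_j)=\sign(\vec{x}_i^{\top}\vec{y}_j)$. Since $\|\tilde{\vec{x}}_i-\tilde{\vec{y}}_j\|^{2}=2-2\tilde{\vec{x}}_i^{\top}\tilde{\vec{y}}_j$ for unit vectors, the LDM reconstruction with the same embeddings and $\beta=\sqrt{2}$ satisfies $\sign(\sqrt{2}-\|\tilde{\vec{x}}_i-\tilde{\vec{y}}_j\|)=\sign(\tilde{\vec{x}}_i^{\top}\tilde{\vec{y}}_j)$, reproducing the adjacency pattern exactly in dimension $D^*_{LPCA}$.

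For the lower bound $D^*_{LPCA}-2\leq D^*_{L_2}$, I would apply a two-coordinate lifting of an exact LDM solution $(\mat{X},\mat{Y},\beta)$ in $\mathbb{R}^{N\times D^*_{L_2}}$, defining $\hat{\vec{x}}_i=(2\vec{x}_i,\,-\|\vec{x}_i\|^{2},\,1)$ and $\hat{\vec{y}}_j=(\vec{y}_j,\,1,\,\beta^{2}-\|\vec{y}_j\|^{2})$ in $\mathbb{R}^{D^*_{L_2}+2}$. A direct expansion gives $\hat{\vec{x}}_i^{\top}\hat{\vec{y}}_j=2\vec{x}_i^{\top}\vec{y}_j-\|\vec{x}_i\|^{2}-\|\vec{y}_j\|^{2}+\beta^{2}=\beta^{2}-\|\vec{x}_i-\vec{y}_j\|^{2}$, which shares the sign of $\beta-\|\vec{x}_i-\vec{y}_j\|$ because $\beta+\|\vec{x}_i-\vec{y}_j\|\geq 0$. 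Hence the lifted vectors form an exact LPCA embedding in dimension $D^*_{L_2}+2$, giving $D^*_{LPCA}\leq D^*_{L_2}+2$.

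The main potential pitfall is the nonvanishing assumption used in the normalization step: one must confirm that exact LPCA reconstruction indeed excludes zero rows for every graph with at least two nodes, which follows immediately from the strict sign condition. Beyond that, both directions reduce to a routine algebraic identity, and the admissibility of the choice $\beta=\sqrt{2}$ in the upper bound is guaranteed by $\beta$ being a free parameter of the LDM. Together, the two constructions deliver the theorem without any further machinery.
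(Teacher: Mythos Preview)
Your proposal is correct and follows essentially the same route as the paper: row-normalize an exact LPCA solution and take $\beta=\sqrt{2}$ for the upper bound, and lift an exact $L_2$ solution by two auxiliary coordinates so that the inner product equals $\beta^{2}-\|\vec{x}_i-\vec{y}_j\|^{2}$ for the lower bound. Your treatment is slightly more careful than the paper's in that you explicitly justify why no row can vanish before normalizing, and your lifting keeps the $i$- and $j$-dependent terms cleanly separated.
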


\begin{proof} Recall that the probability of a link at index pair \((i,j)\) in LPCA is given by \(p(A_{i,j})=\sigma\left(\left[\mat{XY}^\top\right]_{i,j}\right)\) and recall that this implies that a link exists if and only if \(\left[\mat{XY}^\top\right]_{i,j} > 0\). 

We note that the signs of \(\mat{X}\) and \(\mat{Y}\) are invariant to scaling by positive scalars, and we define \(\boldsymbol{\gamma}:=\left[\gamma\right]_i\) and \(\boldsymbol{\alpha}:=\left[\alpha\right]_j\), where \(\gamma_i, \alpha_j \in \mathbb{R}^+\). 
We can now write:
\begin{align*}
\sign\left(\mat{XY}^\top\right) = \sign\left(\diag\left(\vec{\gamma}\right) \mat{XY}^\top \diag\left(\vec{\alpha}\right)\right) = \sign\left(\Tilde{\mat{X}}\Tilde{\mat{Y}}^\top\right).\end{align*} 

From this expression, we can ensure row-wise normalization of the factorization matrices, i.e. \(\norm{\Tilde{\vec{x}}_i^{\text{(row)}}} = \norm{\Tilde{\vec{y}}_j^{\text{(row)}}} = 1\), by setting \(\gamma_i = \norm{\vec{x}_i^{\text{(row)}}}^{-1}\) and \(\alpha_j = \norm{\vec{y}_j^{\text{(row)}}}^{-1}\).

Considering the metric embeddings, the distance \(\norm{\Tilde{\vec{x}}_i - \Tilde{\vec{y}}_j}\) is always non-negative and therefore \(\beta\) also has to be non-negative as it is used to separate links (i.e., $\beta>\norm{\Tilde{\vec{x}}_i - \Tilde{\vec{y}}_j}$) from non-links (i.e., $\beta<\norm{\Tilde{\vec{x}}_i - \Tilde{\vec{y}}_j}$). 
Since both these terms are positive, we can express an exact metric embedding equivalently in squared terms:
\begin{align*}
    \sign\left(\beta - \norm{\Tilde{\vec{x}}_i - \Tilde{\vec{y}}_j}\right) 
    & = \sign\left( \beta^2 - \norm{\Tilde{\vec{x}}_i - \Tilde{\vec{y}}_j}^2 \right)
    = \sign\left( \beta^2 - \left(1 + 1 -2\Tilde{\vec{x}}_i \Tilde{\vec{y}}_j^\top \right)\right).
\end{align*}
Setting \(\beta = \sqrt{2}\) simplifies to \(\sign\left(2\Tilde{\vec{x}}_i \Tilde{\vec{y}}_j^\top \right)\) and since multiplying by \(2\) does not change the sign, we confirm that the metric embedding can represent the same information as the LPCA-embedding, i.e.  \(\sign\left(\beta - \norm{\Tilde{\vec{x}}_i - \Tilde{\vec{y}}_j}\right) = \sign\left(  \Tilde{\vec{x}}_i \Tilde{\vec{y}}_j^\top \right)\). 

As we can write the metric model through an LPCA model augmented with two extra dimensions:
\begin{align*}
     \beta^2 - \left(\norm{\vec{x}_i}^2 +  \norm{\vec{y}_j}^2 -2\vec{x}_i \vec{y}_j^\top \right) = \begin{bmatrix}
        -1 & \norm{\vec{y}_j}^2 & \vec{x}_i
    \end{bmatrix}\begin{bmatrix}
        \norm{\vec{x}_i}^2 - \beta^2\\ -1 \\ 2\vec{y}_j^\top
    \end{bmatrix},
\end{align*}
this demonstrates that the LPCA-embedding needs at least two more dimensions to capture the same information as the metric embedding.
\end{proof}

\subsection{Experiments including self-links} \label{sup:self}
The experiments in \cite{ChanpuriyaNodeNetworks} were based on self-loops and for direct comparison we therefore here also include the corresponding experiments in which we set the diagonal of all networks to one (i.e., include self-loops).
10 experiments were run and the results are reported in \autoref{table:results-self-loops10}. The hyperparameters for the experiment are reported in \autoref{table:hyperparams}. 

\begin{table*}[htbp]
    \centering
\caption{Lowest exact embedding dimensions ($D^*$) found for each dataset across 10 searches along with the mean and standard deviations across the searches. Self-loops are considered. Note that in \cite{ChanpuriyaNodeNetworks} all their graphs were converted to undirected graphs and the rank comparison is therefore not directly comparable for the directed networks. See \autoref{table:datasets} for graph statistics for each of the graphs.}\label{table:results-self-loops10}
\begin{tabular}{l|rr|rr|c}
\toprule
\multicolumn{1}{l|}{Dataset} & \multicolumn{2}{c|}{$D^*$ ($L_2$)} & \multicolumn{2}{c|}{$D^*$ (LPCA)} & \multicolumn{1}{c}{$D^*$} \\& & &  &  & \multicolumn{1}{c}{\citep{ChanpuriyaNodeNetworks}} \\
\midrule
Cora &  \textbf{6} & (6.1 \(\sigma\) 0.3162) & 9 & (9 \(\sigma\) 0) & 16  \\
Citeseer & \textbf{7} & (7 \(\sigma\) 0) & 9 & (9.556 \(\sigma\) 0.526)  & 16  \\
Facebook & \textbf{20} & (20 \(\sigma\) 0) & 21 & (22.3 \(\sigma\) 1.16) & - \\
ca-GrQc & \textbf{8} & (8 \(\sigma\) 0) & 9 & (10.2 \(\sigma\) 0.7888) & 16 \\
Wiki-Vote & \textbf{42} & (42.8 \(\sigma\) 0.4216) & \textbf{42} & (44.1 \(\sigma\) 1.449) & 48  \\
p2p-Gnutella04  & \textbf{16} & (16 \(\sigma\) 0) & 17 & (17.7 \(\sigma\) 0.483) & 32 \\
ca-HepPh & \textbf{17} & (17.1 \(\sigma\) 0.3162) & 19 & (18.7 \(\sigma\) 0.483) & 32 \\
Pubmed & \textbf{15} & (15 \(\sigma\) 0) & 16 & (17.4 \(\sigma\) 0.6992) & 48  \\
\bottomrule
\end{tabular}

\end{table*}

\begin{table*}[htbp]
\caption{Hyperparameters for exact embedding search. 10 experiments were carried out for both the LPCA and the $L_2$ model.}\label{table:hyperparams}

\begin{tabular}{l|rcrrrr}
        \toprule
        Dataset & \multicolumn{1}{c}{\# epochs} & \multicolumn{1}{c}{Initial learning rate} & \multicolumn{1}{c}{LR-scheduler patience (\(k\))} & \multicolumn{1}{c}{Search range} \\
        \midrule
Cora & \(30,000\) & \(1.0\) & \(500\) & \([1,50]\)  \\
Citeseer & \(30,000\) & \(1.0\)  & \(500\) & \([1,50]\)  \\
Facebook & \(30,000\) & \(1.0\)  & \(500\) & \([1,50]\) \\
ca-GrQc & \(30,000\) & \(1.0\)  & \(500\) & \([1,50]\)  \\
Wiki-Vote & \(30,000\) & \(1.0\)  & \(500\) & \([1,50]\) \\
p2p-Gnutella04 & \(20,000\) & \(0.1\)  & \(500\) & \([1,50]\) \\
ca-HepPh & \(20,000\) & \(1.0\)  & \(500\) & \([1,50]\)  \\
Pubmed & \(20,000\) & \(1.0\)  & \(500\) & \([1,50]\) \\
\bottomrule
\end{tabular}
\end{table*}

\subsection{Hyperparameters used for results without self-links}
The hyperparameters used for the experiments not considering self-links (\autoref{table:results}) are presented in \autoref{table:hyperparams-real}.

\begin{table*}[htbp]
\caption{Hyperparameters for exact embedding search not considering self-links (see results in \autoref{table:results}). 5 experiments were carried out for both the LPCA and the $L_2$ model.}\label{table:hyperparams-real}

\begin{tabular}{l|rcrrrr}
        \toprule
        Dataset & \multicolumn{1}{c}{\# epochs} & \multicolumn{1}{c}{Initial learning rate} & \multicolumn{1}{c}{LR-scheduler patience (\(k\))} & \multicolumn{1}{c}{Search range} \\
        \midrule
Cora & \(30,000\) & \(1.0\) & \(200\) & \([1,64]\)  \\
Citeseer & \(30,000\) & \(1.0\)  & \(200\) & \([1,64]\)  \\
Facebook & \(30,000\) & \(1.0\)  & \(200\) & \([1,96]\) \\
ca-GrQc & \(30,000\) & \(1.0\)  & \(200\) & \([1,80]\)  \\
Wiki-Vote & \(30,000\) & \(1.0\)  & \(200\) & \([1,50]\) \\
p2p-Gnutella04 & \(30,000\) & \(0.1\)  & \(200\) & \([1,80]\) \\
ca-HepPh & \(30,000\) & \(1.0\)  & \(200\) & \([1,80]\)  \\
Pubmed & \(30,000\) & \(1.0\)  & \(200\) & \([1,80]\) \\
\bottomrule
\end{tabular}
\end{table*}

\subsection{The advantage of initializing the embedding space by SVD}
We used a low-rank SVD of a higher-rank embedding space to initialize the search of the embedding space at the next rank in the Algorithm \ref{alg:algorithm-binary-search}. In \autoref{fig:hot-n-cold} we demonstrate the difference between this "hot start" approach and a randomly initialized embedding space i.e. "cold start".

\begin{figure}[!h]
    \centering
    \includegraphics[width=1\linewidth]{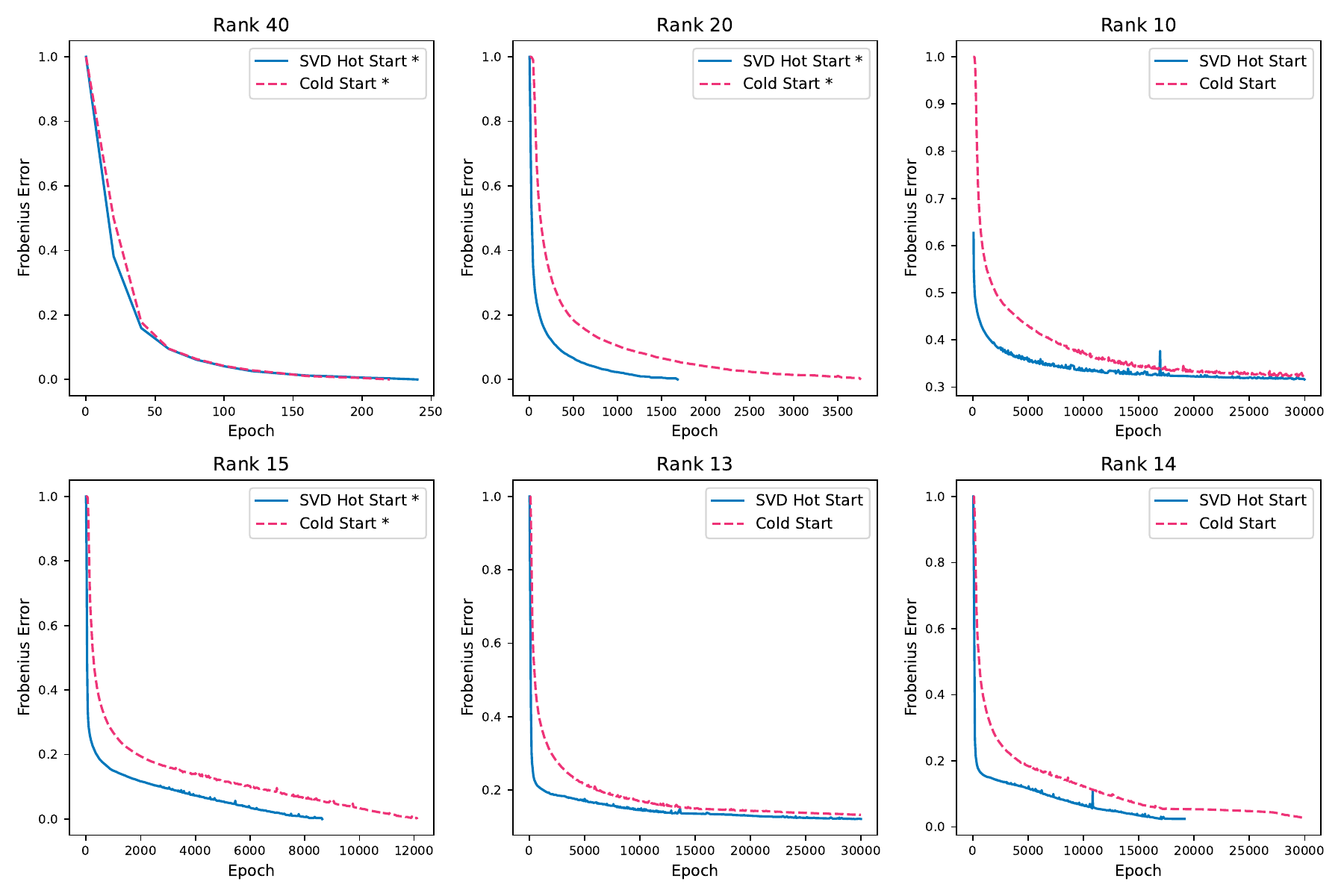}
    \caption{SVD and random initialization (cold start) for some search steps for the dataset PubMed. The search was initialized at rank 80 (randomly initialized) and then low-rank SVD-initialization was used for the proceeding search steps. Additionally, we have initialized training with random initialization for each search step. The legend is marked with \(*\) if an exact embedding was achieved.}
    \label{fig:hot-n-cold}
\end{figure}

\subsection{Visualization of training statistics on synthetic graphs}\label{A:viz-of-synth}

\begin{figure}[H]
    \centering
    \includegraphics[width=0.35\textwidth]{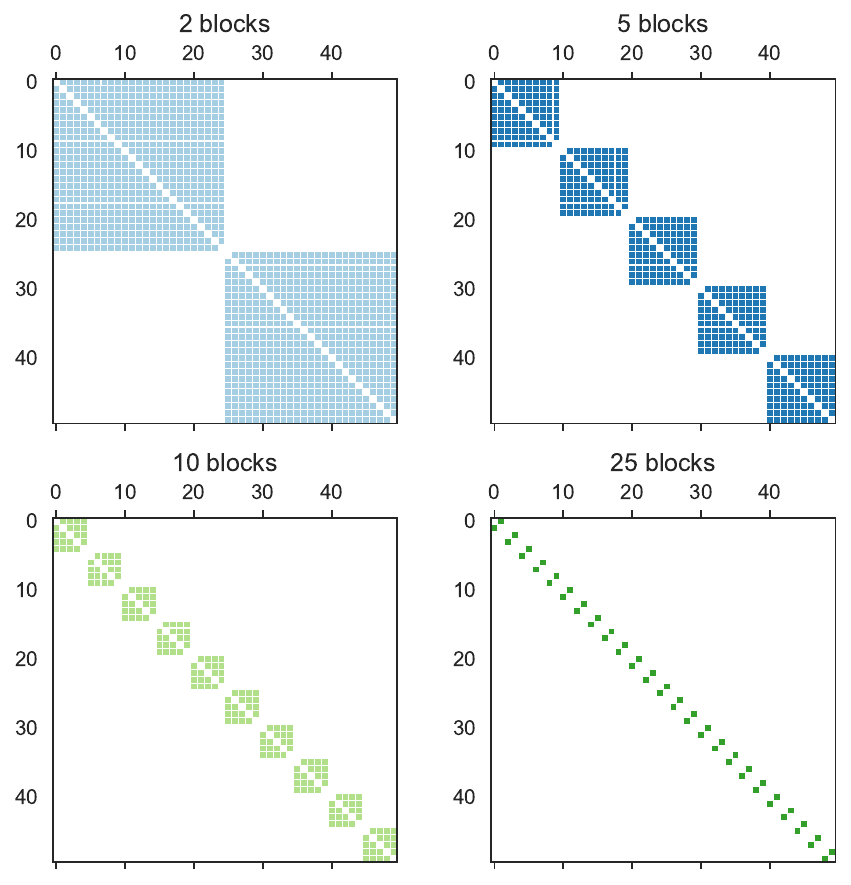} 
    \hfill
    \includegraphics[width=0.55\textwidth]{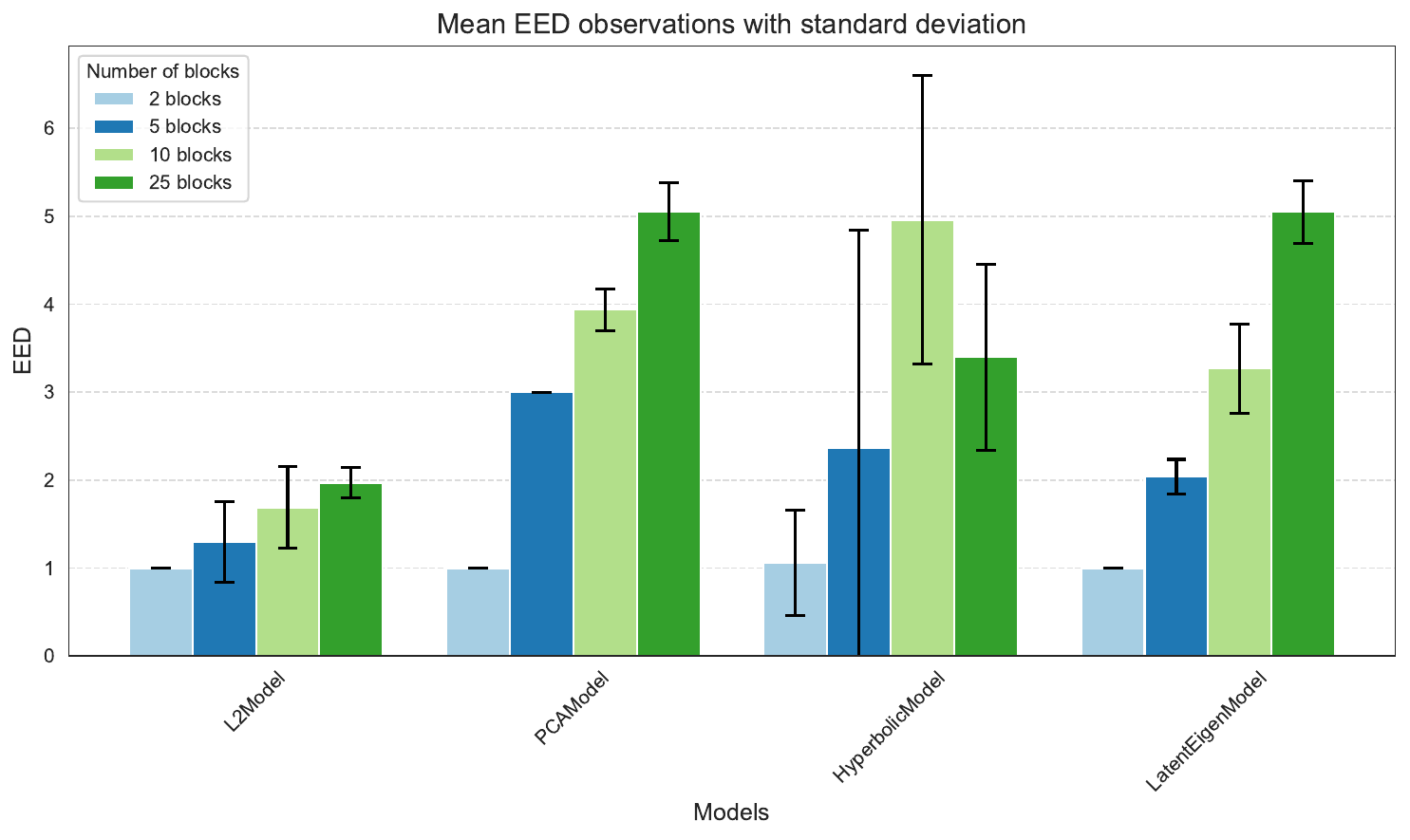} 
    \caption{Visualization of the training statistics over 100 test runs on the synthetic graphs seen in the left figure. The bar is the mean exact embedding dimension (EED) and the error bars correspond to the standard deviation of the measurements. All runs are performed using hinge loss.}
\label{fig:training-instability-new}
\end{figure}

\begin{figure}[h!]
    \centering
    \includegraphics[width=0.8\linewidth]{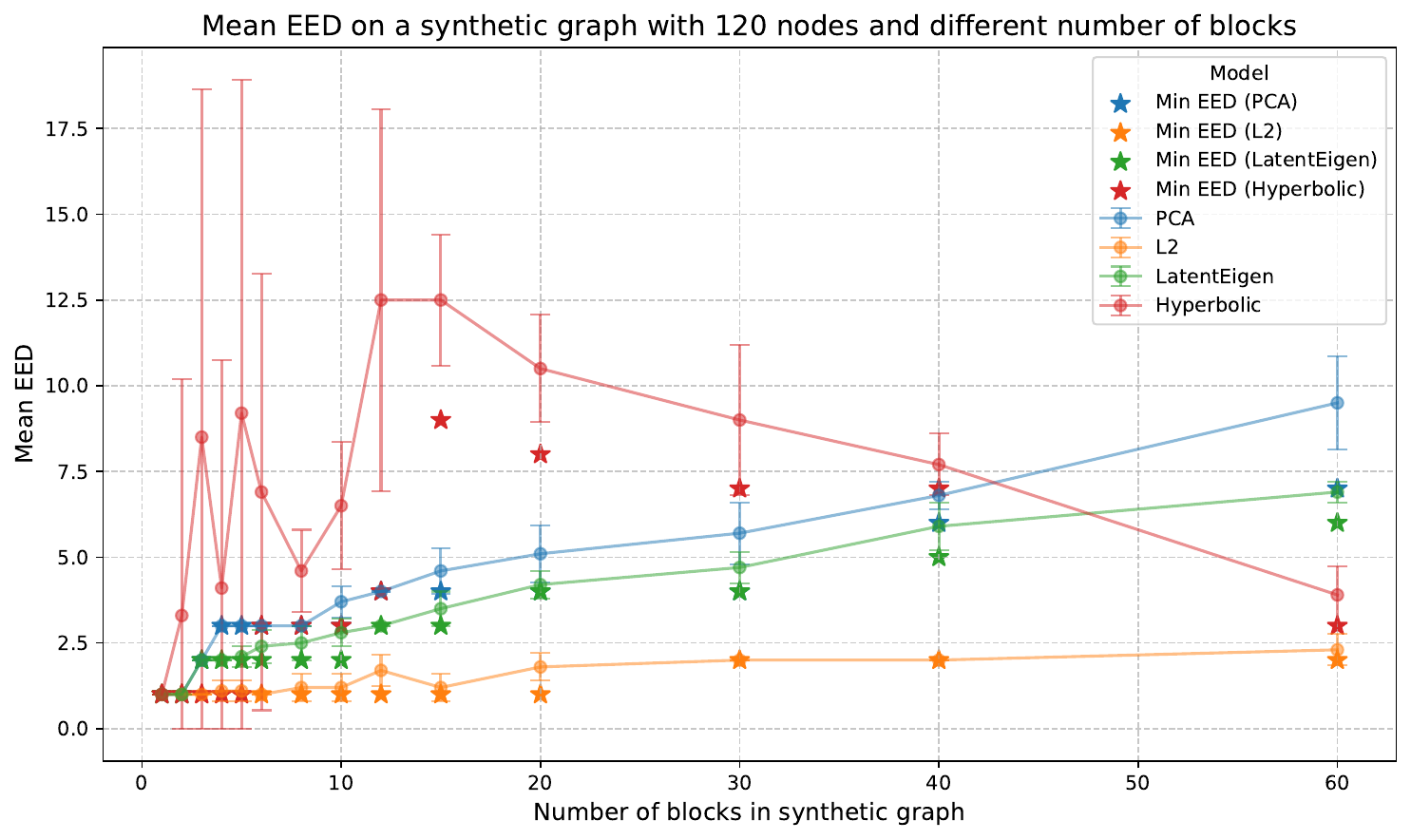}
    \caption{ EED for different models and different block sizes in synthetic networks with different amounts of blocks similar to those in \autoref{fig:training-instability-new}.}
    \label{fig:synth-network-blocks}
\end{figure}

\subsection{The Hierarchical Block Distance Model}
Given a Bernoulli log-likelihood over a latent distance model, where $Y_{N \times N} = \left( y_{i,j} \right) \in {0,1}^{N \times N}$ represents the adjacency matrix of the network, where $y_{i,j} = 1$ if the pair $(i,j) \in E$, otherwise it is $0$ for all $1 \leq i , j \leq N$. The total network log-likelihood is defined as:

\begin{align}
    \log P(Y|\bm{\Lambda})& =\sum_{i\neq j}\Big(y_{ij}\eta_{ij}-\log(1+\exp{(\eta_{ij})})\Big)\nonumber
\\ 
& =\sum_{i\neq j:y_{ij}=1}\eta_{ij}\;-\;\sum_{i\neq j}\log(1+\exp{(\eta_{ij})})\:,
\end{align}

where $\eta_{ij}$ are the log-odds, $\eta_{ij}=\beta -\norm{\vec{x}_i - \vec{y}_j}$. Scaling the optimization of such an expression is not feasible as the second term requires the computation of all node pairs scaling as $\mathcal{O}(N^2)$. In contrast, the first term scale with the number of edges, i.e. $\mathcal{O}(N\log N)$ \cite{Nakis2022ARepresentations} making its computation scalable for large networks.

To scale the LDM to large-scale networks, the HBDM procedure approximates $\sum_{i \neq j} \log(1 + \exp(\eta_{ij}))$ by enforcing a hierarchical block structure similar to stochastic block models. Specifically, HBDM employs a hierarchical divisive clustering procedure. It is worth noting that the original work by Nakis et al. [2022] focused on a Poisson likelihood and undirected networks. In this study, we extend their approach to a Bernoulli likelihood, adapting it for directed networks.

We define the total representation matrix $\mathbf{Z} = [\mathbf{X}; \mathbf{Y}]$, which combines the source and target node embeddings through concatenation. The matrix $\mathbf{Z}$ is then structured into a hierarchy using a tree-based divisive clustering approach, resulting in a cluster dendrogram.

The tree's root represents a single cluster encompassing the entire set of the concatenated latent variable embeddings, $\bm{Z}$. At each level of the tree, the cluster is recursively partitioned until the leaf nodes contain no more than the desired number of nodes, $N_{leaf}$. The number of leaf nodes, $N_{\text{leaf}}$, is chosen based on the HBDMs' linearithmic complexity upper bound and thus set to $N_{\text{leaf}} = \log N$. This results in approximately $K = N / \log(N)$ total clusters. At each tree level, the tree-nodes represent clusters corresponding to that level’s height. When partitioning a non-leaf node, the split is performed only on the set of data points assigned to the parent tree-node (cluster). For each tree level, the pairwise distances between data points in different tree-nodes are used to define the distances between the corresponding cluster centroids. These distances are then used to compute the likelihood contribution of the blocks. Binary splits are applied iteratively to the non-leaf tree-nodes, progressing down the tree. When all tree nodes are treated becomes leaves (contain maximum $\log N$ points), the HBDM analytically computes the pairwise distances within each cluster to determine the likelihood contribution of the corresponding analytical blocks. This computation incurs a linearithmic cost of $\mathcal{O}(K N_{\text{leaf}}^2) = \mathcal{O}(N \log N)$. Moreover, this approach preserves the homophily and transitivity properties of the model, as explicitly  shown in the paper.

The total HBDM expression for our approach is defined as: 
\begin{align} 
\mathcal{L}_{\text{HBDM}}(R) &\mathdef \sum_{\substack{i \neq j \\ y_{i,j}=1}}\Bigg(\beta- ||\mathbf{x}_i-\mathbf{y}_j||_2\Bigg)-\sum_{k_L=1}^{K_L}\Bigg(\sum_{i,j \in C_{k_L}}\log({1+\exp(\beta - ||\mathbf{x}_i-\mathbf{y}_j||_2))}\Bigg) \notag 
\\ &\quad -\sum_{l=1}^{L}\sum_{k=1}^{K_l}\sum_{k^{'} \neq k}^{K_l}\Bigg(\log({1+\exp(\beta- ||\vec{\mu}_{k}^{(l)}-\vec{\mu}_{k'}^{(l)}||_2)})\Bigg),\label{eq:log_likel_lsm_bip} 
\end{align}
where $l \in {1, \dots, L}$ denotes the $l$-th level of the dendrogram, $k_l$ is the cluster index at each tree level, and $\bm{\mu}_k^{(l)}$ represents the corresponding centroid.

\subsubsection{The Euclidean Clustering of HBDM, respecting metric properties}
Finally, in order for the clustering to preserve the metric properties of the Euclidean space, the HBDM defines a Euclidean version of K-means clustering. The divisive clustering procedure relies on the following Euclidean norm objective:

\begin{equation} J(\mathbf{r}, \bm{\mu}) = \sum_{i=1}^N \sum_{k=1}^K r_{ik} |\mathbf{z}_i - \bm{\mu}_k|_2, \label{eqn:eucl_kmeans} \end{equation}

where $k$ denotes the cluster index, $\mathbf{z}i$ is the $i$-th data point, $r{ik}$ is the cluster responsibility or assignment, and $\bm{\mu}_k$ is the cluster centroid. Since the loss function for K-means with the Euclidean norm does not provide closed-form updates, the proposed method introduces the following auxiliary loss function:

\autoref{eqn:eucl_kmeans} as:
\begin{equation}
     J^+(\bm{\phi},\mathbf{r},\bm{\mu})=\sum_{i=1}^N\sum_{k=1}^K r_{ik}\Bigg(\frac{||\mathbf{z}_i-\bm{\mu}_k||_2^2}{2\phi_{ik}}+\frac{1}{2}\phi_{ik}\Bigg),
    \label{eqn:eucl_kmeans_aux}
\end{equation}
where $\bm{\phi}$ are the auxiliary variables, while in \cite{Nakis2022ARepresentations} they show how this auxiliary functions accounts for optimizing \autoref{eqn:eucl_kmeans}.

\subsection{HBDM Search}
The active set used in the hinge loss optimization is reduced significantly by using the solution obtained from the HBDM framework as described in \cite{Nakis2022ARepresentations} and adapted to binary graphs. This reduced active set is then optimized by the hinge loss with a margin of 0. A KDTree is used to query neighbors within the radius of \(\beta\) and this is used to update the active set and thus determine which nodes are reconstructed correctly. From \autoref{fig:hbdm-searches-missclassified-perc} we observe empirically that the HBDM has reduced the active set enough to optimize using the hinge loss and we further observe that the next search step hot started with the low-rank SVD of the previous full reconstruction also starts at a sufficiently small active set size.

\begin{figure}
    \centering
    
    \begin{subfigure}{\textwidth}
        \centering
        \includegraphics[width=1\textwidth]{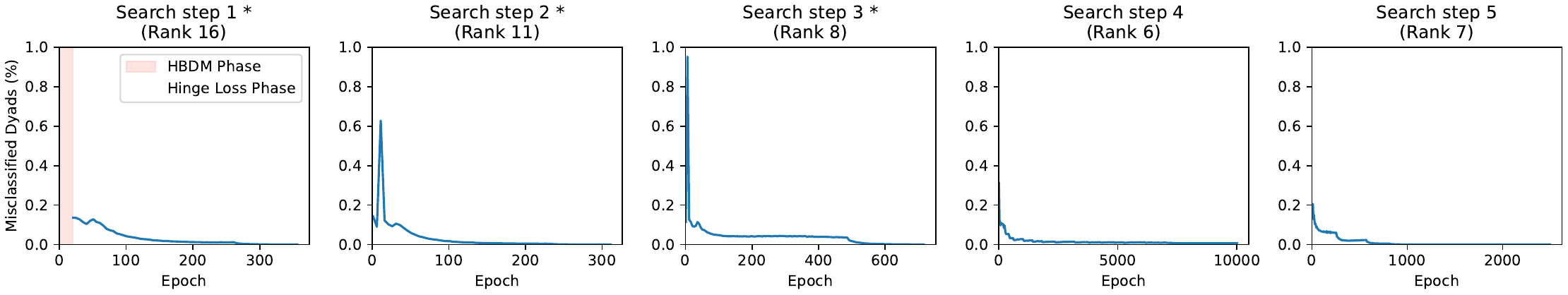} 
        \caption{Cora}
        \label{fig:hbdm-search-cora}
    \end{subfigure}
    \vspace{0.4cm} 
    \begin{subfigure}{\textwidth}
        \centering
        \includegraphics[width=1\textwidth]{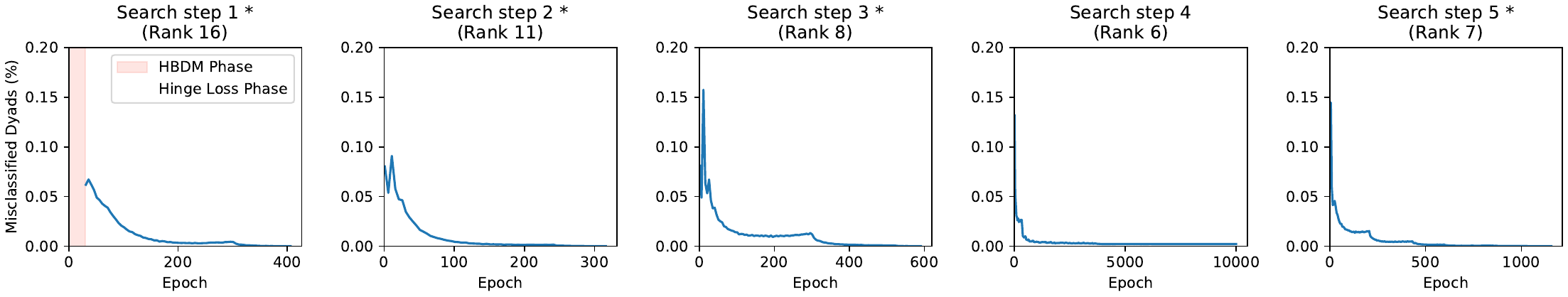} 
        \caption{Citeseer}
        \label{fig:hbdm-search-citeseer}
    \end{subfigure}
     \vspace{0.4cm}
    \begin{subfigure}{\textwidth}
        \centering
        \includegraphics[width=1\textwidth]{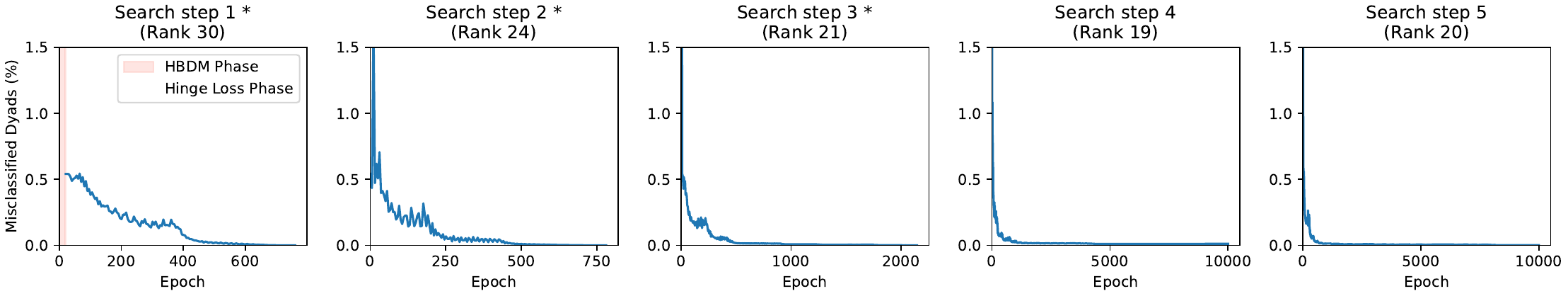} 
        \caption{Facebook}
        \label{fig:hbdm-search-facebook}
    \end{subfigure}
     \vspace{0.4cm}
    \begin{subfigure}{\textwidth}
        \centering
        \includegraphics[width=1\textwidth]{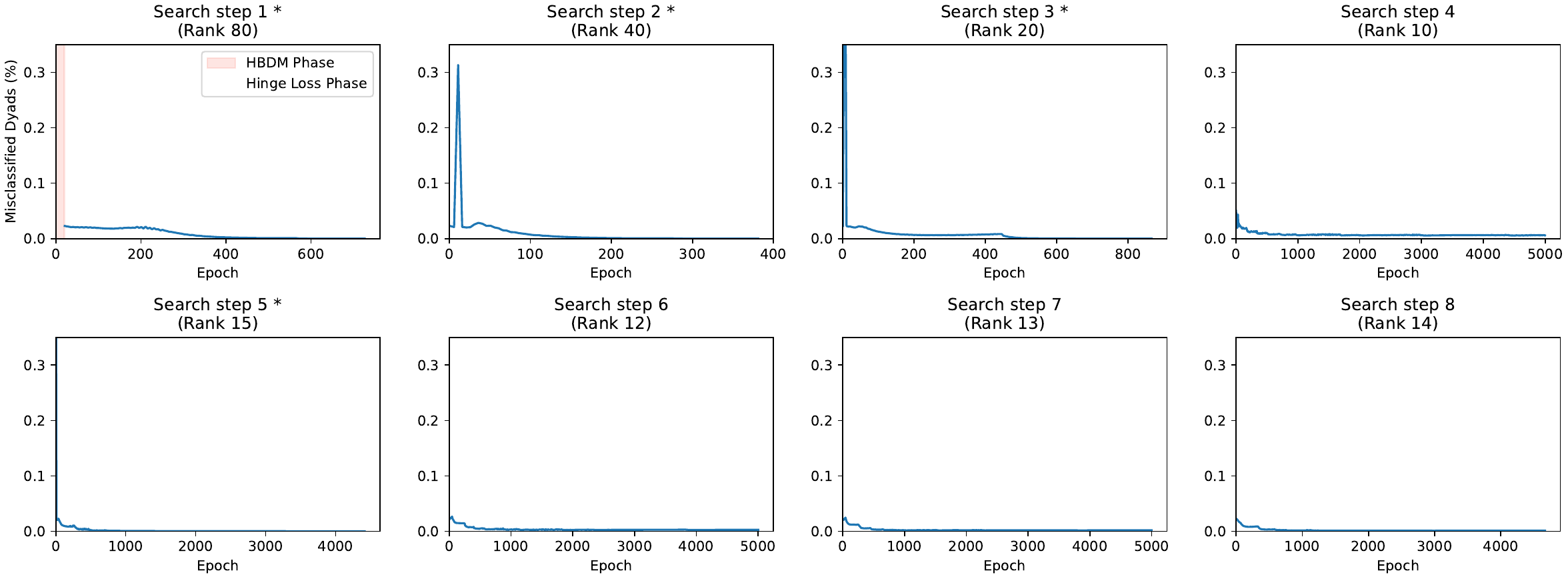} 
        \caption{Pubmed}
        \label{fig:hbdm-search-pubmed}
    \end{subfigure}
     \vspace{0.4cm}
    \begin{subfigure}{\textwidth}
        \centering
        \includegraphics[width=1\textwidth]{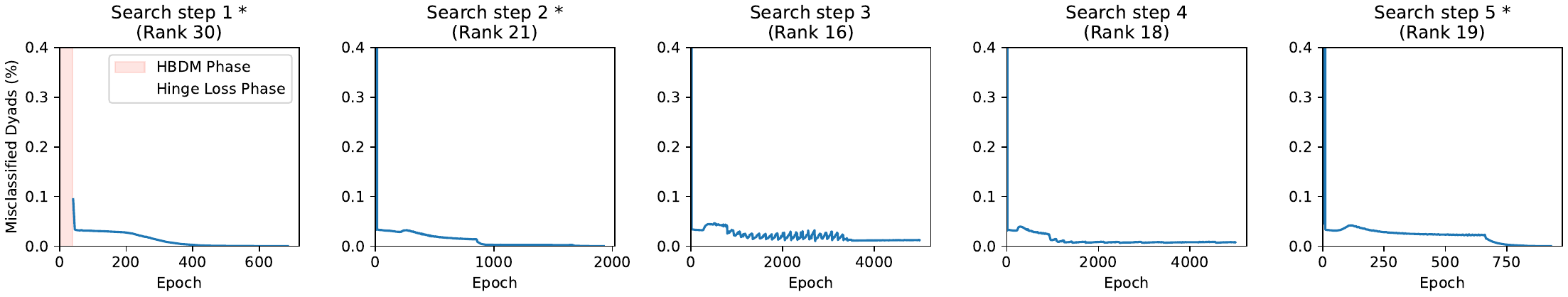} 
        \caption{p2p-Gnutella04}
        \label{fig:hbdm-search-gnutella}
    \end{subfigure}
    
    \caption{Percentage of misclassified dyads for different steps of the search algorithm using \textsc{HBDM}. The run is marked by a \(*\) if an exact embedding was achieved.}
    \label{fig:hbdm-searches-missclassified-perc}
\end{figure}

\subsection{Compute resources used}
The experiments on the smaller graphs, i.e., all the graphs except com-amazon and roadNetPA can all be run on consumer-grade hardware. More specifically we used a 2023 Macbook Pro with an M3 Pro chip. For the larger datasets we used a high-performance compute cluster equipped with an Nvidia A100 GPU and multiple Intel Xeon Gold CPUs with either 16 or 24 cores.

\subsection{Additional Statistics of the reconstructed graph for embedding dimensions below the optimal $D^*$:}\label{sup:D_star} We provide pairwise comparisons between the optimal dimension $(D^*)$ and lower dimensions for (i) Degree Distribution, (ii) Clustering Coefficient Distribution, and (iii) Shortest Path Length Distribution. Figure \autoref{fig:sup_graph_stats} shows results for the \text{Cora} dataset while Figure \autoref{fig:sup_graph_stats2} the results for the \text{Facebook} dataset, highlighting increasing deviations in distributions as dimensions move further below $(D^*)$.

\subsection{Perfectly Reconstructing Random Networks:} We investigate whether our proposed approach can perfectly reconstruct random networks with a ground-truth latent dimension, for both Euclidean and Hyperbolic geometries. To generate random graphs, we sample $D$-dimensional random vectors uniformly within a ball of radius $R$ (with $R=1$ for the Poincaré disk model) and assign a scalar bias to ensure realistic network sparsity. Links are then generated via Bernoulli sampling. Our findings indicate that stochastic network generation prevents the recovery of exact embedding dimensions (EED), even in high-dimensional settings. However, by making network generation deterministic — linking nodes when $d_{ij} \leq \beta$ — perfect reconstruction is achieved. We set the number of nodes for all networks as $N=1000$, and $D=3,8$. The results are summarized in \autoref{tab:synthetic_graphs}.

\begin{table}[h!]
    \centering
    \caption{Comparison between EED found by the hyperbolic model and the Euclidean model for synthetic networks generated according to 3- and 8-dimensional hyperbolic and Euclidean geometric structures, respectively.}
    \begin{tabular}{l|cc}
    \toprule
    Synthetic Dataset & \(D^*\) (Hyperbolic Model) & \(D^*\) (Euclidean Model) \\
    \midrule
    \texttt{Hyperbolic 3D} & 5 & 4 \\
    \texttt{Hyperbolic 8D} & 9 & 9 \\
    \texttt{Euclidean ~3D} & 3 & 3 \\
    \texttt{Euclidean ~8D} & 8 & 8 \\
    \bottomrule
    \end{tabular}
    \label{tab:synthetic_graphs}
\end{table}

\begin{figure}[t]
\centering
\begin{minipage}{1\textwidth} 
    \centering
    \begin{subfigure}{0.32\textwidth}  
        \centering
        \includegraphics[width=\linewidth]{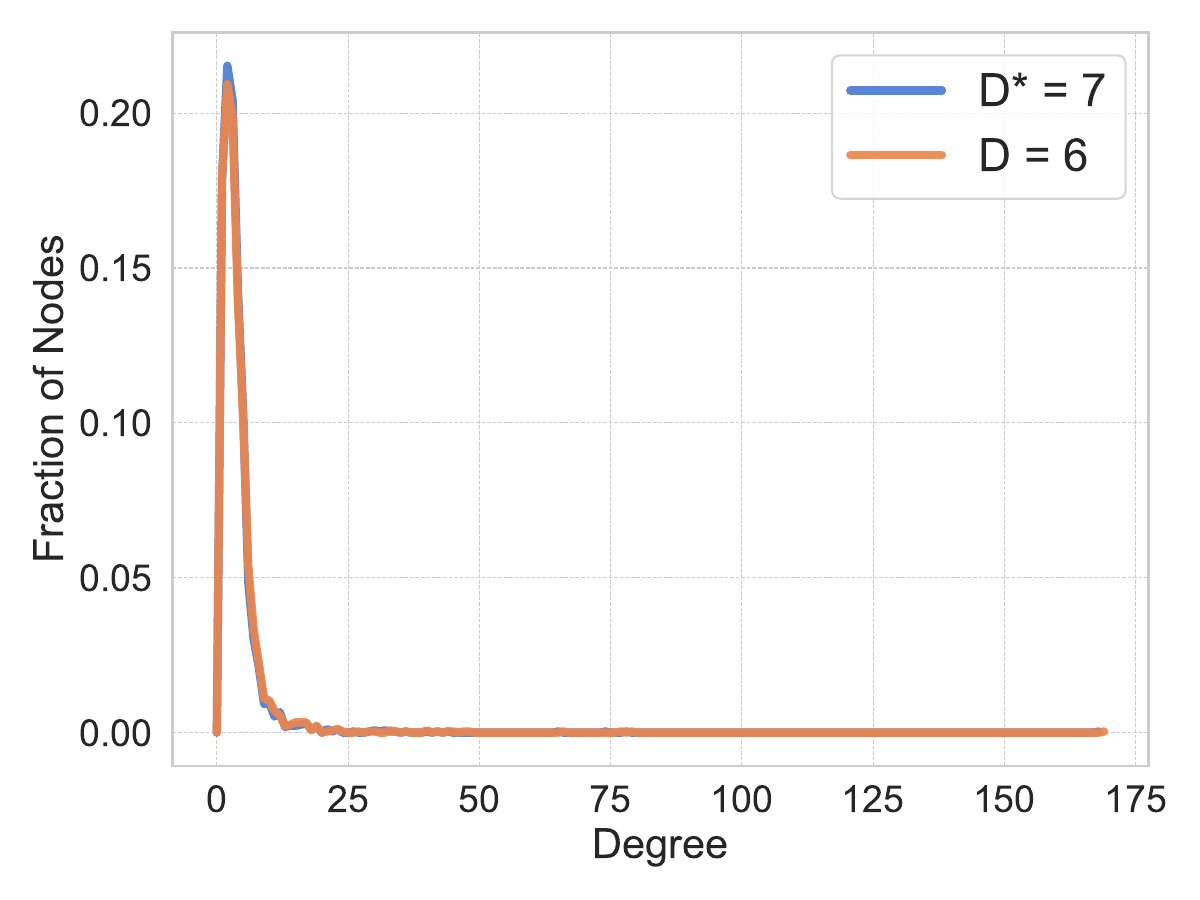}
        \caption{Degree Distribution Comparison $D^*=7$ vs $D=6$ }
    \end{subfigure}
    \hfill
    \begin{subfigure}{0.32\textwidth}  
        \centering
\includegraphics[width=\linewidth]{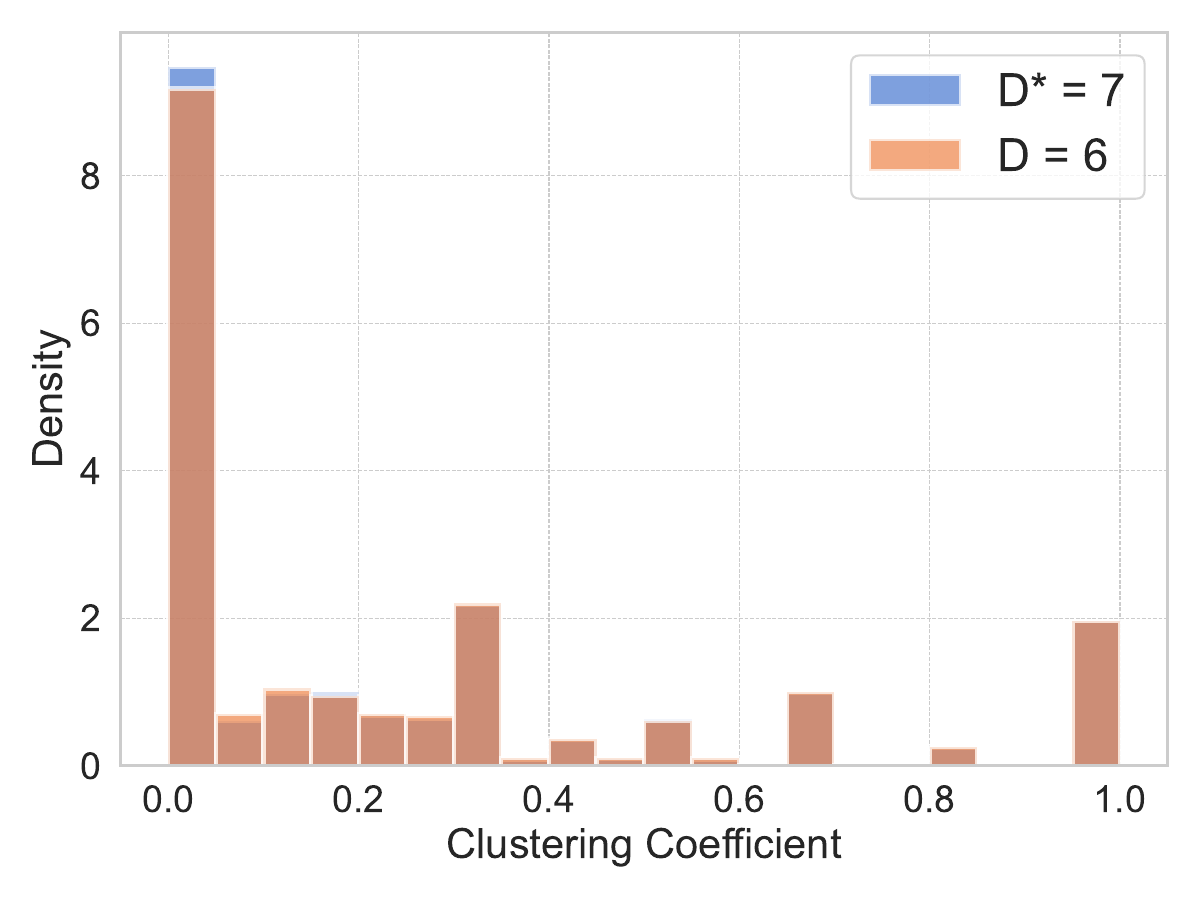}
        \caption{Clustering Coefficient Distribution Comparison $D^*=7$ vs $D=6$ }
    \end{subfigure}
    \hfill
    \begin{subfigure}{0.32\textwidth}  
        \centering
\includegraphics[width=\linewidth]{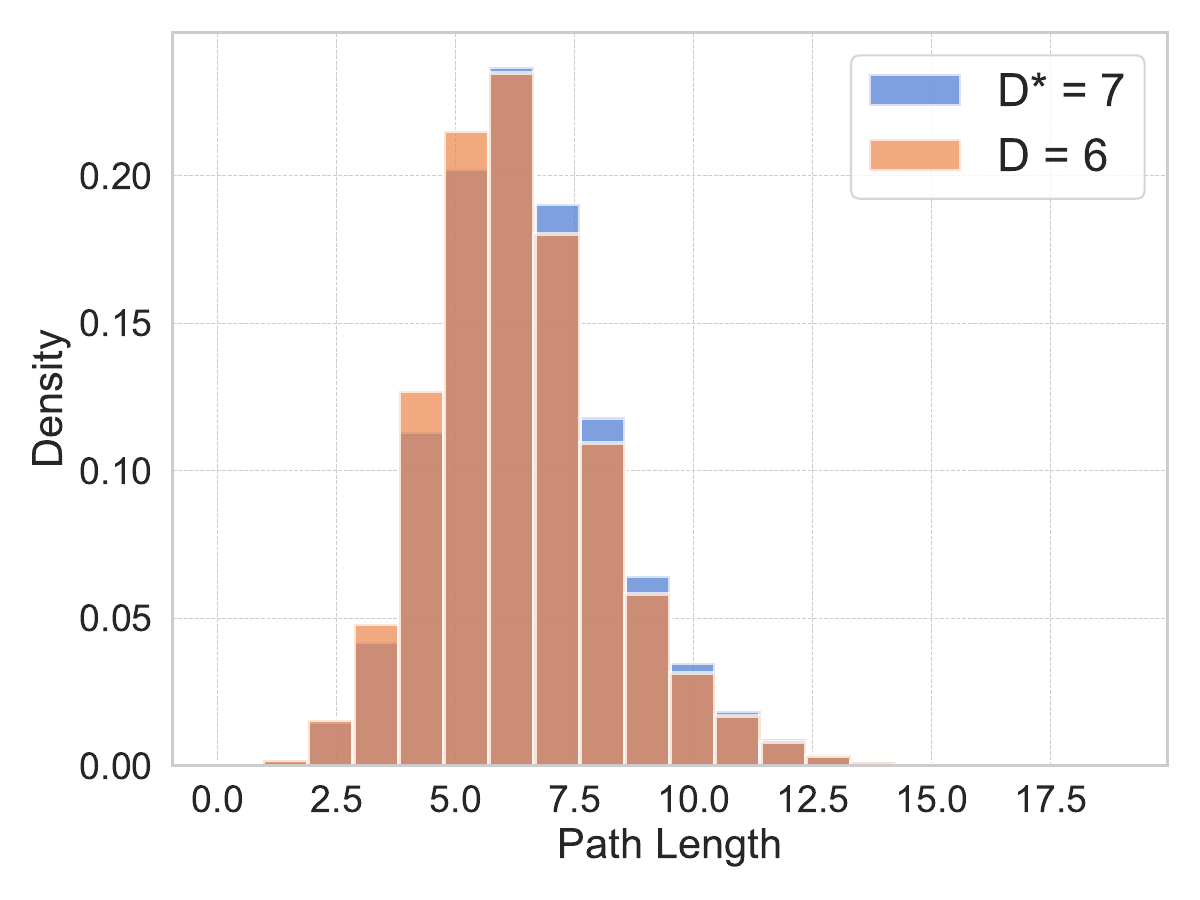}  
        \caption{Shortest Path Length Comparison $D^*=7$ vs $D=6$  }
    \end{subfigure}
    \hfill
    \centering
    \begin{subfigure}{0.32\textwidth}  
        \centering
        \includegraphics[width=\linewidth]{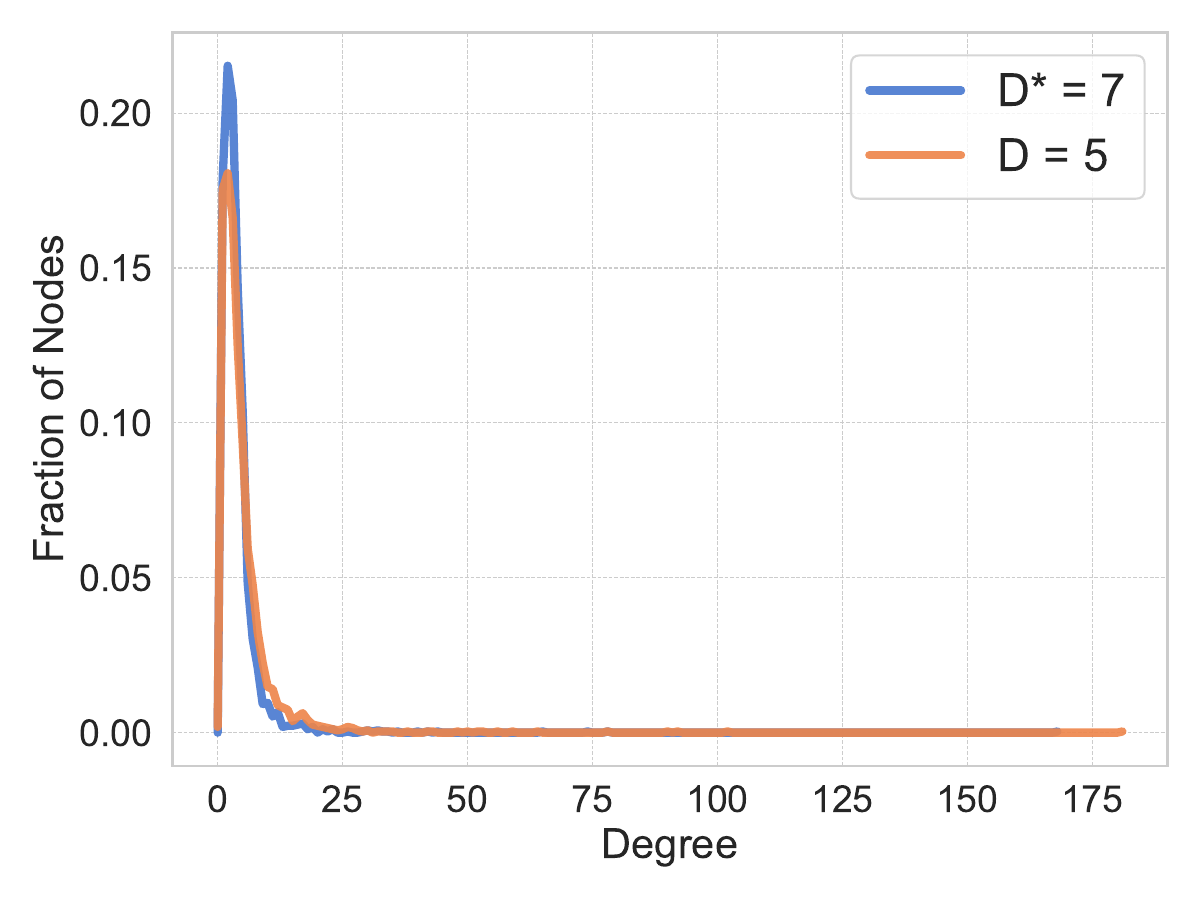}
        \caption{Degree Distribution Comparison $D^*=7$ vs $D=5$ }
    \end{subfigure}
    \hfill
    \begin{subfigure}{0.32\textwidth}  
        \centering
\includegraphics[width=\linewidth]{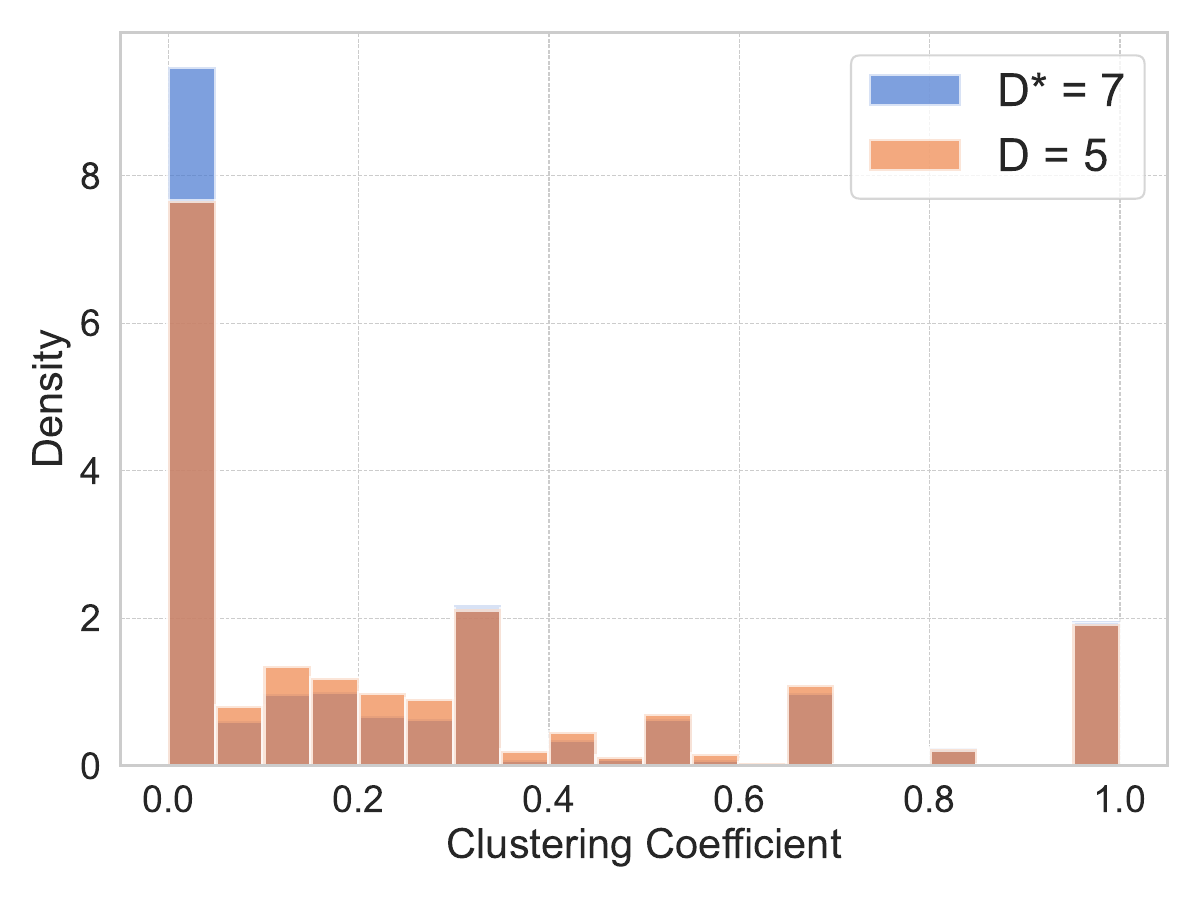}
        \caption{Clustering Coefficient Distribution Comparison $D^*=7$ vs $D=5$ }
    \end{subfigure}
    \hfill
    \begin{subfigure}{0.32\textwidth}  
        \centering
\includegraphics[width=\linewidth]{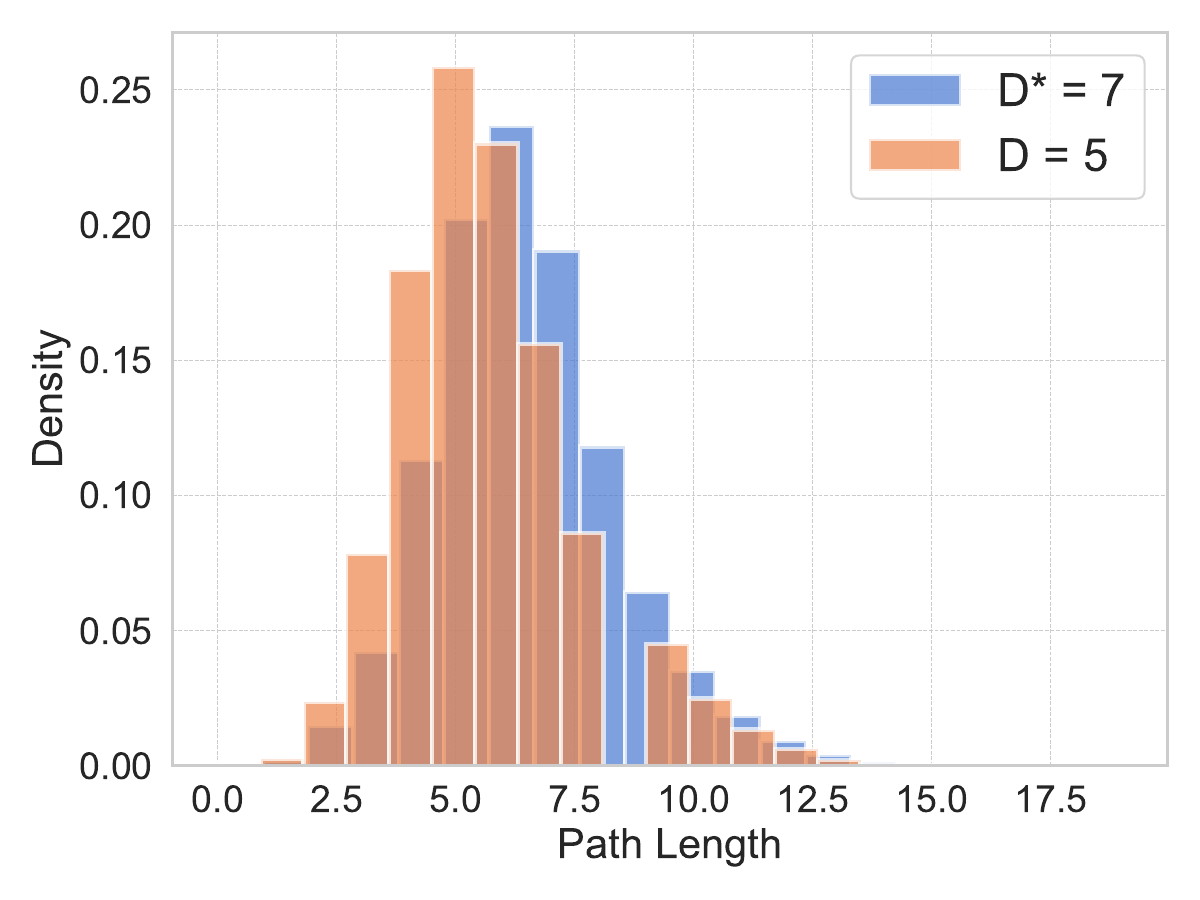}  
        \caption{Shortest Path Length Comparison $D^*=7$ vs $D=5$  }
    \end{subfigure}

    \hfill
    \centering
    \begin{subfigure}{0.32\textwidth}  
        \centering
        \includegraphics[width=\linewidth]{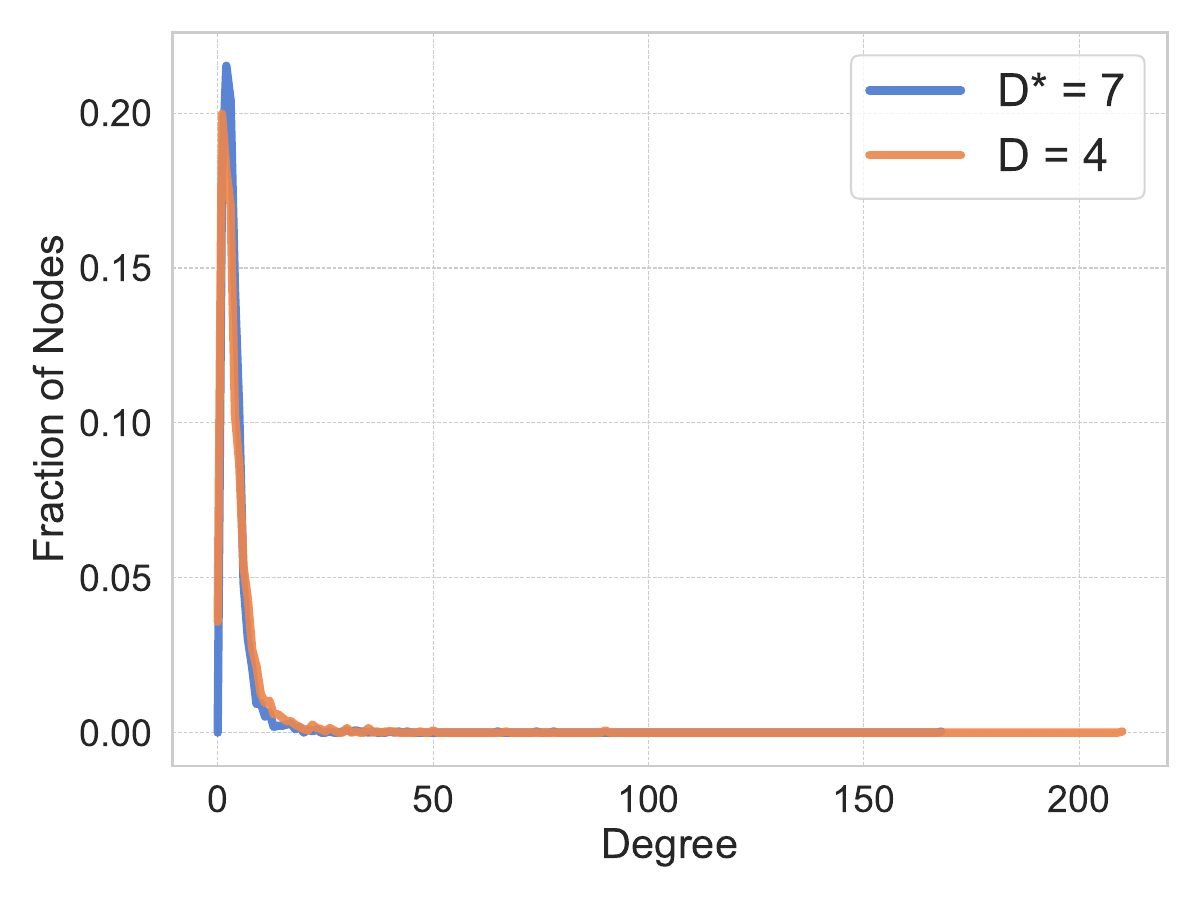}
        \caption{Degree Distribution Comparison $D^*=7$ vs $D=4$ }
    \end{subfigure}
    \hfill
    \begin{subfigure}{0.32\textwidth}  
        \centering
\includegraphics[width=\linewidth]{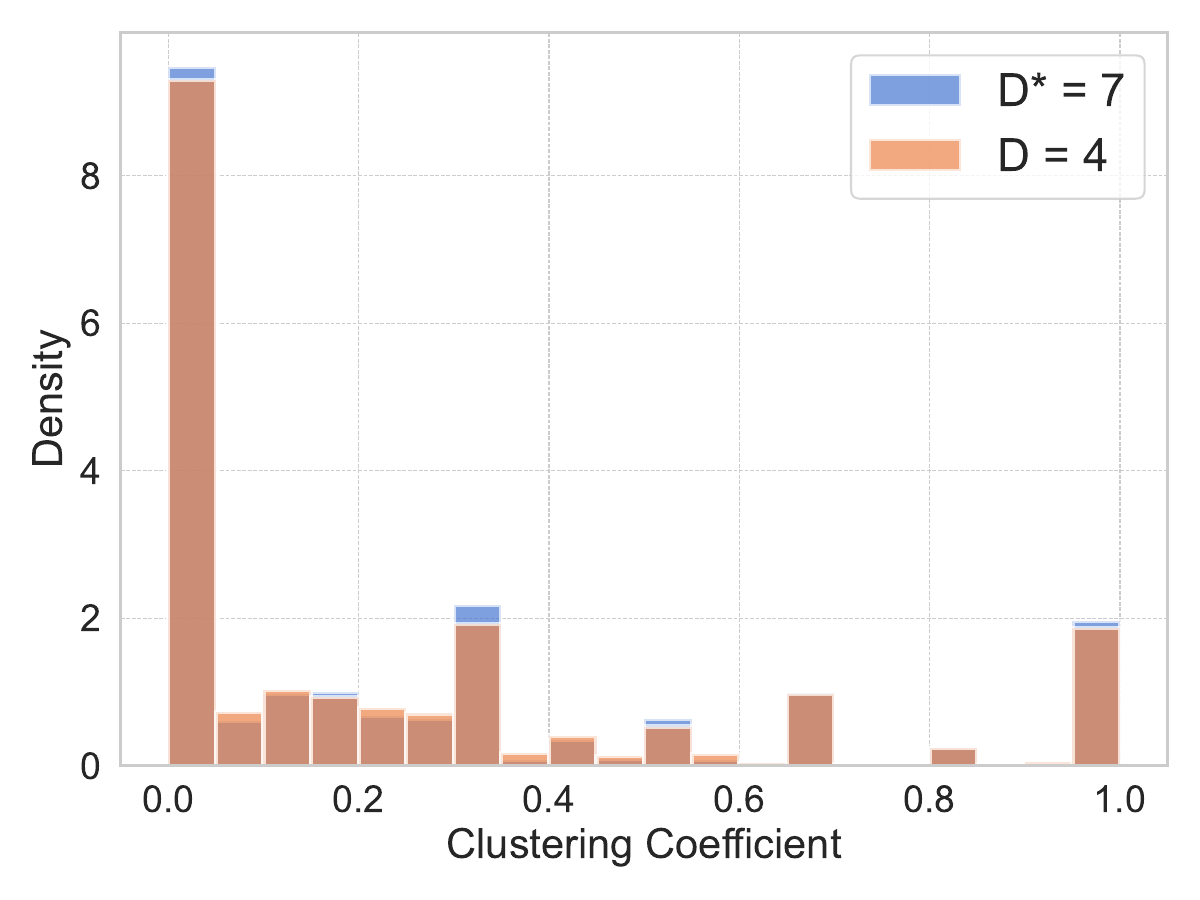}
        \caption{Clustering Coefficient Distribution Comparison $D^*=7$ vs $D=4$ }
    \end{subfigure}
    \hfill
    \begin{subfigure}{0.32\textwidth}  
        \centering
\includegraphics[width=\linewidth]{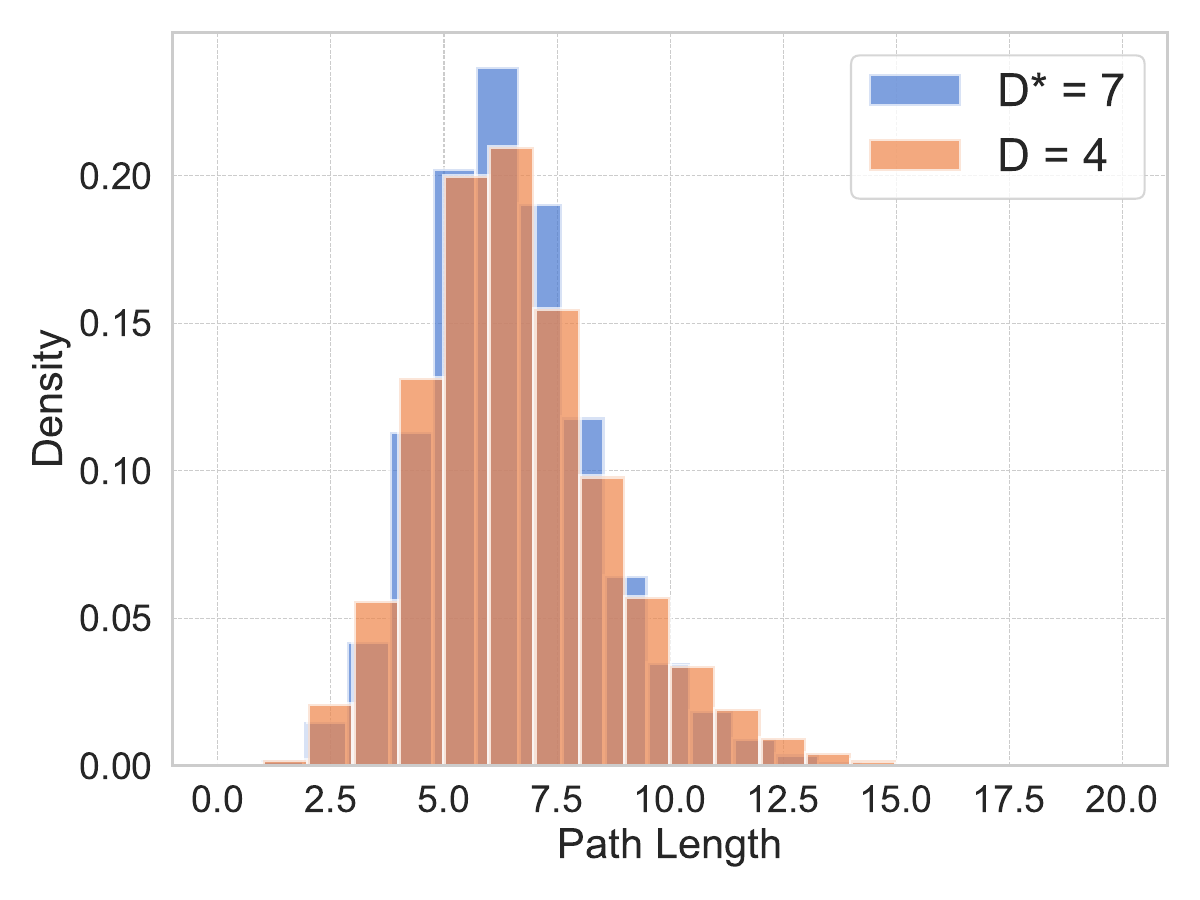}  
        \caption{Shortest Path Length Comparison $D^*=7$ vs $D=4$  }
    \end{subfigure}

        \hfill
    \centering
    \begin{subfigure}{0.32\textwidth}  
        \centering
        \includegraphics[width=\linewidth]{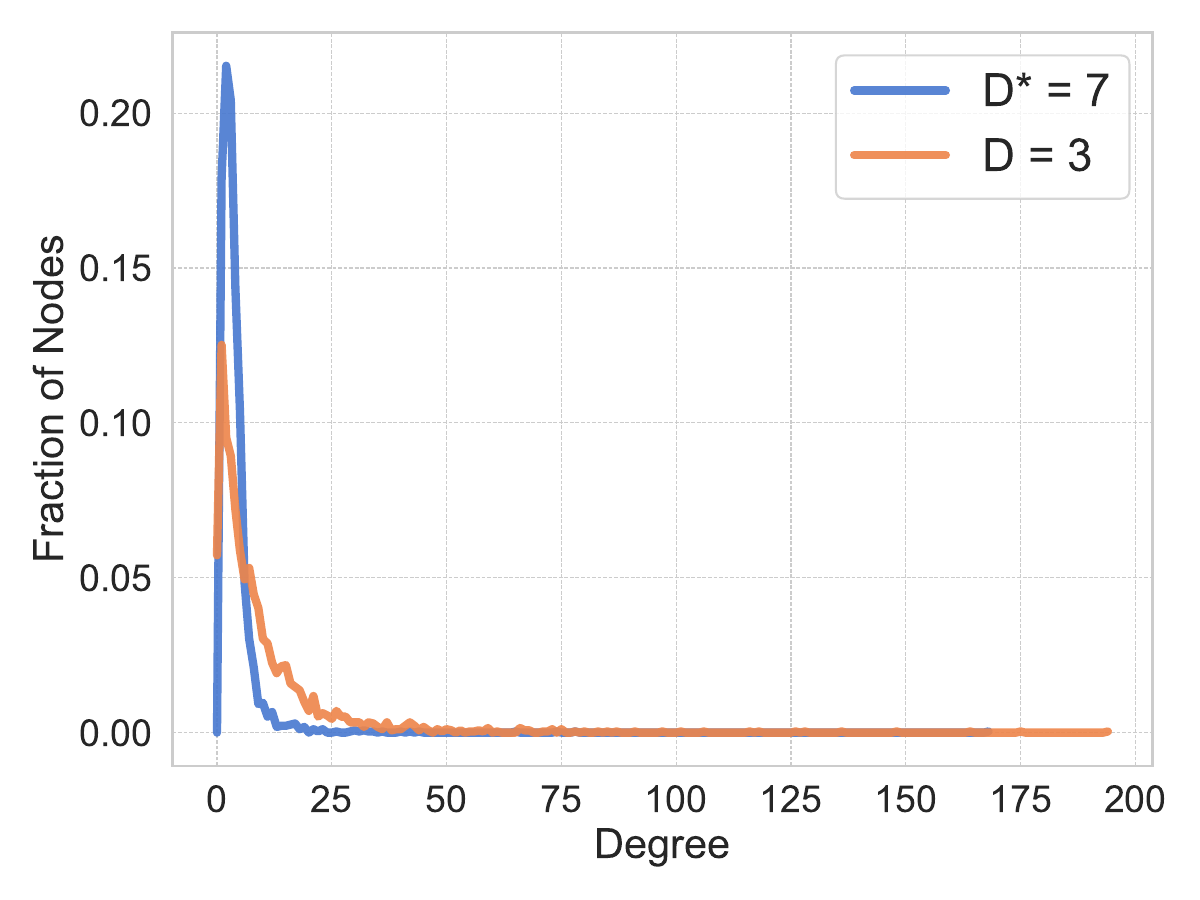}
        \caption{Degree Distribution Comparison $D^*=7$ vs $D=3$ }
    \end{subfigure}
    \hfill
    \begin{subfigure}{0.32\textwidth}  
        \centering
\includegraphics[width=\linewidth]{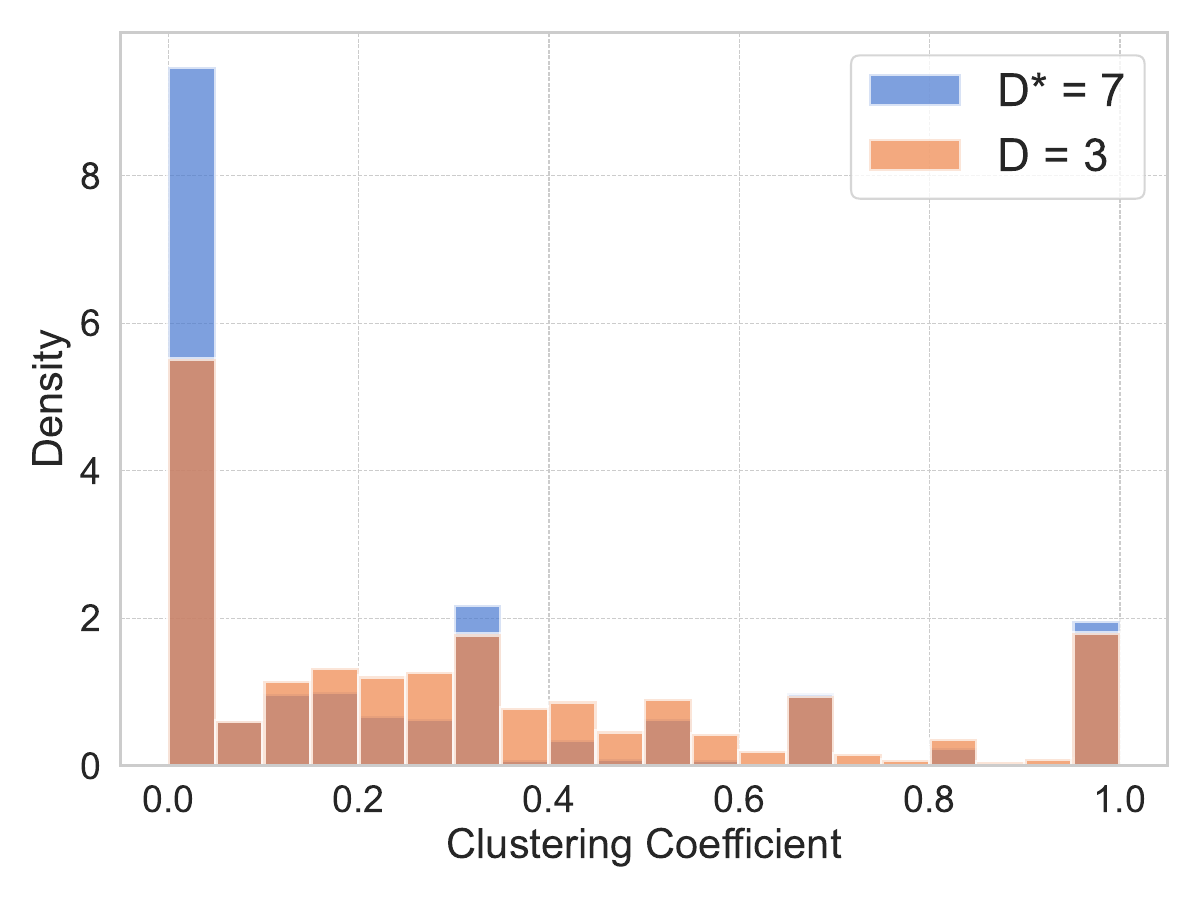}
        \caption{Clustering Coefficient Distribution Comparison $D^*=7$ vs $D=3$ }
    \end{subfigure}
    \hfill
    \begin{subfigure}{0.32\textwidth}  
        \centering
\includegraphics[width=\linewidth]{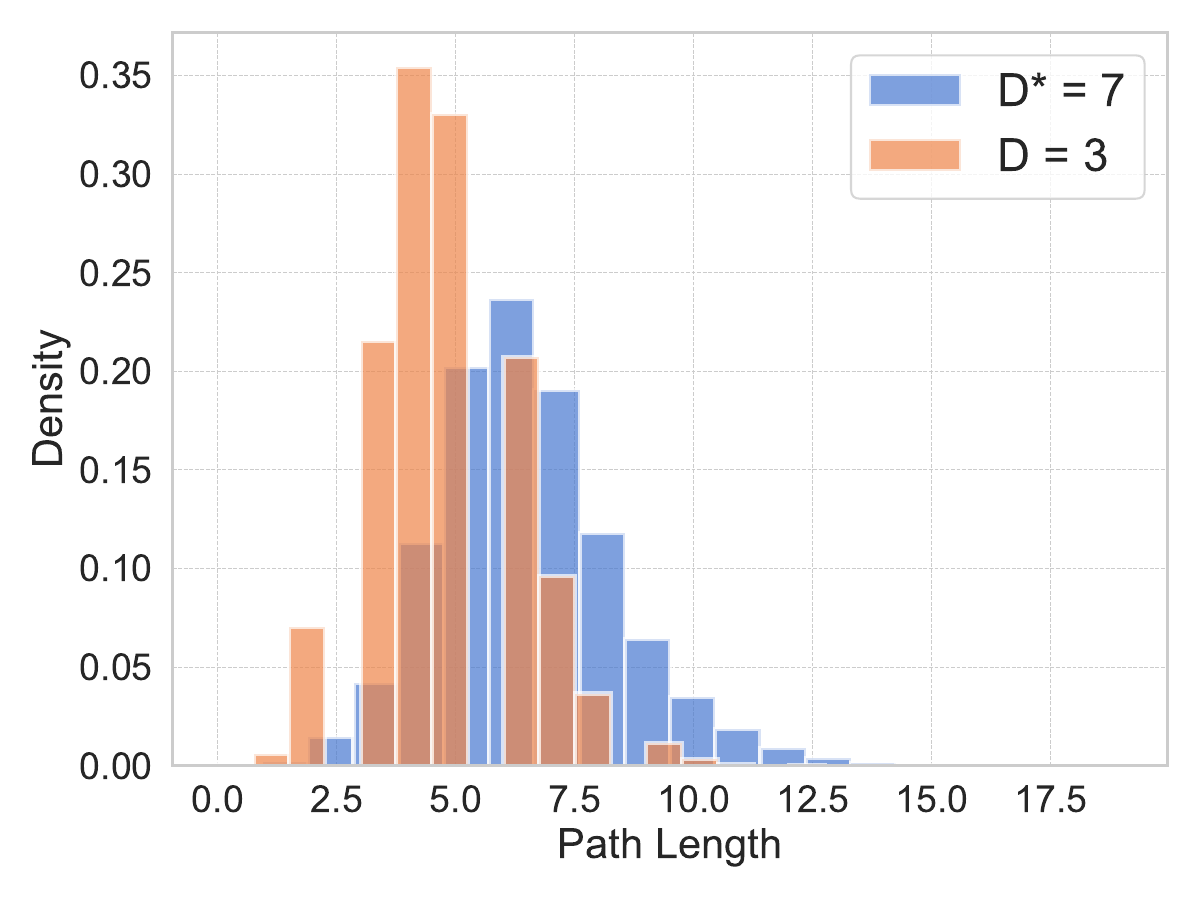}  
        \caption{Shortest Path Length Comparison $D^*=7$ vs $D=3$  }
    \end{subfigure}
        \hfill
    \centering
    \begin{subfigure}{0.32\textwidth}  
        \centering
        \includegraphics[width=\linewidth]{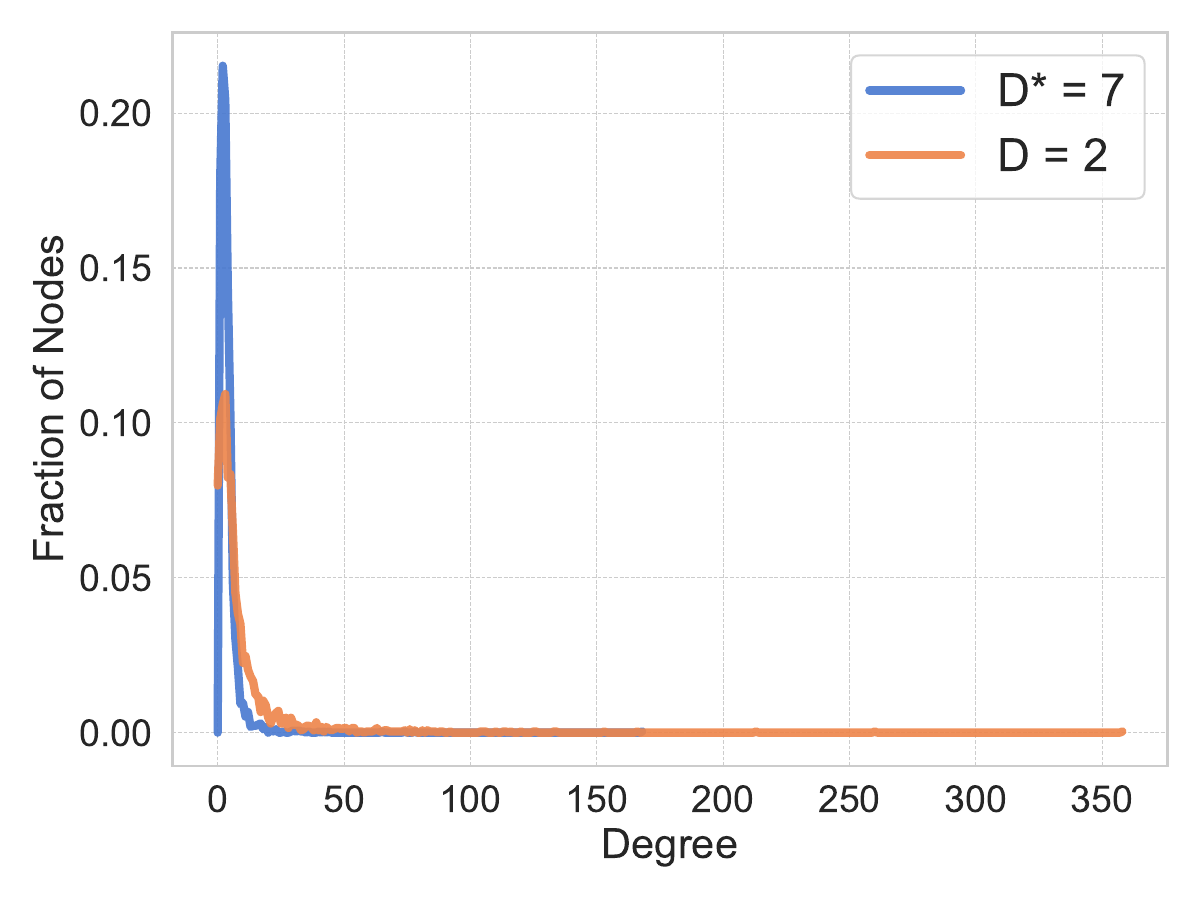}
        \caption{Degree Distribution Comparison $D^*=7$ vs $D=2$ }
    \end{subfigure}
    \hfill
    \begin{subfigure}{0.32\textwidth}  
        \centering
\includegraphics[width=\linewidth]{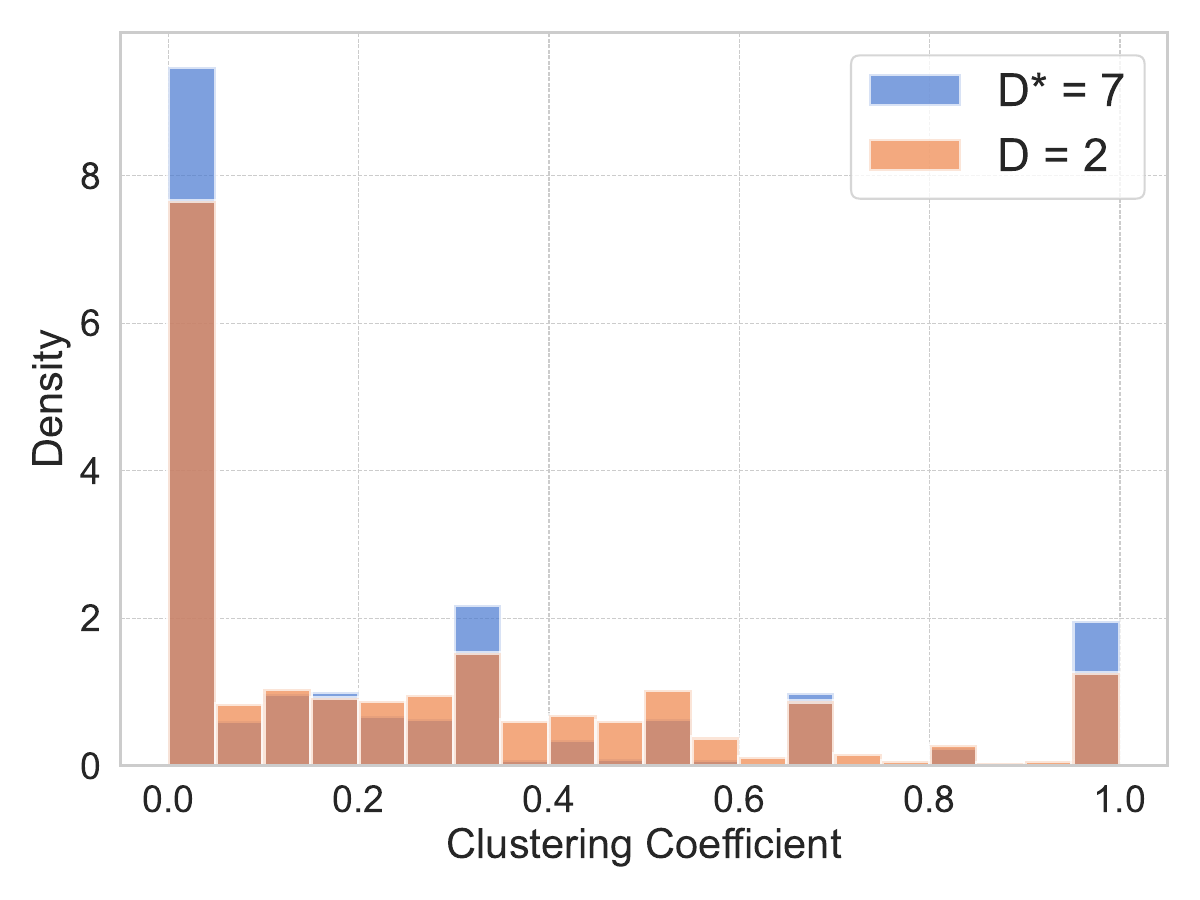}
        \caption{Clustering Coefficient Distribution Comparison $D^*=7$ vs $D=2$ }
    \end{subfigure}
    \hfill
    \begin{subfigure}{0.32\textwidth}  
        \centering
\includegraphics[width=\linewidth]{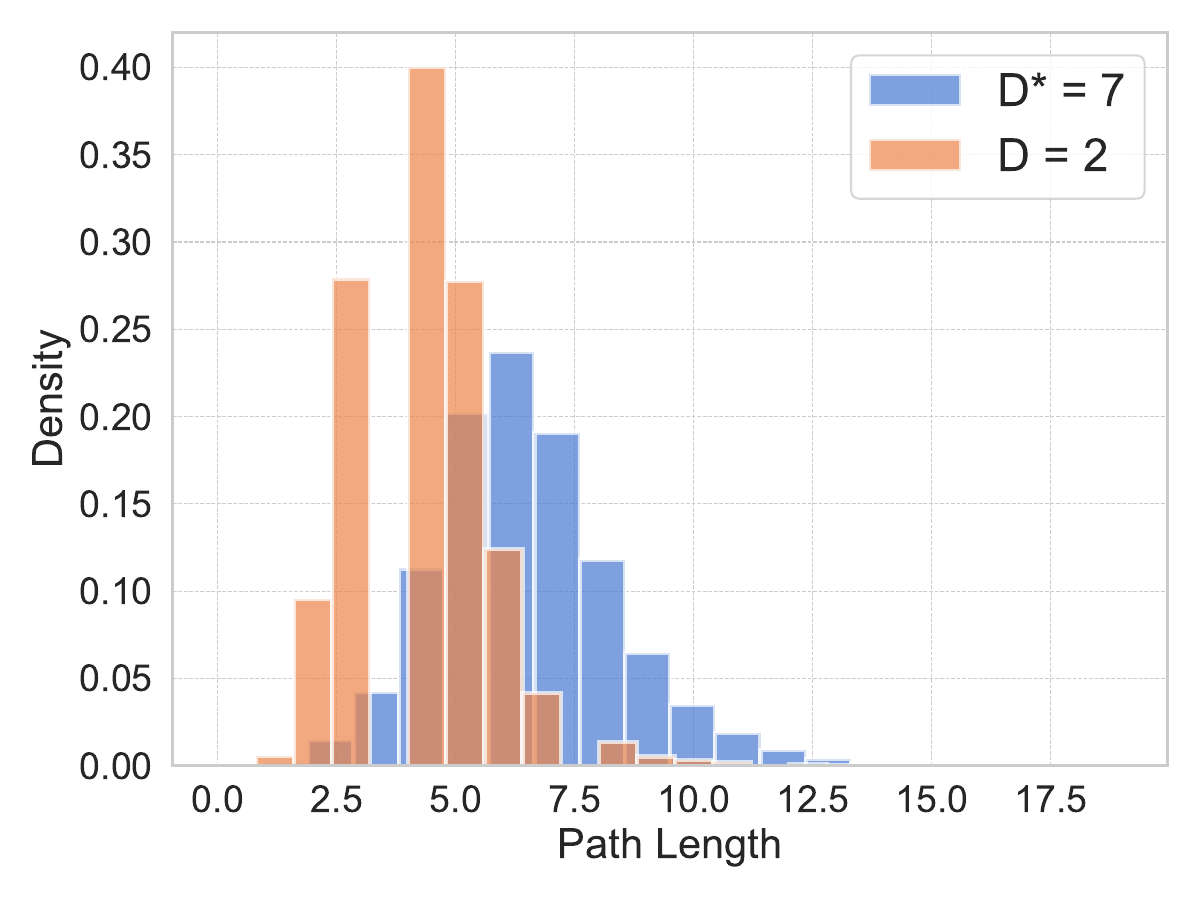}  
        \caption{Shortest Path Length Comparison $D^*=7$ vs $D=2$  }
    \end{subfigure}

\end{minipage}
\caption{Additional pairwise graph statistics comparison for the reconstructed graph as the latent dimension decreases from the exact embedding dimension $(D^*$) for the \text{Cora} network. $(D^*)$ ensures perfect reconstruction. }
\label{fig:sup_graph_stats}
\end{figure}

\begin{figure}[t]
\centering
\begin{minipage}{1\textwidth} 
    \centering
    \begin{subfigure}{0.32\textwidth}  
        \centering
        \includegraphics[width=\linewidth]{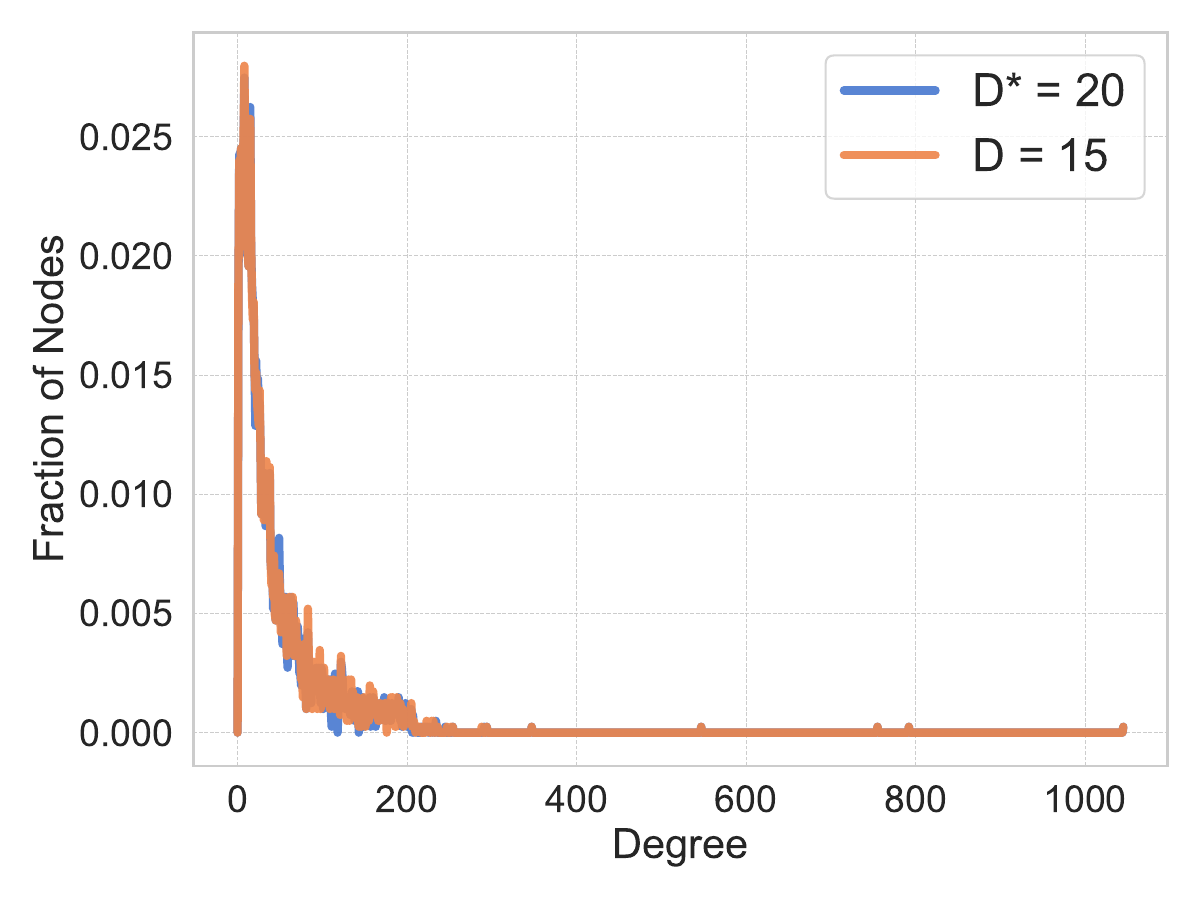}
        \caption{Degree Distribution Comparison $D^*=20$ vs $D=15$ }
    \end{subfigure}
    \hfill
    \begin{subfigure}{0.32\textwidth}  
        \centering
\includegraphics[width=\linewidth]{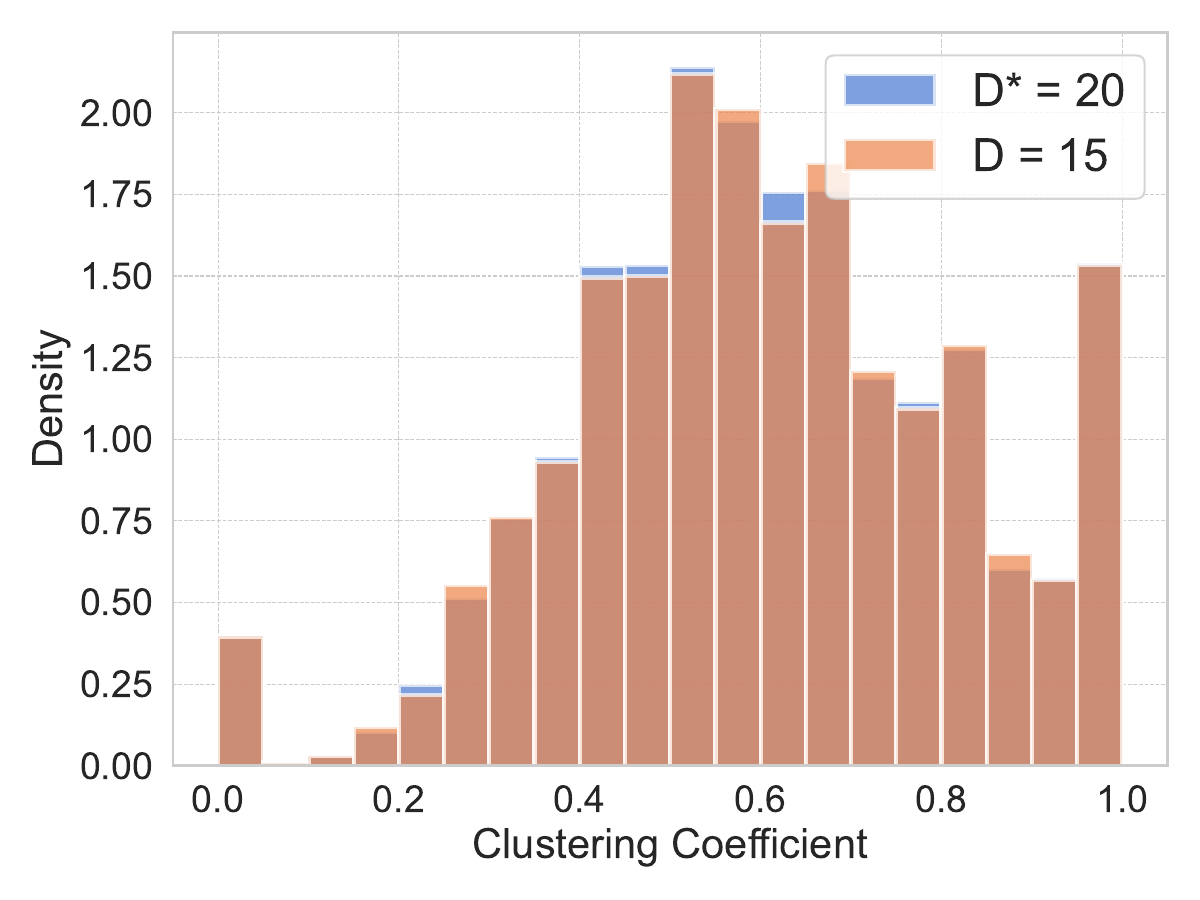}
        \caption{Clustering Coeff. Distribution Comparison $D^*=20$ vs $D=15$ }
    \end{subfigure}
    \hfill
    \begin{subfigure}{0.32\textwidth}  
        \centering
\includegraphics[width=\linewidth]{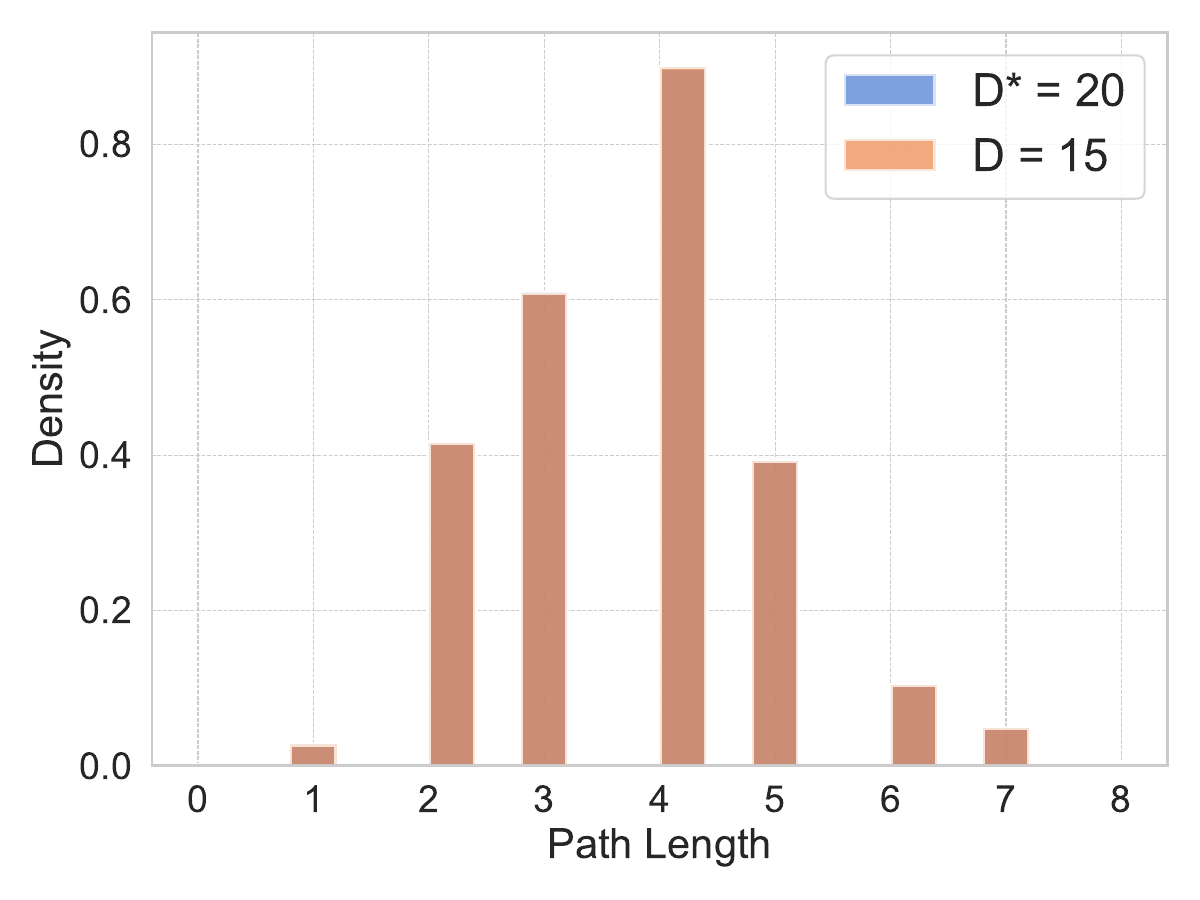}  
        \caption{Shortest Path Length Comparison $D^*=20$ vs $D=15$  }
    \end{subfigure}
    \hfill
    \centering
    \begin{subfigure}{0.32\textwidth}  
        \centering
        \includegraphics[width=\linewidth]{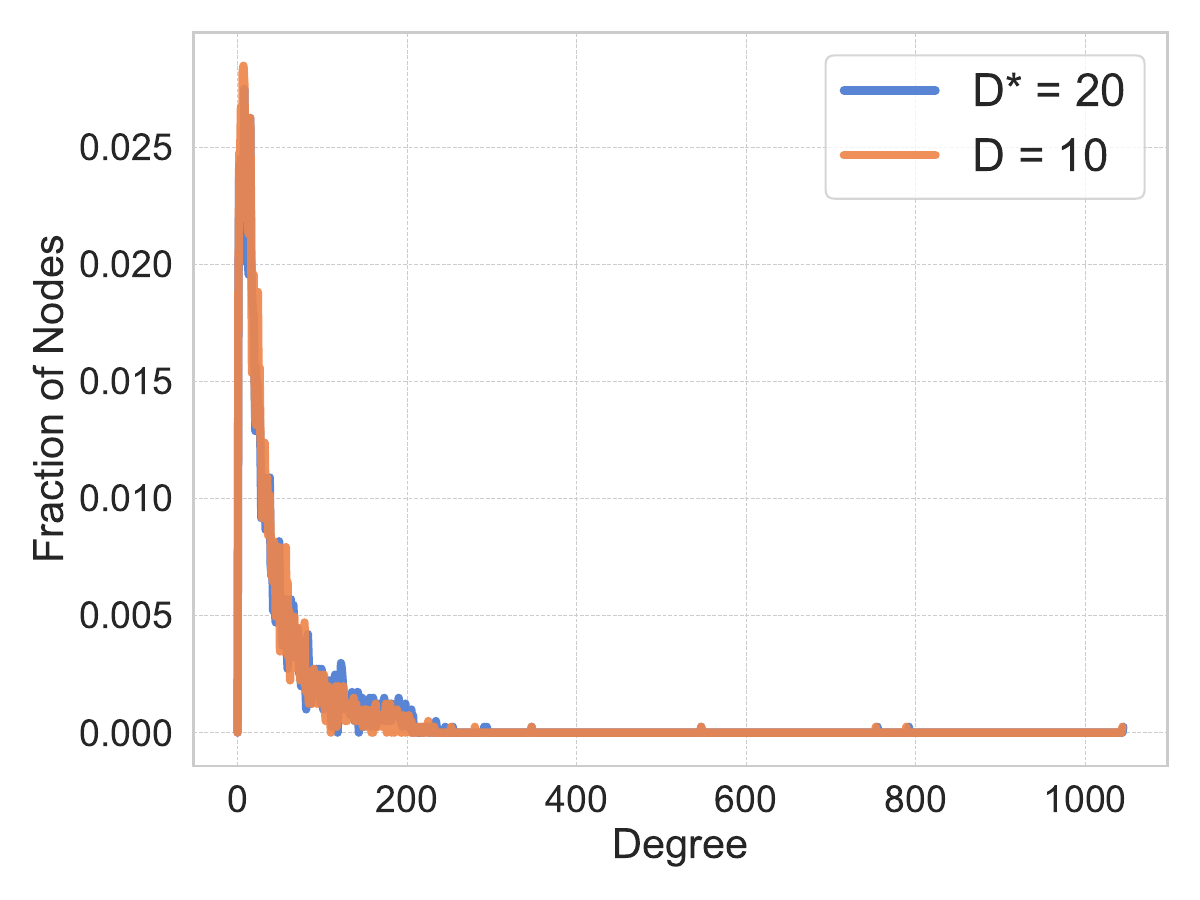}
        \caption{Degree Distribution Comparison $D^*=20$ vs $D=10$ }
    \end{subfigure}
    \hfill
    \begin{subfigure}{0.32\textwidth}  
        \centering
\includegraphics[width=\linewidth]{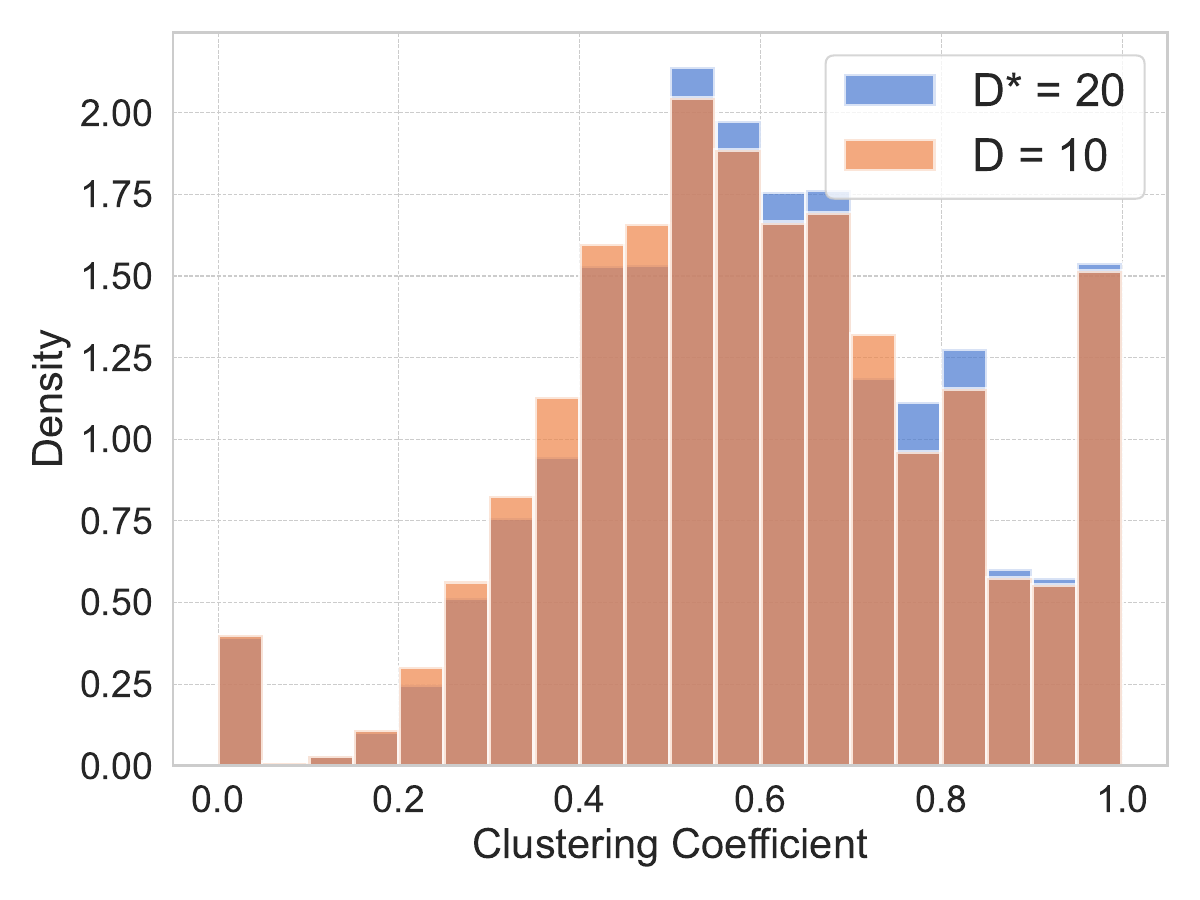}
        \caption{Clustering Coeff. Distribution Comparison $D^*=20$ vs $D=10$ }
    \end{subfigure}
    \hfill
    \begin{subfigure}{0.32\textwidth}  
        \centering
\includegraphics[width=\linewidth]{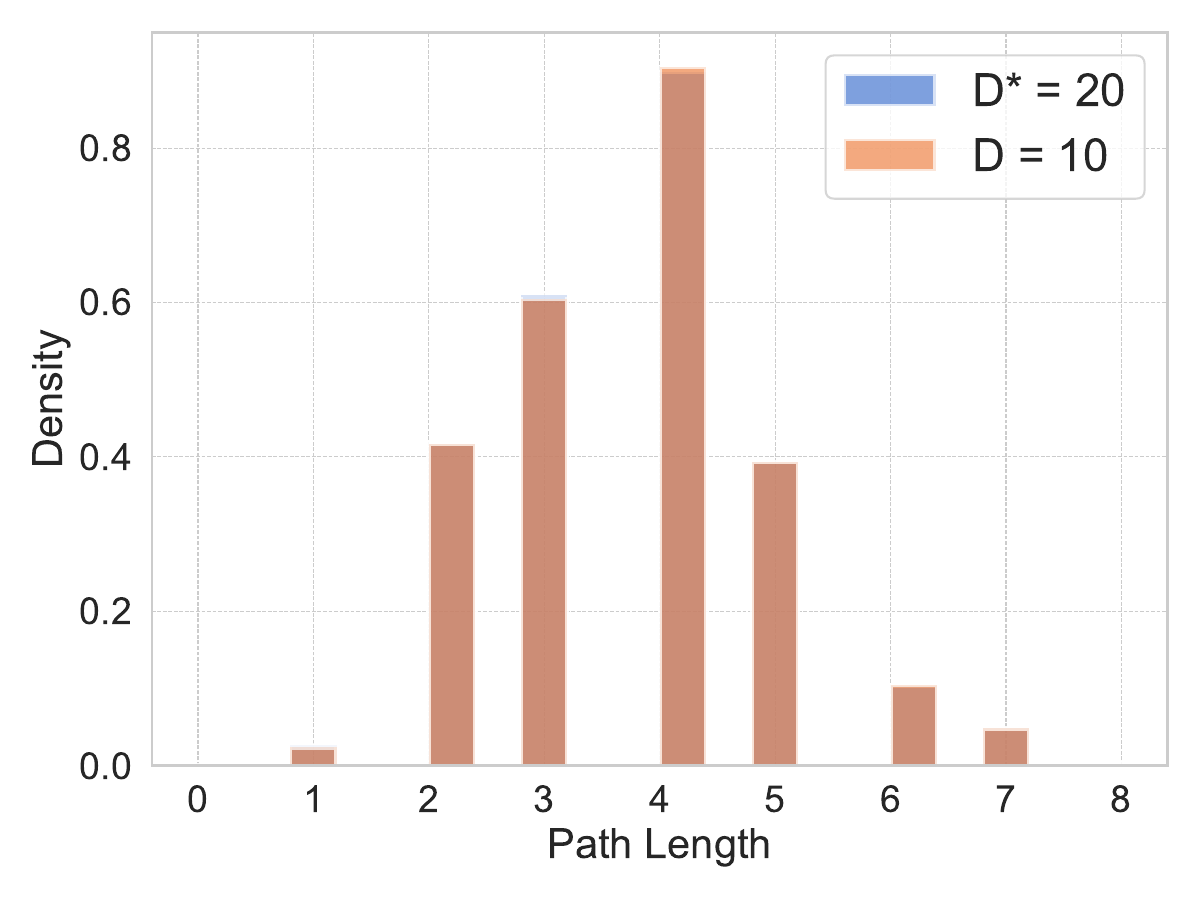}  
        \caption{Shortest Path Length Comparison $D^*=20$ vs $D=10$  }
    \end{subfigure}

    \hfill
    \centering
    \begin{subfigure}{0.32\textwidth}  
        \centering
        \includegraphics[width=\linewidth]{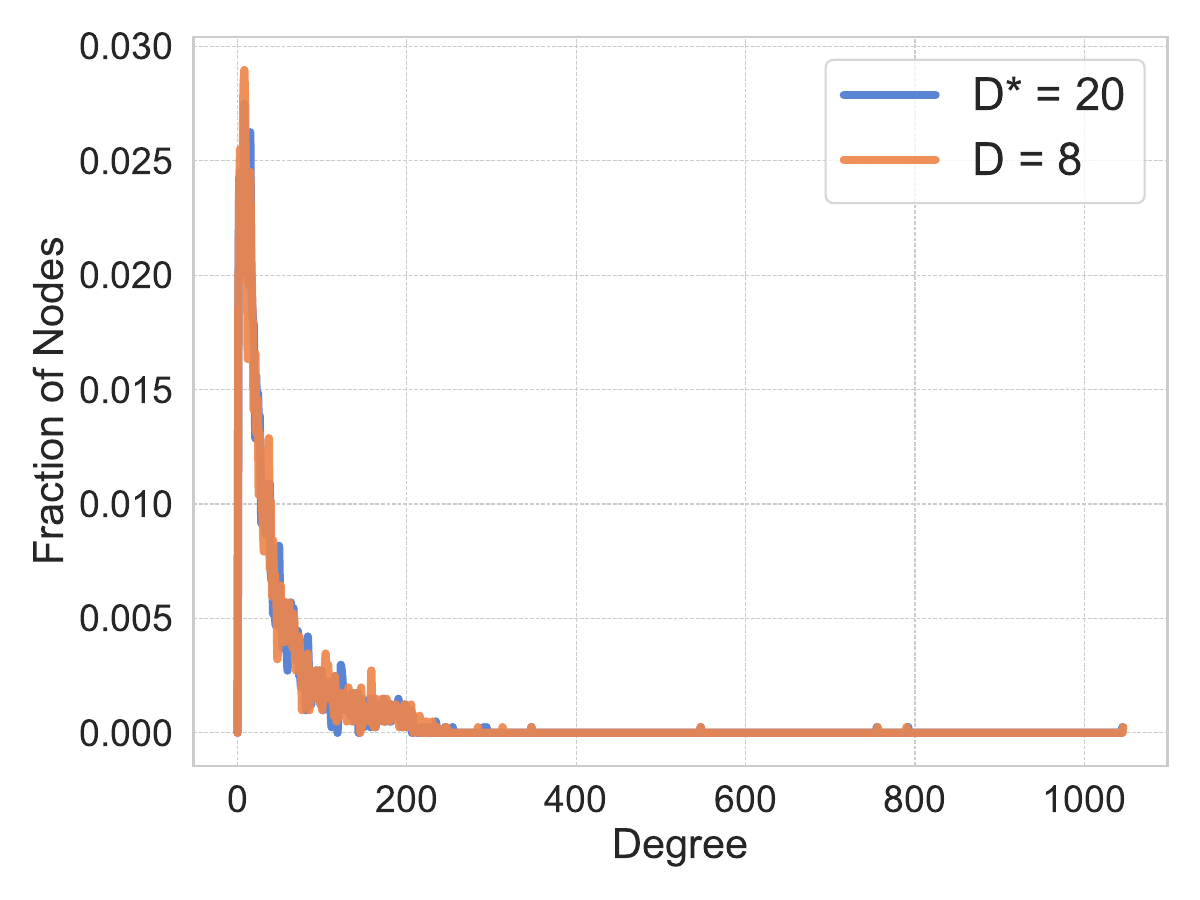}
        \caption{Degree Distribution Comparison $D^*=20$ vs $D=8$ }
    \end{subfigure}
    \hfill
    \begin{subfigure}{0.32\textwidth}  
        \centering
\includegraphics[width=\linewidth]{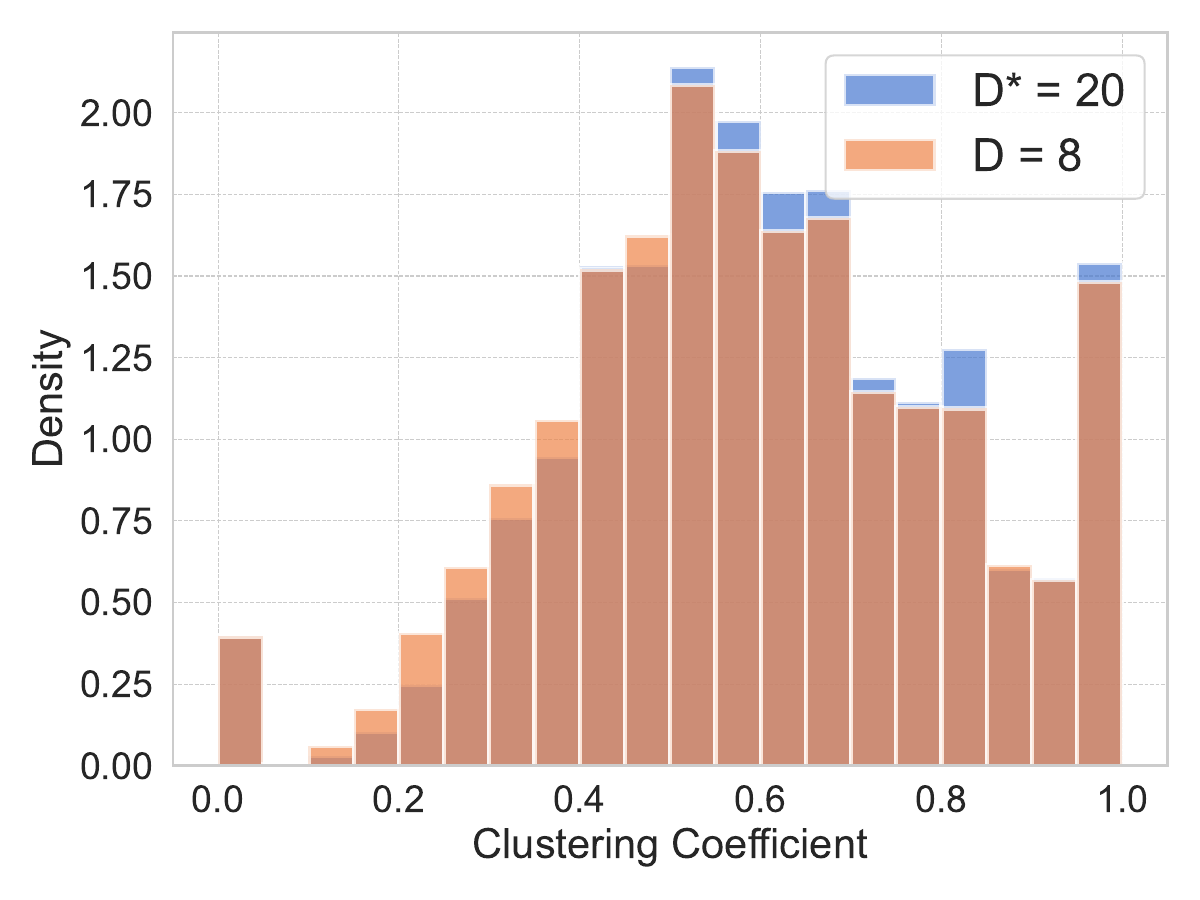}
        \caption{Clustering Coeff. Distribution Comparison $D^*=20$ vs $D=8$ }
    \end{subfigure}
    \hfill
    \begin{subfigure}{0.32\textwidth}  
        \centering
\includegraphics[width=\linewidth]{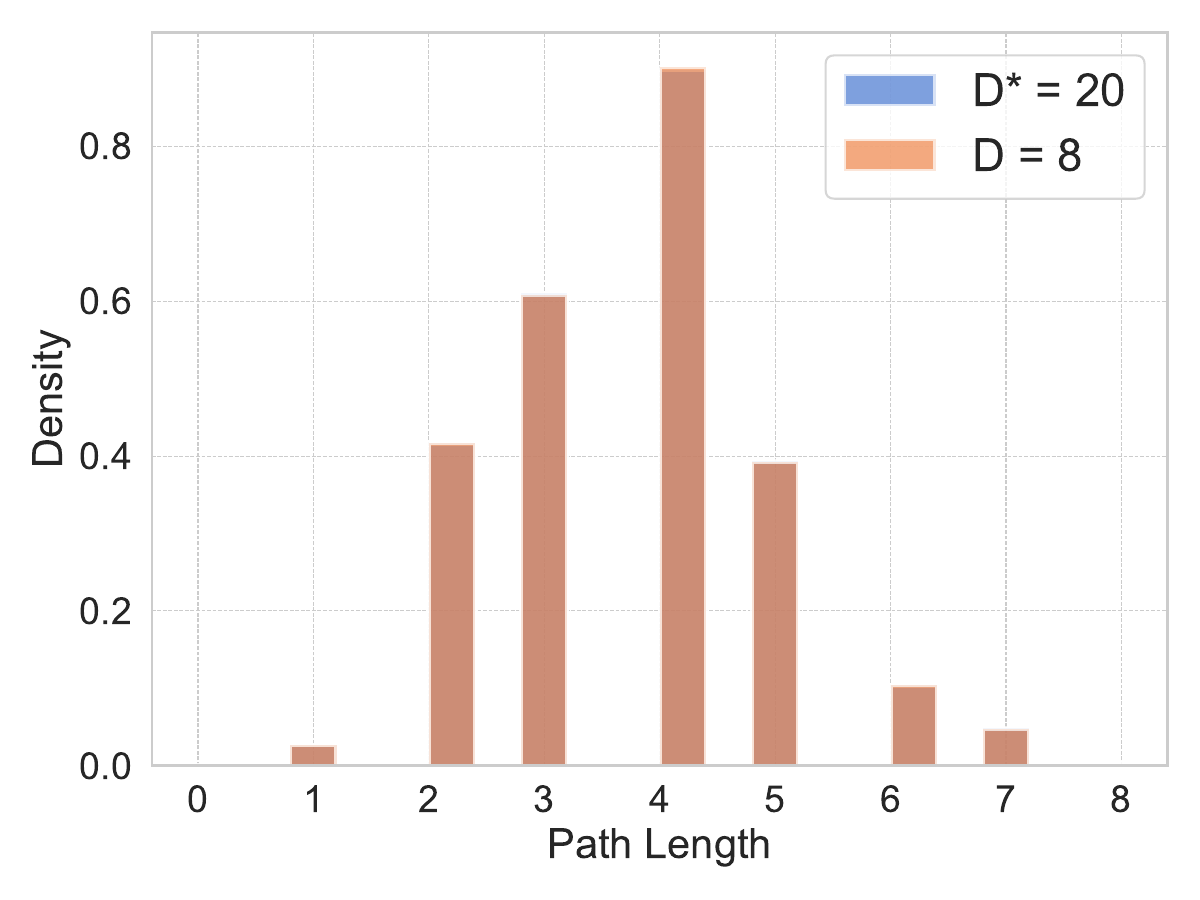}  
        \caption{Shortest Path Length Comparison $D^*=20$ vs $D=8$  }
    \end{subfigure}

        \hfill
    \centering
    \begin{subfigure}{0.32\textwidth}  
        \centering
        \includegraphics[width=\linewidth]{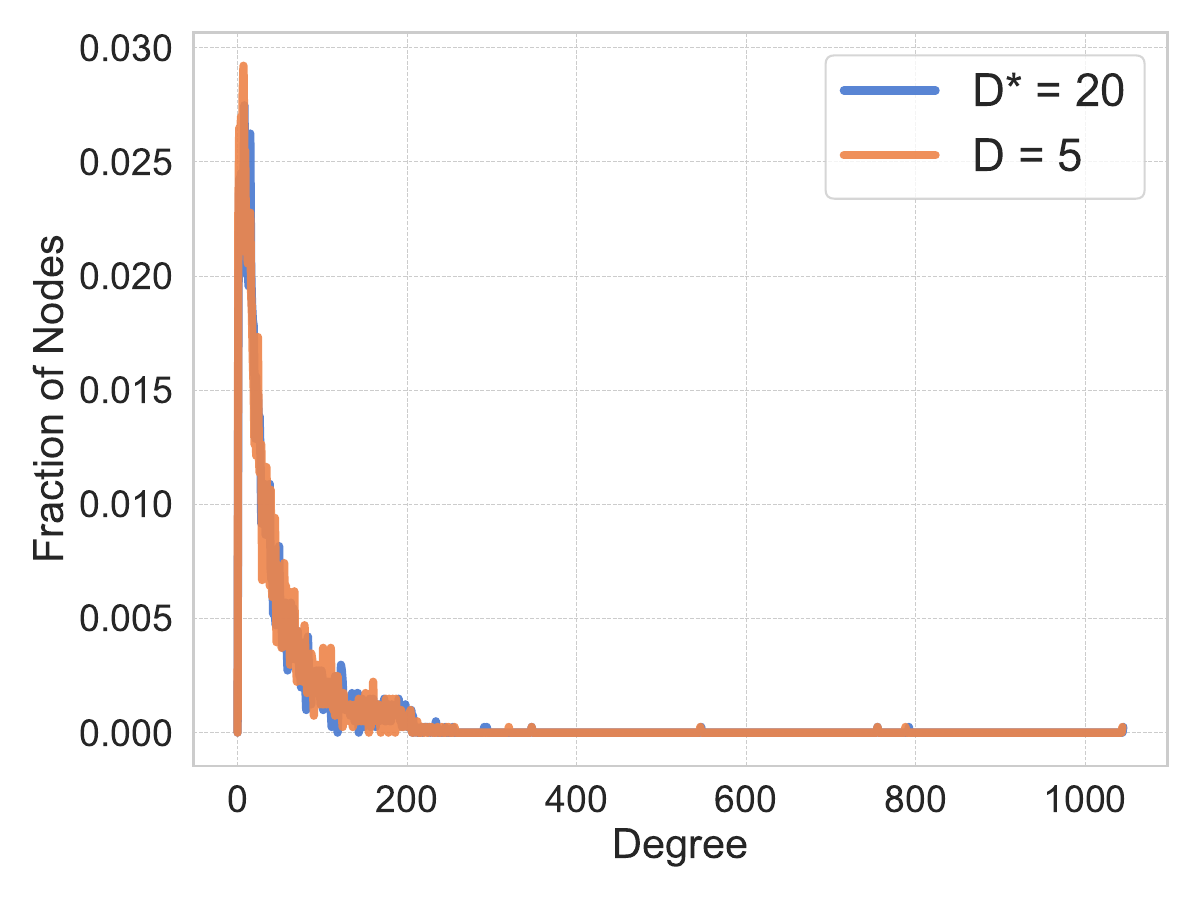}
        \caption{Degree Distribution Comparison $D^*=20$ vs $D=5$ }
    \end{subfigure}
    \hfill
    \begin{subfigure}{0.32\textwidth}  
        \centering
\includegraphics[width=\linewidth]{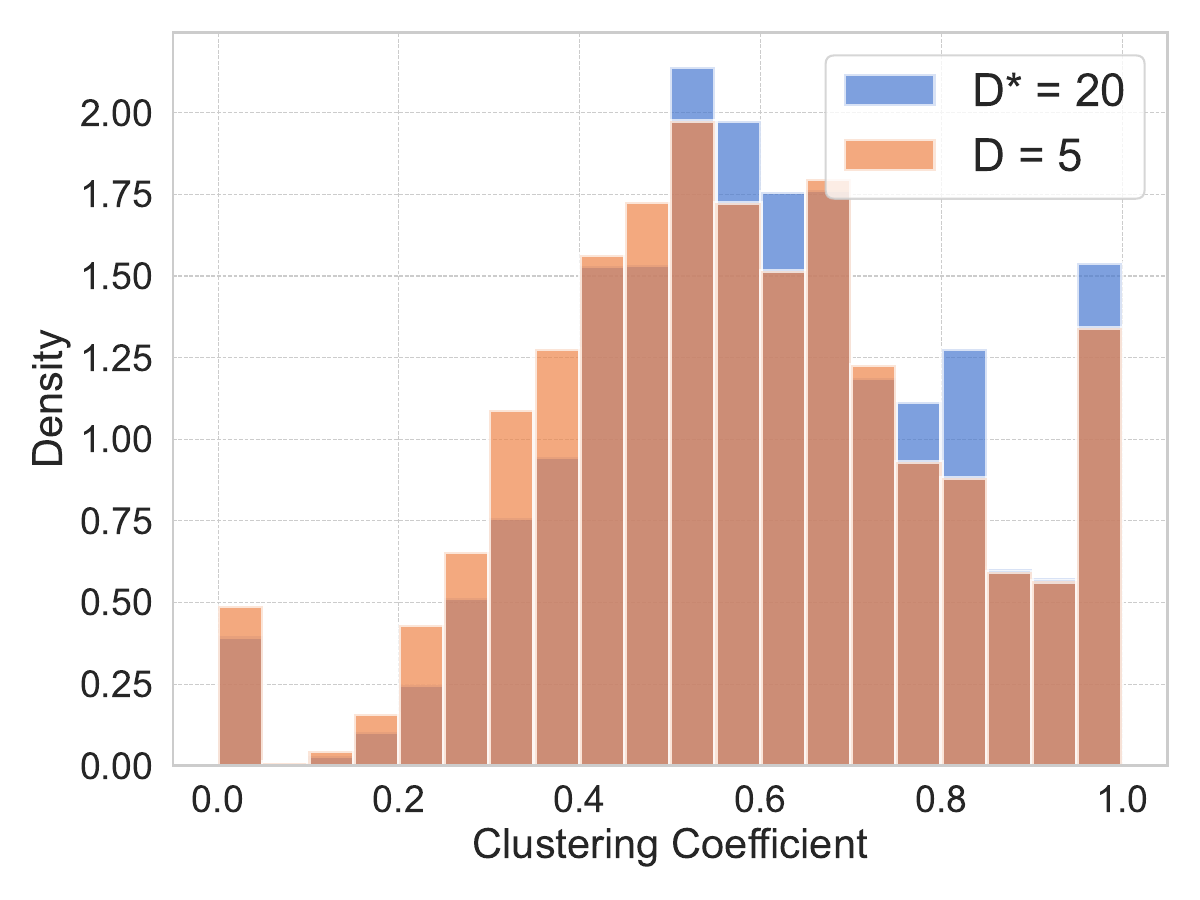}
        \caption{Clustering Coeff. Distribution Comparison $D^*=20$ vs $D=5$ }
    \end{subfigure}
    \hfill
    \begin{subfigure}{0.32\textwidth}  
        \centering
\includegraphics[width=\linewidth]{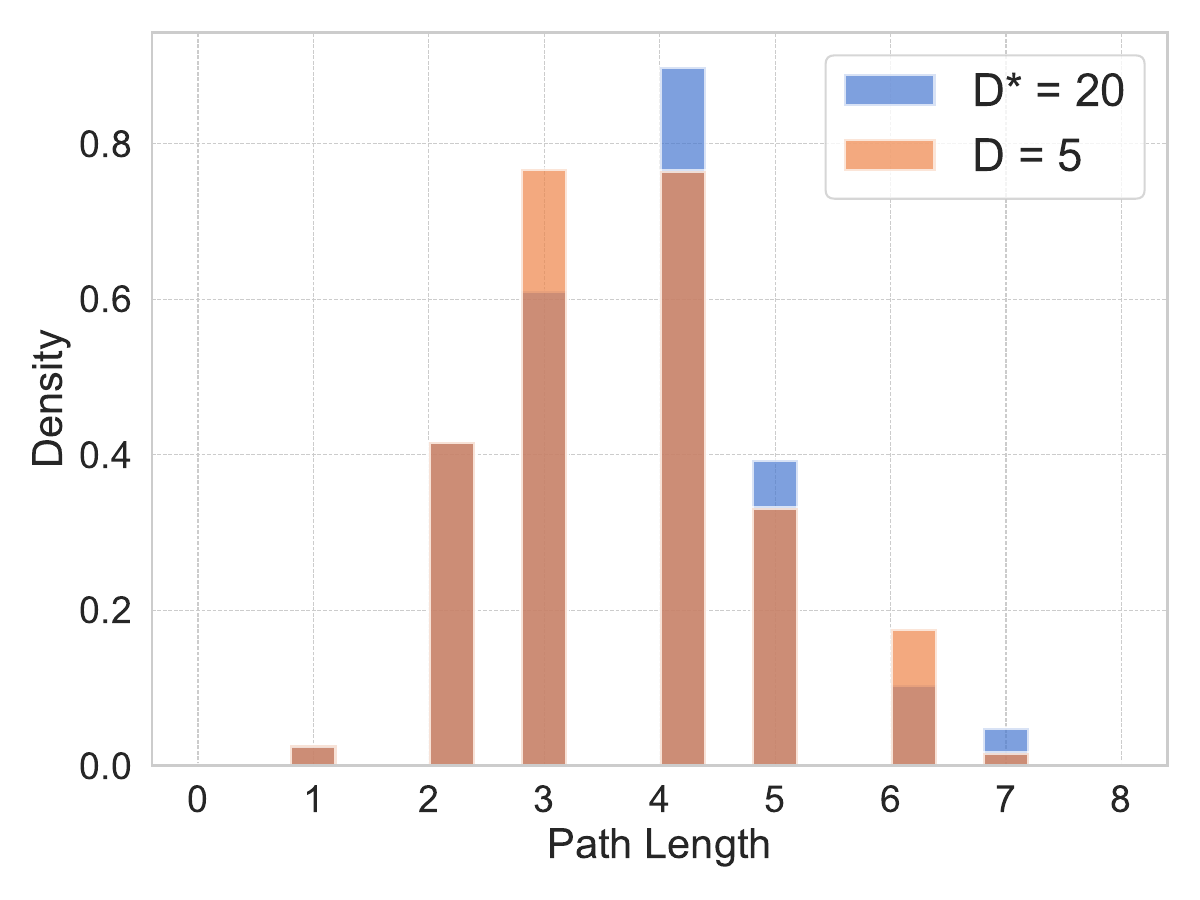}  
        \caption{Shortest Path Length Comparison $D^*=20$ vs $D=5$  }
    \end{subfigure}
        \hfill
    \centering
    \begin{subfigure}{0.32\textwidth}  
        \centering
        \includegraphics[width=\linewidth]{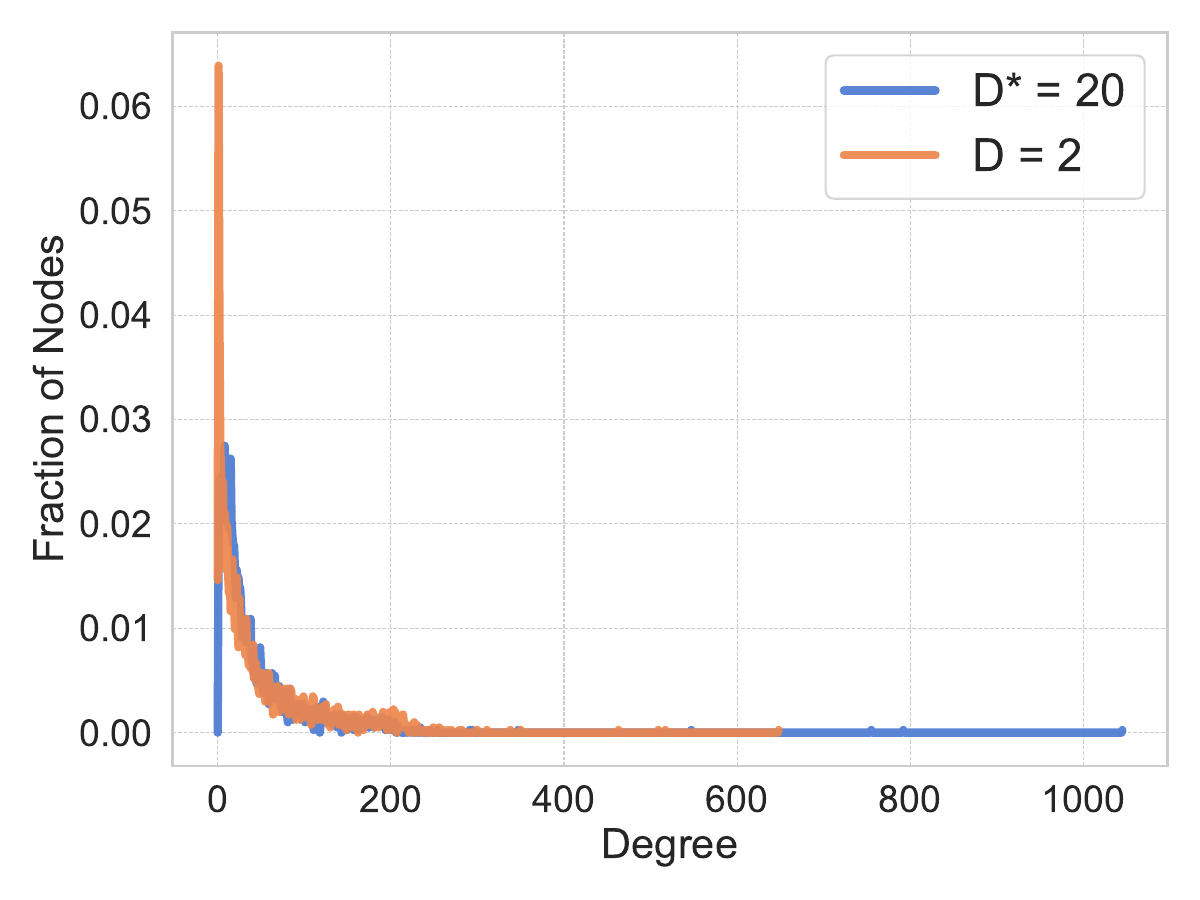}
        \caption{Degree Distribution Comparison $D^*=20$ vs $D=2$ }
    \end{subfigure}
    \hfill
    \begin{subfigure}{0.32\textwidth}  
        \centering
\includegraphics[width=\linewidth]{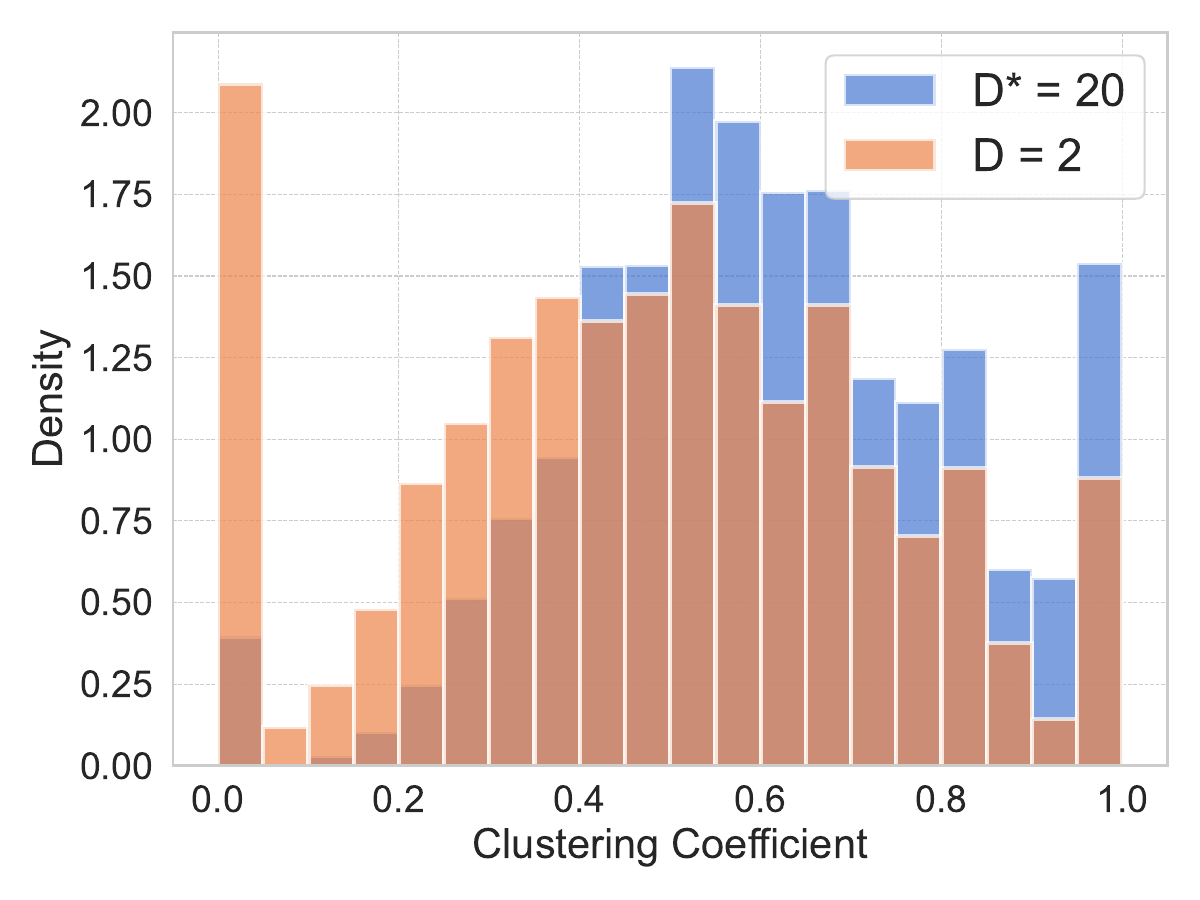}
        \caption{Clustering Coeff. Distribution Comparison $D^*=20$ vs $D=2$ }
    \end{subfigure}
    \hfill
    \begin{subfigure}{0.32\textwidth}  
        \centering
\includegraphics[width=\linewidth]{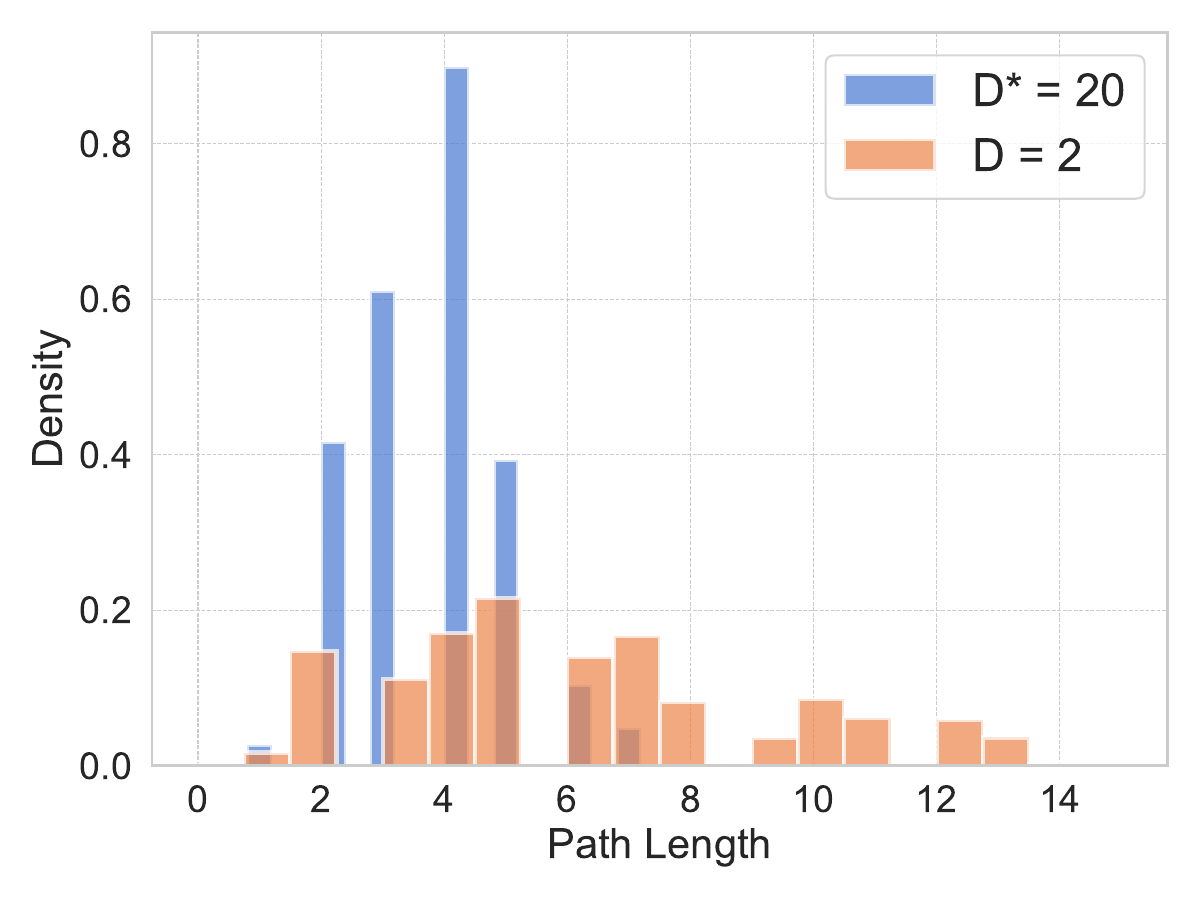}  
        \caption{Shortest Path Length Comparison $D^*=20$ vs $D=2$  }
    \end{subfigure}

\end{minipage}
\caption{Additional pairwise graph statistics comparison for the reconstructed graph as the latent dimension decreases from the exact embedding dimension $(D^*$) for the \text{Facebook} network. $(D^*)$ ensures perfect reconstruction. }
\label{fig:sup_graph_stats2}
\end{figure}

\end{document}